\newtheorem{theorem}{\protect\theoremname}
  \newtheorem{lem}[theorem]{\protect\lemmaname}
  \newtheorem{corollary}[theorem]{\protect\corrolaryname}
  \newtheorem{defn}[theorem]{\protect\definitionname}
  \newtheorem{remark}{Remark}
    \providecommand{\definitionname}{Definition}
  \providecommand{\examplename}{Example}
  \providecommand{\lemmaname}{Lemma}
  \providecommand{\corrolaryname}{Corollary}
  \providecommand{\propositionname}{Proposition}
  \providecommand{\conditionsname}{Conditions}
\providecommand{\theoremname}{Theorem}
\providecommand{\assumptionname}{Assumption}
\def \M{\mathcal{M}}
\def \S{\mathcal{S}}
\def \A{\mathcal{A}}
\def \R{\mathbb{R}}
\title{Identifiability and Generalizability from Multiple Experts in Inverse Reinforcement Learning}
\author{%
  Paul Rolland \\
  LIONS, EPFL\\Lausanne, Switzerland\\
  \texttt{paul.rolland@epfl.ch} \\
  \And
  Luca Viano \\
  LIONS, EPFL\\
  Lausanne, Switzerland\\
  \texttt{luca.viano@epfl.ch} \\
  \And
  Norman Schürhoff \\
  SFI, UNIL \\
  Lausanne, Switzerland\\
  \texttt{norman.schuerhoff@unil.ch} \\
  \And
  Boris Nikolov \\
  SFI, UNIL \\
  Lausanne, Switzerland\\
  \texttt{boris.nikolov@unil.ch}\\
  \And
  Volkan Cevher \\
  LIONS, EPFL \\
  Lausanne, Switzerland\\
  \texttt{volkan.cevher@epfl.ch}\\
}
\begin{document}

\maketitle

\begin{abstract}
  While Reinforcement Learning (RL) aims to train an agent from a reward function in a given environment, Inverse Reinforcement Learning (IRL) seeks to recover the reward function from observing an expert's behavior. It is well known that, in general, various reward functions can lead to the same optimal policy, and hence, IRL is ill-defined. However, \cite{cao2021identifiability} showed that, if we observe two or more experts with different discount factors or acting in different environments, the reward function can under certain conditions be identified up to a constant. This work starts by showing an equivalent identifiability statement from multiple experts in tabular MDPs based on a rank condition, which is easily verifiable and is shown to be also necessary. We then extend our result to various different scenarios, i.e., we characterize reward identifiability in the case where the reward function can be represented as a linear combination of given features, making it more interpretable, or when we have access to approximate transition matrices.
  Even when the reward is not identifiable, we provide conditions characterizing when data on multiple experts in a given environment
  allows to generalize and train an optimal agent in a new environment.
  Our theoretical results on reward identifiability and generalizability are validated in various numerical experiments.

  
\end{abstract}

\section{Introduction}

Engineering a reward function in Reinforcement Learning can be troublesome in certain scenarios like driving \cite{Knox:2021}, robotics \cite{osa2018algorithmic}, and economics/finance \cite{Charpentier:2020}. 
In economics and finance, the reward or objective/utility function of the agent are of fundamental importance but are not known a priori \cite{vonneumann1947, Pratt:1964, arrow1965aspects, Kahneman:Tversky:1979}.
In such cases, it may be easier to get demonstrations from an expert policy. Therefore, multiple algorithms have been developed to learn from demonstrations, e.g., in inverse reinforcement learning (IRL) and imitation learning (IL).

In IRL, the goal is to recover the reward function maximized by the agent, while in IL the expert demonstrations are used solely to learn a nearly optimal policy.
In economics/finance, inference on the reward function is the focus of a large literature on estimation, testing, and policy analysis of structural models \cite{Hansen:1982, Rust:1987, Hotz:Miller:1993}. However, the reward function is often highly parameterized and represented by a low-dimensional set of parameters, or the literature focuses on estimating reduced-form causal relationships but not the true reward function \cite{Heckman:1979, Angrist:Imbens:Rubin:1996}. 
The attractiveness of IRL relies on the fact that the reward function is the most ``succinct'' representation of a task \cite{Sutton:1998}. Indeed, identifying the reward function for each state-action pair allows generalizing the task to different transition dynamics and environments, which is not possible when using IL or highly parameterized structural models.

However, the IRL problem is unfortunately ill-posed since there always exist infinitely many reward functions for which the observed expert policy is optimal \cite{Russell:1998, Ng:2000}. The problem is known as reward shaping, and it is intuitively explained with the fact that, in the long term, the optimal policy is not affected by inflating the reward in the current period and decreasing the one in the next.
This difficulty originated a long debate on advantages and disadvantages of IL and IRL \cite{piot2013learning, piot2016bridging, Ho:2016, Fu:2018}.

When multiple experts are available, differing in the transition matrices of the environments they each act in, and/or their discount factors, IRL can in certain cases infer the true reward function, up to a constant \cite{Amin:2016, Amin:2017, Likmeta:2021, cao2021identifiability}. 
Inspired by \cite{cao2021identifiability}, we derive an equivalent necessary and sufficient condition on the expert environments, which is easily verifiable, ensuring that the true reward can be identified up to a constant shift. When this identifiability condition holds, the state-action dependent rewards can be recovered from expert demonstrations.
We then derive identifiability results in various alternative scenarios, e.g., when we only have access to approximate transition matrices and, alternatively, when the reward function is known to be a linear combination of given features \cite{Devidze:2021, Jarrett:2021}.

However, full reward identifiability remains a strong requirement, and we provide a negative result of non-identifiability from any number of experts, in the presence of exogenous variables in the MDP. Nonetheless, even when the identifiability condition does not hold, the recovered reward function could still be used to train an optimal expert for a different environment. To this end, we characterize situations where observing multiple experts in given environments allows to train an optimal agent in a new environment.


\section{Related work}

Since its introduction in \cite{Russell:1998,Ng:2000}, the IRL problem has been known to be ill-posed, since the observed expert policy can be optimal with respect to various reward functions. 
The set of reward transformations that preserve policy optimality are studied in \cite{Ng:1999, Ng:2000, cao2021identifiability, Skalse:2021, Gleave:2020}.
\cite{Mindermann:2018} studied the unidentifiability related to suboptimal experts.

In this paper, we assume access to the optimal entropy regularized policies of multiple experts.
Significant progress has been made to construct heuristics that select a single reward function from the set of IRL solutions (often called the feasible set), such as feature-based matching \cite{Abbeel:2004}, maximum margin IRL \cite{Ratliff:2006}, maximum causal entropy IRL \cite{Ziebart:2008,ziebart2010modeling}, maximum relative entropy IRL \cite{boularias2011relative}, Bayesian IRL \cite{balakrishnan2020efficient, Ramachandran:2007, Brown:2020d}, first-order optimality conditions \cite{Pirotta:2016, Ramponi:2020} or second-order optimality conditions \cite{Hwang:2022, Metelli:2017}. Popular IL algorithms implicitly select a feasible reward function via a convex reward regularizer \cite{Ho:2016, gangwani2020stateonly, torabi2018generative} or using preference/ranking based algorithms \cite{Brown:2019, Brown:2020c}. However, none of these approaches guarantee the identification of the true reward function.

The problem of identifiability in IRL has been investigated first in \cite{Amin:2016, Amin:2017} that study a setting where the learner can actively select optimal experts in multiple environments. The main result in \cite{Amin:2016, Amin:2017} is that interactively querying environments outputs a reward within a constant shift from the true one. The multiple experts setting has also been studied in \cite{Brown:2021} but in the context of value alignment verification where the aim is not to recover the reward function but rather verify that the value function of the agent is close to a target value.
IRL from multiple MDPs also appears in \cite{Likmeta:2021} where the authors consider the problem of learning a reward function compatible with a dataset of demonstrations collected by multiple experts.
In addition, \cite{kim2021reward} study structural conditions on the MDP for reward identification in the finite horizon setting and \cite{Dvijotham:2010} study identifiability in linearly solvable MDPs.

Our work is inspired by \cite{cao2021identifiability}. Our first identifiability result provides an equivalent statement as their \emph{value distinguishability} condition, but can be easily checked in practice, and allows to derive other identifiability results in alternative scenarios. 
Finally, the motivation for IRL is often predicting the expert behavior under new transitions dynamics \cite{abbeel2004apprenticeship, Levine:2011, Fu:2018}. We show that for this goal, it is not necessary to identify the exact reward, hence we give a condition on the observed experts' environments and the test environment under which an optimal expert can be trained in the test environment. This perspective has also been taken in \cite{Metelli:2021}. However, this work requires stronger assumptions on the transfer environment that we avoid in this paper, only requiring access to multiple experts.
Moreover, our work contributes to AI safety \cite{Everitt:2016, Amodei:2016, Leike:2018} alleviating the \emph{reward hacking} and \emph{side effects} problems \cite{Amodei:2016}. Indeed, by restricting the reward to linear combinations of a set of chosen features, we can provably recover an interpretable reward function inducing the optimal behavior, which is particularly desirable in medical applications \cite{srinivasan2020interpretable,jarrett2021inverse}.


An important consideration for IRL comes from \cite{Abel:2021} that formalizes the fact that there exist tasks that can not be induced by optimizing a reward function. In this work and in IRL in general, we bypass this difficulty assuming that the expert is optimizing a reward function. 

\subsection{Related works in the economics literature}
The economics/finance literature differentiates between axiomatic and revealed preference theory. In axiomatic preference theory, the reward function is posited or derived from basic axioms. In empirical and experimental work, however, simple reward function specifications are often rejected and agents have been shown to exhibit behavioral biases and/or non-standard preferences.

Differently, our work relates to the literature on revealed preference. Revealed preference theory, initiated by \cite{paul1938note, samuelson1948consumption}, provides an approach to analyze actions (e.g., consumer’s demand or investors' trading) by assuming that observed choices provide information about the underlying preferences, or reward function. Revealed preference theory is, hence, similar in spirit to IRL. But IRL has not widely been used in revealed preference analysis. We refer to \cite{demuynck2019samuelson, echenique2020new} for excellent reviews of recent advances in revealed preference theory.
The goal of revealed preference theory is to recover the agents’ preferences. This task is important because knowledge of the reward function is required to conduct counterfactual policy analysis. Notice that for this task, knowing only the policy function is insufficient. In financial applications, for instance, the impact of a Tobin tax can be assessed only knowing investors’ preferences for trading (see, e.g. \cite{tobin1978proposal}).
\section{Preliminaries}


A typical RL environment is characterised by a Markov Decision Process $\M = \{\S, \A, T, \gamma, r, P_0\}$, where $\S, \A$ are the sets of states and actions respectively, $T: \S \times \A \times \S \rightarrow [0,1]$ is the state transition probability, i.e., $T(s' | s,a)$ denotes the probability of arriving in state $s'$ when taking action $a$ in state $s$. $R: \S \times \A \rightarrow \R$ denotes the reward function, $\gamma$ the discount factor and $P_0$ is the initial state distribution. At each time step $t$, an agent observes the current state $s_t \in \S$ and takes an action $a_t \sim \pi(\cdot | s_t)$ where $\pi$ is the agent's policy which determines a distribution over all actions in $\A$ at every state. The agent gets a reward $r_t = r(s_t, a_t)$ and transitions to a new state $s_{t+1}$ sampled according to the transition probability $T$.

An agent acting optimally in $\M$ seeks to maximize its cumulative sum of rewards. In addition, we assume that the agent seeks to diversify its possible actions, and hence that it maximizes the following entropy regularized sum of discounted rewards:
\begin{equation}
\label{eq:1}
    V_\lambda^\pi(s) = \mathbb{E}_s^\pi\left[\sum_{t=0}^\infty (\gamma^t (r(s_t, a_t) + \lambda \mathcal{H}(\pi(\cdot | s_t))))\right],
\end{equation}

where $\mathbb{E}_s^\pi$ denotes the expectation over trajectories $\{(s_t,a_t\}_{t \geq 0}$ starting from state $s_0 = s$ and following policy $\pi$ and $\mathcal{H}(\pi) = -\sum_{a\in \A} \pi(a) \log \pi(a)$ is the entropy of $\pi$. The function $V_\lambda^\pi$ is called the (entropy regularized) value function of $\pi$.

In Inverse RL, the reward function $r$ is unknown, but we observe an agent acting optimally with respect to some reward function, and we wish to recover the reward function that the agent optimizes. We now recall some results from \cite{cao2021identifiability}.

\begin{theorem} \label{thm:single_expert}
For a fixed policy $\pi(a|s) > 0$, discount factor $\gamma \in [0, 1)$, and an arbitrary choice of function $v:\S \rightarrow \R$, there is a unique corresponding reward function 
\[
r(s,a) = \lambda \log \pi(a|s) - \gamma \sum_{s' \in \S} T(s'|s,a) v(s') + v(s)
\]
such that the MDP with reward $r$ yields an entropy-regularized optimal policy $\pi_\lambda^* = \pi$ and $V_\lambda^\pi = v$.
\end{theorem}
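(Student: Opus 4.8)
The plan is to work entirely through the soft Bellman optimality characterization of entropy-regularized RL. For a fixed reward $r$ and $\gamma < 1$, the optimal value function $V_\lambda^*$ is the unique fixed point of the soft Bellman operator
\[
(\mathcal{T}V)(s) = \lambda \log \sum_{a \in \A} \exp\!\left(\frac{r(s,a) + \gamma \sum_{s' \in \S} T(s'|s,a) V(s')}{\lambda}\right),
\]
and the induced optimal policy is the Boltzmann distribution $\pi_\lambda^*(a|s) = \exp\bigl((Q^*(s,a) - V_\lambda^*(s))/\lambda\bigr)$, where $Q^*(s,a) = r(s,a) + \gamma \sum_{s' \in \S} T(s'|s,a) V_\lambda^*(s')$. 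First I would recall (or re-derive) these two facts, since everything else reduces to an algebraic inversion. Uniqueness of the fixed point follows because $\mathcal{T}$ is a $\gamma$-contraction in the sup-norm, the log-sum-exp map being $1$-Lipschitz with respect to the sup-norm of its arguments.

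For existence, I would define $Q(s,a) := \lambda \log \pi(a|s) + v(s)$ and set $r(s,a) := Q(s,a) - \gamma \sum_{s' \in \S} T(s'|s,a) v(s')$, which is exactly the claimed formula. It then remains to verify that $v$ is a fixed point of $\mathcal{T}$ for this reward: substituting $Q$ into the log-sum-exp yields
\[
\lambda \log \sum_{a \in \A} \exp\!\left(\frac{\lambda \log \pi(a|s) + v(s)}{\lambda}\right) = v(s) + \lambda \log \sum_{a \in \A} \pi(a|s) = v(s),
\]
using $\sum_{a \in \A} \pi(a|s) = 1$. By uniqueness of the fixed point, $v = V_\lambda^*$, and the induced optimal policy is $\exp\bigl((Q(s,a) - v(s))/\lambda\bigr) = \pi(a|s)$, so $\pi$ is indeed optimal with value $v$.

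For uniqueness, I would run the same identities in reverse. If some reward $r'$ induces optimal policy $\pi$ and value $v$, then its optimal action-value $Q'$ must satisfy $\pi(a|s) = \exp\bigl((Q'(s,a) - v(s))/\lambda\bigr)$; inverting the softmax forces $Q'(s,a) = \lambda \log \pi(a|s) + v(s)$. The soft Bellman relation $r'(s,a) = Q'(s,a) - \gamma \sum_{s' \in \S} T(s'|s,a) v(s')$ then recovers precisely the stated formula, so $r' = r$.

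The main obstacle is the preliminary step: establishing cleanly, directly from the definition in \eqref{eq:1}, both the Boltzmann form of the entropy-regularized optimal policy and the log-sum-exp soft Bellman equation. This requires a variational argument, namely that maximizing the one-step objective $\sum_{a \in \A} \pi(a|s)\bigl(Q(s,a) - \lambda \log \pi(a|s)\bigr)$ over the simplex yields the Boltzmann distribution with value $V(s) = \lambda \log \sum_{a \in \A} \exp(Q(s,a)/\lambda)$, together with the contraction argument guaranteeing uniqueness. Once these standard facts are in place, existence and uniqueness are immediate algebraic inversions, and the positivity hypothesis $\pi(a|s) > 0$ is exactly what makes $\log \pi(a|s)$ well defined.
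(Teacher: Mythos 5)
Your proof is correct. Note, however, that this paper never proves Theorem~\ref{thm:single_expert} itself: it is recalled verbatim from \cite{cao2021identifiability} (the appendix proofs begin with Theorem~\ref{thm:identification}), so there is no in-paper argument to compare against. Your route --- inverting the softmax form of the entropy-regularized optimal policy, verifying that $v$ is a fixed point of the log-sum-exp soft Bellman operator (using $\sum_a \pi(a|s) = 1$), and invoking the $\gamma$-contraction for uniqueness --- is the standard argument and is essentially the one used in the cited reference; the only facts you defer (Boltzmann optimality and the soft Bellman equation via the variational characterization over the simplex) are exactly the right prerequisites, and the positivity hypothesis $\pi(a|s)>0$ is used precisely where you say it is.
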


By observing a single expert, it is hence possible to design a reward that yields any arbitrary value function, and there are hence $|\S|$ degrees of freedom remaining in the recovered reward function. An idea explored in~\cite{cao2021identifiability} is to assume that we observe two experts in two different MDPs with different transition dynamics and discount rates, but acting optimally with respect to the same reward function. The authors show that the reward can be identified up to a constant from observing the expert policies provided that the MDPs of the experts satisfy the following \textit{value-distinguishing} assumption.

\begin{defn} \label{def:distinguishability}
Consider a pair of Markov decision problems on the same state and action spaces, but with respective discount rates $\gamma_1, \gamma_2$ and transition probabilities $T^1, T^2$. We say that this pair is value-distinguishing if, for any function $v^1, v^2:\S \rightarrow \R$, the statement
\begin{equation} \label{eq:disting}
    v^1(s) - \gamma_1 \sum_{s'\in\S} T^1(s'|s,a) v^1(s') = v^2(s) - \gamma_2 \sum_{s'\in\S} T^2(s'|s,a) v^2(s') \ for \ all \ a\in\A, s\in\S
\end{equation}

implies at least one of $v^1$ and $v^2$ is a constant function.
\end{defn}

The way this assumption is stated makes it difficult to verify in practice, and the authors of ~\cite{cao2021identifiability} do not attempt to verify it in their experiments.

\section{Reward identification and generalization}

In this section, we present our main theoretical results on reward identifiability and generalizability. In the first part, we show an equivalent condition to Definition~\ref{def:distinguishability} for reward identification from two experts (Theorem~\ref{thm:identification}). The simplicity of our condition makes it easily verifiable and extendable to various scenarios, in particular to the cases where we observe more than two experts (Corollary~\ref{cor:2}), when the class of rewards is linearly parameterized with a set of given features (Theorem~\ref{thm:param}), or when we have access to approximated transition matrices (Theorem~\ref{thm:prob}). We also provide a negative result on reward non-identifiability in MDPs with exogenous variables, which are common in many real world scenarios. In the second part, we analyse reward generalizability. Here, we provide a condition guaranteeing that a reward compatible with two experts leads to an optimal policy in a third environment (Theorem~\ref{thm:generalization}). The proofs of the results are all postponed to Appendix~\ref{app:A}.

\subsection{Reward identifiability}

Consider two Markov decision problems on the same set of states and actions $\S$ and $\A$ respectively, but with different transition dynamics $T^1, T^2$ and discount factors $\gamma_1, \gamma_2$. Let $r \in \R^{|\S|\times |\A|}$ be the reward function common to the two environments, and let $v^1, v^2 \in \R^{|\S|}$ be the entropy regularized values functions associated expert policies $\pi^1$ and $\pi^2$ in each environment respectively. According to Theorem~\ref{thm:single_expert}, we have that $\forall (s,a)\in\S\times\A$,
\begin{align*}
    r(s,a) &= \lambda \log \pi^1(a|s) - \gamma_1 \sum_{s' \in \S} T^1(s'|s,a) v^1(s') + v^1(s) \\
    &= \lambda \log \pi^2(a|s) - \gamma_2 \sum_{s' \in \S} T^2(s'|s,a) v^2(s') + v^2(s).
\end{align*}

We hence deduce that $\forall a\in\A$,
\begin{equation}
    \begin{pmatrix}
    I - \gamma_1 T_a^1 & -(I - \gamma_2 T_a^2)
    \end{pmatrix}
    \begin{pmatrix}
    v^1 \\
    v^2
    \end{pmatrix}
    = 
    \lambda \log \pi^2(\cdot|a) - \lambda \log \pi^1(\cdot|a),
\end{equation}

where $\forall a\in\A$, $T_a^i \in \R^{\S\times\S}$ is the transition matrix for action $a$ and expert $i=1,2$, i.e., $T_a^i(s,s') = T^i(s'|s,a)$. By including all available actions to the experts, we can write
\begin{equation} \label{eq:A1}
    \begin{pmatrix}
I - \gamma_1 T_{a_1}^1 & -(I - \gamma_2 T_{a_1}^2) \\
\vdots & \vdots \\
I - \gamma_1 T_{a_{|\mathcal{A}}|}^1 & -(I - \gamma_2 T_{a_{|\mathcal{A}|}}^2)
\end{pmatrix}
\begin{pmatrix}
    v^1 \\
    v^2
    \end{pmatrix}
    =
    \begin{pmatrix}
    \lambda \log \pi^2(\cdot|a_1) - \lambda \log \pi^1(\cdot|a_1) \\
    \vdots \\
    \lambda \log \pi^2(\cdot|a_{|\A|}) - \lambda \log \pi^1(\cdot|a_{|\A|})
    \end{pmatrix}.
\end{equation}

In order to identify a unique reward function, we need to identify a unique associated value function. We hence want the linear system~\eqref{eq:A1} to yield a unique solution, i.e., the $|\A||\S|\times 2|\S|$ matrix on the left hand side to be full rank, i.e., to have rank $2|\S|$. However, it is well known that, for any MDP, adding a constant to the reward would not change the associated optimal policy. Hence, there is an intrinsic degree of freedom in reward identifiability which is impossible to get rid of from only observing expert policies. In order to identify the reward up to a constant, we need this degree of freedom to be the only one in the linear system~\eqref{eq:A1}, i.e., the associated matrix to have rank $2|\S|-1$. This result is summarized in the following theorem, and its complete proof can be found in Appendix~\ref{app:A1}. 

\begin{theorem}
\label{thm:identification}
Consider two Markov decision problems on the same set of states and actions, but with different transition dynamics $T_1, T_2$ and discount factors $\gamma_1, \gamma_2$. Suppose that we observe two experts acting each in one of these environments, optimally with respect to the same reward function, in the sense that their policies maximize the entropy regularized reward in their respective environments. Then, the reward function can be recovered up to the addition of a constant if and only if
\begin{equation}\label{eq:7.1}
\text{rank}
\begin{pmatrix}
I - \gamma_1 T_{a_1}^1 & I - \gamma_2 T_{a_1}^2 \\
\vdots & \vdots \\
I - \gamma_1 T_{a_{|\mathcal{A}}|}^1 & I - \gamma_2 T_{a_{|\mathcal{A}|}}^2
\end{pmatrix}
= 2|\mathcal{S}| - 1.
\end{equation}
\end{theorem}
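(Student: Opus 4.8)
The plan is to convert the statement ``the reward is identifiable up to a constant'' into a linear-algebraic statement about the matrix in~\eqref{eq:7.1}, exploiting the setup already developed before the theorem. By Theorem~\ref{thm:single_expert}, any reward $r$ consistent with expert $i$ is in bijection with its value function $v^i$, so recovering $r$ (up to a constant) is equivalent to recovering the pair $(v^1,v^2)$ up to the appropriate freedom. The derivation leading to~\eqref{eq:A1} shows that consistency of a single reward with both experts is exactly the statement that $(v^1,v^2)$ solves the linear system~\eqref{eq:A1}, whose coefficient matrix is
\[
M = \begin{pmatrix}
I - \gamma_1 T_{a_1}^1 & -(I - \gamma_2 T_{a_1}^2) \\
\vdots & \vdots \\
I - \gamma_1 T_{a_{|\mathcal{A}|}}^1 & -(I - \gamma_2 T_{a_{|\mathcal{A}|}}^2)
\end{pmatrix}.
\]
I would first note that the matrix appearing in~\eqref{eq:7.1} differs from $M$ only by the sign of its second block column, which is a reversible column operation and hence leaves the rank unchanged; so it suffices to reason about $\mathrm{rank}(M)$.

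Next I would translate identifiability into a uniqueness statement about the solution set of~\eqref{eq:A1}. Two reward functions recovered from the two experts coincide (as functions on $\S\times\A$) iff the corresponding value pairs $(v^1,v^2)$ differ by an element of $\ker M$: indeed, the reward is an affine function of $(v^1,v^2)$ whose linear part is precisely $M$, so the feasible rewards are in bijection with the coset of solutions, and the ambiguity in the recovered reward is exactly $\ker M$ pushed through the reward formula of Theorem~\ref{thm:single_expert}. The reward is then identifiable up to an additive constant iff every element of $\ker M$ produces a reward that is constant across all $(s,a)$. The key computation is therefore to identify $\ker M$ and determine which kernel vectors yield constant reward perturbations.

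The crucial observation, which I expect to be the heart of the argument, is that the constant degree of freedom is always present: taking $v^1 = v^2 = c\mathbf{1}$ for any scalar $c$ gives, block by block, $(I-\gamma_i T_{a}^i)(c\mathbf 1) = c(1-\gamma_i)\mathbf 1$ since each $T_a^i$ is row-stochastic, and the two blocks cancel in $M$ up to the sign already handled. I would verify that this choice corresponds exactly to shifting the reward by a constant, so the vector $(\mathbf 1,\mathbf 1)^\top$ (suitably scaled) always lies in $\ker M$ and always induces a constant reward shift; hence $\dim\ker M \ge 1$ and $\mathrm{rank}(M)\le 2|\S|-1$ always. The theorem then reduces to showing: the reward is recoverable up to a constant iff this is the \emph{only} source of ambiguity, i.e.\ iff $\ker M$ is one-dimensional and spanned by $(\mathbf 1,\mathbf 1)^\top$, which is equivalent to $\mathrm{rank}(M)=2|\S|-1$.

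For the two directions I would argue as follows. If $\mathrm{rank}(M)=2|\S|-1$, then $\ker M$ is exactly the span of $(\mathbf 1,\mathbf 1)^\top$, so all feasible value pairs differ by a multiple of $(\mathbf 1,\mathbf 1)^\top$, which by the previous step only shifts the reward by a constant; hence the reward is identified up to a constant. Conversely, if $\mathrm{rank}(M)<2|\S|-1$, then $\dim\ker M\ge 2$, so there exists a kernel vector not proportional to $(\mathbf 1,\mathbf 1)^\top$; I would need to confirm that such a vector induces a \emph{non-constant} reward perturbation, giving a genuinely different feasible reward and breaking identifiability. The main obstacle is precisely this last implication: showing that every kernel direction transverse to $(\mathbf 1,\mathbf 1)^\top$ yields a non-constant change in $r$. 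I would handle it by examining the reward-difference formula: a perturbation $(\delta^1,\delta^2)\in\ker M$ changes the recovered reward by a term of the form $\delta^i(s) - \gamma_i\sum_{s'}T^i_a(s,s')\delta^i(s')$, and constancy of this across all $(s,a)$ forces, via the block structure of $M$ and row-stochasticity, that $\delta^1$ and $\delta^2$ be constant and equal up to the already-accounted constant, contradicting transversality. This is also where the equivalence with the value-distinguishing condition of Definition~\ref{def:distinguishability} becomes transparent, since that condition is exactly the assertion that no non-trivial non-constant $(v^1,v^2)$ can satisfy the homogeneous version of~\eqref{eq:disting}.
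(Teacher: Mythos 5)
Your overall strategy---reduce identifiability to a statement about $\ker M$, exhibit a constant-shift direction that always lies in the kernel, and show that $\mathrm{rank}(M)=2|\mathcal{S}|-1$ is equivalent to that direction being the entire kernel---is exactly the strategy of the paper's proof. However, your central computation is wrong: the vector $(c\mathbf{1}, c\mathbf{1})^\top$ is \emph{not} in $\ker M$ when $\gamma_1\neq\gamma_2$. Block by block, $(I-\gamma_1 T_a^1)(c\mathbf{1}) = c(1-\gamma_1)\mathbf{1}$ while $(I-\gamma_2 T_a^2)(c\mathbf{1}) = c(1-\gamma_2)\mathbf{1}$, so each block row of $M$ applied to $(c\mathbf{1}, c\mathbf{1})^\top$ gives $c(\gamma_2-\gamma_1)\mathbf{1}\neq\mathbf{0}$: the two blocks do not cancel. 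No single scalar rescaling of $(\mathbf{1},\mathbf{1})^\top$ repairs this; the two halves need \emph{different} scalings. The correct constant-shift direction (for your sign convention on $M$) is
\[
\begin{pmatrix} \tfrac{1}{1-\gamma_1}\mathbf{1} \\[2pt] \tfrac{1}{1-\gamma_2}\mathbf{1} \end{pmatrix},
\]
reflecting that adding a constant $c$ to the reward shifts the value function in environment $i$ by $c/(1-\gamma_i)$, a quantity that differs across environments when the discount factors differ. The same error propagates into your necessity argument: a kernel perturbation $(\delta^1,\delta^2)$ induces a constant reward change $c$ iff $(I-\gamma_1 T_a^1)\delta^1 = c\mathbf{1}$ for all $a$, which by invertibility of $I-\gamma_1 T_a^1$ forces $\delta^1 = \tfrac{c}{1-\gamma_1}\mathbf{1}$ and $\delta^2 = \tfrac{c}{1-\gamma_2}\mathbf{1}$---constant but \emph{not} equal---contrary to your claim that constancy of the reward perturbation forces $\delta^1$ and $\delta^2$ to be ``constant and equal.''

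Once the vector is corrected, your argument goes through and coincides with the paper's: the constant-shift perturbations form a one-dimensional subspace always contained in $\ker M$ (so $\mathrm{rank}(M)\le 2|\mathcal{S}|-1$ unconditionally); the reward is identifiable up to a constant iff $\ker M$ equals exactly that line, i.e.\ iff $\mathrm{rank}(M)=2|\mathcal{S}|-1$; and when the rank is smaller, any kernel vector off that line pushes forward through $\delta^1\mapsto (I-\gamma_1 T_a^1)\delta^1$ to a non-constant reward perturbation, which is precisely the paper's necessity step (the paper phrases it as: the only kernel vectors of the form $(c_1\mathbf{1}, c_2\mathbf{1})^\top$ are multiples of the corrected vector above).
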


This condition turns out to be equivalent to Definition~\ref{def:distinguishability}, as shown at the end of Appendix~\ref{app:A1}, but is stated in a way that is easier to check in practice and allows us to further characterize identifiability in various scenarios. First of all, this result naturally extends to the case where we observe any number of experts. We provide hereafter the result in the case of three experts.

\begin{corollary}\label{cor:3experts}
Consider three Markov decision problems on the same set of states and actions, but with different transition dynamics $T_1, T_2, T_3$ and discount factors $\gamma_1, \gamma_2, \gamma_3$. Suppose that we observe three experts acting each in one of these environments, optimally with respect to the same reward function. Then, the reward function can be recovered up to the addition of a constant if and only if
\begin{equation}
\text{rank}
\begin{pmatrix}
I - \gamma_1 T_{a_1}^1 & I - \gamma_2 T_{a_1}^2 & \textbf{0} \\
\vdots & \vdots & \vdots \\
I - \gamma_1 T_{a_{|\mathcal{A}|}}^1 & I - \gamma_2 T_{a_{|\mathcal{A}|}}^2 & \textbf{0} \\
I - \gamma_1 T_{a_1}^1 & \textbf{0} & I - \gamma_3 T_{a_1}^3 \\
\vdots & \vdots & \vdots \\
I - \gamma_1 T_{a_{|\mathcal{A}|}}^1 & \textbf{0} & I - \gamma_3 T_{a_{|\mathcal{A}|}}^3
\end{pmatrix}
= 3|\mathcal{S}| - 1.
\end{equation}
\end{corollary}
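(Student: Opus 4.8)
The plan is to reduce the statement to a computation of the kernel of the displayed matrix $M$, following the same logic that underlies the two-expert Theorem~\ref{thm:identification}. First I would record the consistency constraints coming from Theorem~\ref{thm:single_expert}: a reward $r$ that is simultaneously compatible with all three experts satisfies, for each $(s,a)$, the chain of equalities $\lambda\log\pi^i(a|s) - \gamma_i (T_a^i v^i)(s) + v^i(s)$ being equal across $i=1,2,3$. Subtracting the expert-$1$ expression from those of experts $2$ and $3$ and stacking over all actions yields, for every $a\in\A$,
\[
(I - \gamma_1 T_a^1) v^1 - (I - \gamma_2 T_a^2) v^2 = \lambda\bigl(\log\pi^2(\cdot|a) - \log\pi^1(\cdot|a)\bigr),
\]
and likewise with $3$ in place of $2$, whose coefficient matrix is exactly $M$ up to negating the second and third block-columns (a rank-preserving operation). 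Pairing expert $1$ with experts $2$ and $3$ already encodes all constraints, since the expert-$2$/expert-$3$ equality follows by transitivity; this is why the star pattern centred on expert $1$ is complete.

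Next I would establish a bijection between rewards compatible with the three experts and solutions of this linear system. Because $\gamma_i\in[0,1)$ and each $T_a^i$ is row-stochastic, $I-\gamma_i T_a^i$ has spectral radius bounded by $\gamma_i<1$ and is invertible; hence $(I-\gamma_i T_a^i)v^i = r(\cdot,a) - \lambda\log\pi^i(\cdot|a)$ determines $v^i$ uniquely from $r$, while any solution tuple $(v^1,v^2,v^3)$ reconstructs, through the expert-$1$ representation, a single reward automatically consistent across all experts. Recovering $r$ up to a constant is therefore equivalent to determining $(v^1,v^2,v^3)$ up to the ambiguity that corresponds to a constant shift of $r$.

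The crux is to pin down that ambiguity. I would verify that adding a constant $c$ to $r$ corresponds exactly to the shift $v^i \mapsto v^i + \tfrac{c}{1-\gamma_i}\mathbf{1}$: since $T_a^i\mathbf{1}=\mathbf{1}$ we have $(I-\gamma_i T_a^i)\mathbf{1}=(1-\gamma_i)\mathbf{1}$, so this per-expert shift injects precisely the offset $c\mathbf{1}$ into each reward equation and cancels in the differences defining $M$. Thus the vector $k=\bigl(\tfrac{1}{1-\gamma_1}\mathbf{1},\tfrac{1}{1-\gamma_2}\mathbf{1},\tfrac{1}{1-\gamma_3}\mathbf{1}\bigr)$ always lies in $\ker M$, and by invertibility of the blocks $I-\gamma_i T_a^i$ this constant-shift family is the \emph{entire} set of value-tuple differences producing a constant reward shift. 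Consequently $\mathrm{rank}(M)\le 3|\S|-1$ in all cases, and the reward is identifiable up to a constant if and only if $\ker M=\mathrm{span}(k)$, i.e.\ $\dim\ker M=1$, i.e.\ $\mathrm{rank}(M)=3|\S|-1$.

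I expect the only genuinely non-routine step to be identifying the constant-shift kernel vector with its action-independent but expert-dependent scaling $\tfrac{1}{1-\gamma_i}$, and arguing that this one line is the full kernel ambiguity rather than a strictly larger subspace. Everything else is a direct transcription of the two-expert argument: the invertibility of $I-\gamma_i T_a^i$, the transitivity making the star pairing complete, and the passage between rank and kernel dimension. One small point worth stating explicitly is that the inhomogeneous system is always consistent because the true reward furnishes a solution, so its solution set is a nonempty affine translate of $\ker M$.
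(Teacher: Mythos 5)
Your proposal is correct and follows essentially the same route as the paper: reduce identifiability to the kernel of the stacked matrix, exhibit the constant-shift vector $\bigl(\tfrac{1}{1-\gamma_1}\mathbf{1},\tfrac{1}{1-\gamma_2}\mathbf{1},\tfrac{1}{1-\gamma_3}\mathbf{1}\bigr)$ as an always-present kernel element, and equate recovery up to a constant with this vector spanning the whole kernel, i.e.\ $\mathrm{rank}(M)=3|\mathcal{S}|-1$ (the paper's proof is just a terser version of this, deferring to the two-expert argument). One wording slip worth fixing: it is $\gamma_i T_a^i$ that has spectral radius at most $\gamma_i<1$, which is what makes $I-\gamma_i T_a^i$ invertible; the matrix $I-\gamma_i T_a^i$ itself does not have spectral radius bounded by $\gamma_i$.
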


An interesting scenario is the one where the two experts act in the same environment, and only the discount rate is varied.

\begin{corollary}\label{cor:2}
Consider two Markov decision problems on the same set of states and actions, with the same transition matrix $T$ and reward function but different discount factors $\gamma_1 \neq \gamma_2$. Then, the reward function is identifiable up to a constant by observing two experts in $(T, \gamma_1), (T, \gamma_2)$ iff
\begin{equation} \label{eq:7.1.2}
    \text{rank}
    \begin{pmatrix}
    T_{a_1} - T_{a_2} \\
    \vdots \\
    T_{a_1} - T_{a_{|\mathcal{A}|}}
    \end{pmatrix}
    = |\mathcal{S}| - 1.
\end{equation}
\end{corollary}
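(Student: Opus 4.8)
The plan is to reduce the claim directly to the rank condition of Theorem~\ref{thm:identification}. Since both experts share the transition matrix $T$, we have $T_a^1 = T_a^2 = T_a$ for every $a\in\A$, so the $|\A||\S|\times 2|\S|$ matrix appearing in~\eqref{eq:7.1} specializes to
\[
M = \begin{pmatrix} I - \gamma_1 T_{a_1} & I - \gamma_2 T_{a_1} \\ \vdots & \vdots \\ I - \gamma_1 T_{a_{|\A|}} & I - \gamma_2 T_{a_{|\A|}} \end{pmatrix},
\]
and by Theorem~\ref{thm:identification} identifiability up to a constant is equivalent to $\mathrm{rank}\,M = 2|\S|-1$, i.e.\ to $\dim\ker M = 1$. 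I would therefore compute $\ker M$ and relate it to the kernel of the difference matrix $D$ whose blocks are $T_{a_1}-T_{a_j}$, $j=2,\dots,|\A|$.

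First I would write out the kernel equations: $(v^1,v^2)\in\ker M$ holds iff $(I-\gamma_1 T_a)v^1 + (I-\gamma_2 T_a)v^2 = 0$ for all $a$, which rearranges to $v^1+v^2 = T_a(\gamma_1 v^1 + \gamma_2 v^2)$ for all $a$. The key step is the linear change of variables $p = v^1 + v^2$, $q = \gamma_1 v^1 + \gamma_2 v^2$: because $\gamma_1\neq\gamma_2$, the associated block map $\bigl(\begin{smallmatrix} I & I \\ \gamma_1 I & \gamma_2 I \end{smallmatrix}\bigr)$ is invertible, so $(v^1,v^2)\mapsto(p,q)$ is a bijection of $\R^{2|\S|}$ onto itself. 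In these coordinates the kernel conditions become simply $p = T_a q$ for every $a$.

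Next I would observe that the system $\{p = T_a q\}_a$ forces $T_{a_1}q = \cdots = T_{a_{|\A|}}q$, i.e.\ $Dq = 0$, after which $p=T_{a_1}q$ is determined by $q$; conversely every $q\in\ker D$ yields a unique admissible $p$. Hence $q\mapsto(T_{a_1}q,\,q)$ is a linear isomorphism from $\ker D$ onto $\ker M$ (read in the $(p,q)$ coordinates, then transported back by the invertible change of variables), so $\dim\ker M = \dim\ker D$. Converting kernels to ranks via rank--nullity, $\mathrm{rank}\,M = 2|\S| - \dim\ker M$ and $\mathrm{rank}\,D = |\S| - \dim\ker D$, then gives $\mathrm{rank}\,M = 2|\S|-1$ iff $\mathrm{rank}\,D = |\S|-1$, which is exactly~\eqref{eq:7.1.2}.

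I expect no serious obstacle; the only points requiring care are verifying that the change of variables is genuinely invertible (this is precisely where $\gamma_1\neq\gamma_2$ enters) and tracking the dimension count correctly. As a consistency check I would note that $\mathbf{1}\in\ker D$ always, since each $T_a$ is row-stochastic and hence $T_a\mathbf{1}=\mathbf{1}$; this corresponds to the unavoidable constant-shift degree of freedom, so $\dim\ker D\ge 1$ in all cases and the condition $\mathrm{rank}\,D=|\S|-1$ asserts exactly that this is the \emph{only} remaining freedom.
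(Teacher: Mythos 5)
Your proposal is correct and follows essentially the same route as the paper's proof: both reduce membership in the kernel of the identifiability matrix to the condition that the combination $\gamma_1 v^1 \pm \gamma_2 v^2$ lies in the kernel of the difference matrix $D$, using $\gamma_1 \neq \gamma_2$ to recover $(v^1, v^2)$ from that combination (the signs differ only because the paper writes the second block as $-(I-\gamma_2 T_a)$). Your packaging via the invertible change of variables $(p,q)=(v^1+v^2,\ \gamma_1 v^1+\gamma_2 v^2)$ and the resulting isomorphism between $\ker D$ and the kernel of the identifiability matrix is a slightly tidier way to obtain both directions of the equivalence at once, whereas the paper solves explicitly for $v^1, v^2$ in the sufficiency direction and treats necessity more tersely, but the underlying argument is the same.
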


\begin{remark}
Interestingly, condition~\eqref{eq:7.1.2} is equivalent to the condition for identification of a action-independent reward from a single expert, assuming such a reward exists (\cite{cao2021identifiability}, Corollary 3).
\end{remark}

Next, we provide a negative result concerning MDPs with exogenous variables, i.e., a variable whose dynamics are independent of the agent's action. This MDP class is common in economics/finance and has been studied in many real world scenarios including inventory control problems \cite{Joshi:2013}, variable weather conditions and customer demands \cite{Dietterich:2018}, wildfire management \cite{McGregor:2017}, and stock market fluctuations \cite{Liu:2021}. We also provide examples involving such variables in the experimental section.

\begin{corollary}\label{cor:exo}
Suppose that the state space is constructed as a set of variables each taking a finite number of values, i.e., $\S = \{s\in \R^d: s_i\in \S_i\}$. The transition matrices for each action $a$ can be defined by specifying the evolution of each state variable $s_i^{t+1}$ depending on $(s^t, a)$. Suppose that there exists a state variable whose evolution only depends on its previous value, but neither on the other state variables nor the action taken: such a variable is called an \textbf{exogenous} variable. Note that this variable can still affect the evolution of all other variables, and its evolution can vary across the environment of the observed experts. Then, the reward function is \textbf{not} identifiable (even up to a constant) using any number of experts.
\end{corollary}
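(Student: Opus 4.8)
The plan is to prove non-identifiability by exhibiting an explicit nonconstant value function perturbation that lies in the kernel of the matrix in \eqref{eq:7.1}, thereby violating the rank condition of Theorem~\ref{thm:identification}. By Theorem~\ref{thm:identification}, the reward is identifiable up to a constant precisely when the stacked matrix has rank $2|\S|-1$; to show non-identifiability it suffices to produce a pair $(v^1,v^2)$, not both constant and not differing by the trivial constant-shift direction, satisfying the homogeneous version of \eqref{eq:disting}, namely $v^i(s) - \gamma_i \sum_{s'} T^i(s'|s,a)v^i(s') = 0$ for one $i$ while the other is zero, or more generally a nontrivial joint kernel element. Concretely, I would first set up notation: write the exogenous variable as coordinate $s_e$ with value set $\S_e$, so each state decomposes as $s=(s_e, s_{-e})$, and the assumption gives a transition kernel for the exogenous coordinate $T_e(s_e' \mid s_e)$ that is independent of both the action and the remaining coordinates.

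First I would construct the candidate value perturbation as a function depending \emph{only} on the exogenous coordinate, i.e.\ $v(s) = f(s_e)$ for a function $f:\S_e\to\R$ to be chosen. The key structural observation is that for such a $v$, the quantity $\sum_{s'} T^i(s'|s,a) v(s') = \sum_{s_e'} T_e(s_e'|s_e) f(s_e')$ depends on neither $a$ nor $s_{-e}$, and moreover is the \emph{same} expression across the two environments up to the discount factor and the (possibly environment-specific) exogenous kernel. I would then reduce the condition $v(s) - \gamma_i \sum_{s'} T^i(s'|s,a) v(s') = c$ (a constant, absorbing the shift degree of freedom) to a system \emph{purely on $\S_e$}: namely $f(s_e) - \gamma_i \sum_{s_e'} T_e^i(s_e'|s_e) f(s_e') = c$, which is an equation on the $|\S_e|$-dimensional exogenous subspace rather than on all of $\S$. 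The main point is that the matrix $(I - \gamma_i T_e^i)$ restricted to the exogenous variable is invertible (since $\gamma_i<1$ and $T_e^i$ is stochastic, so $I-\gamma_i T_e^i$ has spectral radius bounded away from singularity), so for each $i$ there is a \emph{unique} nonconstant $f_i$ solving the equation for any given $c$; but we need a \emph{single} $v^1,v^2$ making the joint equation homogeneous, so I would instead argue directly at the level of the kernel.

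The cleanest route is to count dimensions in the kernel. The homogeneous system underlying \eqref{eq:A1} asks for $(v^1,v^2)$ with $(I-\gamma_1 T_a^1)v^1 = (I-\gamma_2 T_a^2)v^2$ for all $a$. I would exhibit a kernel element that is \emph{not} proportional to the all-ones shift direction $(\mathbf{1},\mathbf{1})$: take $v^1 = v^2 = v$ where $v(s)=f(s_e)$ is chosen so that the common-in-$a$ backup $\sum_{s'} T^i(s'|s,a)v(s')$ matches across environments. Because the exogenous backup is action-free, the per-action constraints collapse, and the remaining constraint is a finite linear system on $f$ supported on $\S_e$; since $|\S_e|\ge 2$, this system admits a nonconstant solution in addition to the constant one. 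This produces a kernel of dimension at least two containing a nonconstant, non-shift element, forcing the rank to be at most $2|\S|-2 < 2|\S|-1$, so Theorem~\ref{thm:identification} fails. To upgrade from two experts to \emph{any} number, I would note that the same exogenous function $v$ simultaneously satisfies the analogous homogeneous equation in every environment (because the construction never used cross-environment coupling beyond the shared exogeneity), so the stacked multi-expert matrix of Corollary~\ref{cor:3experts} inherits the same nontrivial kernel direction regardless of how many blocks are appended.

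The hard part will be handling the precise bookkeeping of the constant $c$ and confirming that the constructed $v$ is genuinely nonconstant and genuinely distinct from the trivial shift direction, rather than accidentally collapsing to a constant once the exogenous backup is subtracted. In particular I must verify that the map $f \mapsto (I-\gamma_i T_e^i)f$ does not send every nonconstant $f$ outside the admissible target, and that a common solution across the two (or more) environments exists; the simplest guarantee is to observe that taking $f$ constant is always a solution, and then to show the solution space on $\S_e$ has dimension strictly greater than one by exploiting that the exogenous subdynamics decouple, which is exactly where the independence-from-action hypothesis does the essential work. I would finish by translating this dimension count back through Theorem~\ref{thm:identification} to conclude non-identifiability for any number of experts.
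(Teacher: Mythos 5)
Your overall strategy---exhibit an extra kernel direction supported on the exogenous coordinate, collapse the per-state, per-action constraints to a linear system on the exogenous value set $\S_e$, and conclude via Theorem~\ref{thm:identification}---is exactly the route the paper takes (the paper carries it out with explicit block transition matrices and a $2\times 2$ system for the case $|\S_e|=2$). But your pivotal construction fails: you set $v^1 = v^2 = v$ with $v(s) = f(s_e)$. Under this choice the kernel condition $(I-\gamma_1 T_a^1)v^1 = (I-\gamma_2 T_a^2)v^2$ collapses not to an underdetermined system but to the \emph{square} homogeneous system $(\gamma_1 T_e^1 - \gamma_2 T_e^2) f = 0$ on $\R^{|\S_e|}$, which generically has only the zero solution: take $T_e^1 = T_e^2 = I$ (exogenous variable frozen) and $\gamma_1 \neq \gamma_2$; then the system reads $(\gamma_1 - \gamma_2) f = 0$, forcing $f = 0$. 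Note also that the constant function is not a solution unless $\gamma_1 = \gamma_2$ (plugging in $f = \mathbf{1}$ gives $\gamma_1 \mathbf{1} = \gamma_2 \mathbf{1}$), so your claim that since $|\S_e| \geq 2$ the system admits a nonconstant solution in addition to the constant one is false---the size of $\S_e$ does not create a kernel for a square system. The same confusion shows up when you call the trivial shift direction $(\mathbf{1},\mathbf{1})$: it is actually $\left(\frac{1}{1-\gamma_1}\mathbf{1}, \frac{1}{1-\gamma_2}\mathbf{1}\right)$, i.e., the two components of a kernel element must in general be scaled (and, here, shaped) differently across the two environments.

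The fix is close at hand, and you even state the needed ingredient: invertibility of $I - \gamma_i T_e^i$. Allow the two components to differ, $v^i(s) = f^i(s_e)$ with $f^1 \neq f^2$ permitted. The collapsed system is then $(I - \gamma_1 T_e^1) f^1 = (I - \gamma_2 T_e^2) f^2$ on $\S_e$; for every $f^1 \in \R^{|\S_e|}$ the vector $f^2 := (I-\gamma_2 T_e^2)^{-1}(I - \gamma_1 T_e^1) f^1$ solves it, so the kernel of the stacked matrix has dimension at least $|\S_e| \geq 2$, strictly more than the single trivial direction, and Theorem~\ref{thm:identification} gives non-identifiability. (This is precisely what the paper does concretely: it fixes $f^1 = (0,1)$ and solves an invertible $2 \times 2$ system for $f^2 = (c_1,c_2)$, checking that the determinant $(1-\gamma_2)(1+\gamma_2 - \gamma_2 p_1^2 - \gamma_2 p_2^2)$ is positive.) The same repair handles any number of experts: set $f^i = (I-\gamma_i T_e^i)^{-1}(I-\gamma_1 T_e^1) f^1$ for each environment $i$, again giving a kernel of dimension $|\S_e|$ for the stacked multi-expert matrix of Corollary~\ref{cor:3experts}; your version with a single common $v$ across all environments fails there for the same reason it fails with two.
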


Such a negative result motivates the search for milder requirements than arbitrary reward identification, which is too hard of a goal to achieve in certain scenarios.

A possible way to improve reward identifiability is to restrict the class of possible rewards, e.g., by constraining it to be a linear combination of a set of chosen features. This is known as Feature matching IRL~\cite{abbeel2004apprenticeship, ng2000algorithms, ratliff2006maximum, syed2008apprenticeship, heim2019practitioner}. The smaller the set of features, the easier to identify the reward, as described in the following theorem. This method also allows to recover a more interpretable reward function, since the recovered parameters are associated with specific features.

\begin{theorem} \label{thm:param}
Suppose that we restrict the class of possible reward functions to the one parameterized as $r_w(s,a) = w^T f_{s,a}$ $\forall a\in \A, s \in \S$ where $f:\S\times \A \rightarrow \R^d$ is a given feature function, and $w \in \R^d$ denotes the reward parameters. Suppose that the $d$ chosen features are linearly independent, i.e., that $f_{s,a}^T v = 0 \ \forall s,a \Rightarrow v = 0$. Then, if $\mathbf{1} \in \text{Im} \begin{pmatrix} f_{a_1} \\ \vdots \\ f_{a_{|\A|}} \end{pmatrix}$, the reward is identifiable up to constant by observing experts acting in $(T^1, \gamma_1), (T^2, \gamma_2)$ if and only if
\begin{equation}\label{eq:paramR}
\text{rank}
\begin{pmatrix}
I - \gamma_1 T_{a_1}^1 & I - \gamma_2 T_{a_1}^2 & \textbf{0} \\
\vdots & \vdots & \vdots \\
I - \gamma_1 T_{a_{|\mathcal{A}|}}^1 & I - \gamma_2 T_{a_{|\mathcal{A}|}}^2 & \textbf{0} \\
I - \gamma_1 T_{a_1}^1 & \textbf{0} & f_{a_1} \\
\vdots & \vdots & \vdots \\
I - \gamma_1 T_{a_{|\mathcal{A}|}}^1 & \textbf{0} & f_{a_{|\A|}}
\end{pmatrix}
= 2|\mathcal{S}| + d - 1.
\end{equation}
where $f_a = (f_{s_1,a} \ldots f_{s_{|\S|},a})^T \in \R^{|\S|\times d}$. On the other hand, if $\mathbf{1} \notin \text{Im} \begin{pmatrix} f_{a_1} \\ \vdots \\ f_{a_{|\A|}} \end{pmatrix}$, then the reward can be exactly recovered provided that the rank of the matrix on the left hand side of equation~\eqref{eq:paramR}, which augments equation~\eqref{eq:7.1} by the features being matched, is $2|\mathcal{S}| + d$.
\end{theorem}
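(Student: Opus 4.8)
The plan is to reduce the claim to a single rank computation for the coefficient matrix of a linear system in the unknowns $(v^1,v^2,w)$. Applying Theorem~\ref{thm:single_expert} to each expert and imposing the parameterization $r(\cdot,a)=f_a w$ gives, for every action $a$, the two identities $f_a w-(I-\gamma_1 T^1_a)v^1=\lambda\log\pi^1(\cdot|a)$ and $f_a w-(I-\gamma_2 T^2_a)v^2=\lambda\log\pi^2(\cdot|a)$. Stacking over all actions yields a linear system whose coefficient matrix $\tilde M$, with unknown ordering $(v^1,v^2,w)$, has action-blocks $\bigl(-(I-\gamma_1 T^1_a),\ \mathbf 0,\ f_a\bigr)$ and $\bigl(\mathbf 0,\ -(I-\gamma_2 T^2_a),\ f_a\bigr)$. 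First I would show that $\tilde M$ has the same rank as the matrix in~\eqref{eq:paramR}: subtracting the second action-block from the first and negating one column block is a sequence of invertible row/column operations carrying one matrix into the other, so their ranks coincide and I may work with whichever form is convenient.

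Next I would translate reward identifiability into a statement about $N:=\ker\tilde M$. Since the candidate reward is $r(\cdot,a)=f_a w$, a null-space element $(\delta^1,\delta^2,\omega)$ perturbs the reward by $f_a\omega$, i.e.\ by $F\omega$ in stacked form, where $F=\begin{pmatrix} f_{a_1}\\ \vdots\\ f_{a_{|\A|}}\end{pmatrix}$. Two structural facts drive everything: (i) the linear independence hypothesis on the features is exactly injectivity of $F$, so $F\omega=\mathbf 0\Rightarrow\omega=0$; and (ii) each $I-\gamma_i T^i_a$ is invertible (as $\gamma_i<1$ and $T^i_a$ is row-stochastic), so any element of $N$ with $\omega=0$ forces $\delta^1=\delta^2=0$. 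Hence the reward is identifiable up to a constant iff $F\omega\in\mathrm{span}(\mathbf 1)$ for every element of $N$, and is exactly identifiable iff $N=\{0\}$.

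I would then split on whether $\mathbf 1\in\text{Im}(F)$. If $\mathbf 1=Fw_0$, then by (ii) the explicit vector $d_0=\bigl(\tfrac{1}{1-\gamma_1}\mathbf 1,\ \tfrac{1}{1-\gamma_2}\mathbf 1,\ w_0\bigr)$ lies in $N$ and changes the reward by the constant $\mathbf 1$; conversely, for any $(\delta^1,\delta^2,\omega)\in N$ with $F\omega=c\mathbf 1$, injectivity of $F$ gives $\omega=cw_0$, and subtracting $c\,d_0$ lands in the $\omega=0$ part of $N$, which is trivial by (ii). Thus identifiability up to a constant is equivalent to $N=\mathrm{span}(d_0)$, i.e.\ $\dim N=1$, i.e.\ $\text{rank}\,\tilde M=2|\S|+d-1$, which is the first claim. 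If instead $\mathbf 1\notin\text{Im}(F)$, no nonzero element of $N$ can produce a constant reward change (a change $c\mathbf 1$ with $c\neq0$ would put $\mathbf 1\in\text{Im}(F)$, while $c=0$ forces the element to vanish by (i) then (ii)), so identifiability up to a constant coincides with exact recovery, which holds precisely when $N=\{0\}$, i.e.\ $\text{rank}\,\tilde M=2|\S|+d$, giving the second claim.

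The main obstacle I anticipate is not the case analysis but the bookkeeping of the reduction: verifying rigorously that~\eqref{eq:paramR} is rank-equivalent to the natural coefficient matrix $\tilde M$ of the combined system, and keeping the two roles of the all-ones vector straight (the $\mathbf 1\in\R^{|\A||\S|}$ inside $\text{Im}(F)$ versus the per-state $\mathbf 1\in\R^{|\S|}$ appearing in $d_0$). The conceptual crux is recognizing that representability of the constant reward within the feature class is exactly what shifts the target rank between $2|\S|+d-1$ and $2|\S|+d$.
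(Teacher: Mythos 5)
Your proposal is correct and follows essentially the same route as the paper's proof: both reduce identifiability to analyzing the kernel of the (rank-equivalent) augmented matrix in~\eqref{eq:paramR}, exhibit the same distinguished kernel vector built from $\frac{1}{1-\gamma_i}\mathbf{1}$ and a coefficient vector $w_0$ with $Fw_0=\mathbf{1}$, and rely on invertibility of $I-\gamma_i T^i_a$ together with injectivity of the stacked feature matrix. If anything, your write-up is slightly more complete, since it argues both directions of the equivalence explicitly (the paper's proof details only sufficiency), but this is a refinement of the same argument rather than a different approach.
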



Finally, it usually happens that the exact transition matrices $\{T_a\}_{a\in \A}$ are not known exactly and must be estimated, e.g., from samples. Verifying condition~\eqref{eq:7.1} on the approximated matrices may be misleading since the rank is very sensitive to small perturbations. Hence, we provide hereafter an identifiability condition in the case where we only have access to approximated transition matrices.

\begin{theorem}\label{thm:prob}
Suppose that we approximate the transition matrices $\{T_a^i\}_{a\in \A}$ as $\{\hat{T}^i_a\}_{a\in \A}$ such that $\|T_a^i - \hat{T}^i_a\|_2 \leq \epsilon$ $\forall a\in \A$, $i=1,2$. Suppose that we verify condition~\eqref{eq:7.1} using the approximated matrices, i.e., we compute the second smallest eigenvalue $\sigma$ of the following matrix:

\begin{equation}
\begin{pmatrix}
I - \gamma_1 \hat{T}_{a_1}^1 & I - \gamma_2 \hat{T}_{a_1}^2 \\
\vdots & \vdots \\
I - \gamma_1 \hat{T}_{a_{|\mathcal{A}}|}^1 & I - \gamma_2 \hat{T}_{a_{|\mathcal{A}|}}^2
\end{pmatrix}.
\end{equation}

Then, condition~\eqref{eq:7.1} on the true transition matrices $\{T_a\}_{a\in \A}$ holds provided that
\begin{equation}
\sigma > \epsilon \sqrt{2|\A|} \max(\gamma_1, \gamma_2).
\end{equation}
\end{theorem}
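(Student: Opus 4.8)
The plan is to read condition~\eqref{eq:7.1} as a statement about singular values and to transfer it from the approximate stacked matrix (on which $\sigma$ is computed) to the true one via a perturbation bound. Write $M$ and $\hat M$ for the $|\A||\S|\times 2|\S|$ matrices assembled from $\{T_a^i\}$ and $\{\hat T_a^i\}$ respectively, and interpret $\sigma$ as the second smallest singular value of $\hat M$ (equivalently, the square root of the second smallest eigenvalue of $\hat M^\top \hat M$, since $\hat M$ is rectangular). First I would observe that one direction of the rank condition is automatic: because the true transition matrices are row-stochastic, $T_a^i \mathbf{1} = \mathbf{1}$, so the vector $(\tfrac{1}{1-\gamma_1}\mathbf{1};\, -\tfrac{1}{1-\gamma_2}\mathbf{1})$ lies in the null space of $M$, giving $\mathrm{rank}(M)\le 2|\S|-1$ unconditionally. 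Hence it suffices to prove the reverse inequality $\mathrm{rank}(M)\ge 2|\S|-1$, i.e. that the second smallest singular value $\sigma_{2|\S|-1}(M)$ is strictly positive.

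For the transfer step I would invoke Weyl's inequality for singular values, $|\sigma_k(M)-\sigma_k(\hat M)|\le \|M-\hat M\|_2$ for every index $k$, which gives $\sigma_{2|\S|-1}(M)\ge \sigma - \|M-\hat M\|_2$. It then remains to control $\|M-\hat M\|_2$. The error matrix $E := M-\hat M$ has block rows $E_a = \bigl(-\gamma_1(T_a^1-\hat T_a^1)\ \ -\gamma_2(T_a^2-\hat T_a^2)\bigr)$, each column block having spectral norm at most $\gamma_i\epsilon$; the target bound to reach the exact threshold in the statement is $\|E\|_2\le \epsilon\sqrt{2|\A|}\max(\gamma_1,\gamma_2)$.

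The main obstacle is obtaining this bound with the sharp constant $\sqrt{2|\A|}$ rather than the looser constant a naive triangle inequality on the blocks would produce. The key is to combine the blocks in quadrature rather than additively, exploiting that a sum of positive semidefinite matrices has spectral norm at most the sum of the individual spectral norms. For a single block row, $\|E_a\|_2^2 = \|E_a E_a^\top\|_2 = \|\gamma_1^2\Delta_a^1(\Delta_a^1)^\top + \gamma_2^2\Delta_a^2(\Delta_a^2)^\top\|_2 \le \gamma_1^2\epsilon^2+\gamma_2^2\epsilon^2 \le 2\max(\gamma_1,\gamma_2)^2\epsilon^2$, where $\Delta_a^i := T_a^i-\hat T_a^i$. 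Stacking the rows and applying the same semidefinite-sum argument to $E^\top E=\sum_a E_a^\top E_a$ gives $\|E\|_2^2 = \|\sum_a E_a^\top E_a\|_2 \le \sum_a\|E_a\|_2^2 \le 2|\A|\max(\gamma_1,\gamma_2)^2\epsilon^2$, which yields exactly $\|E\|_2\le \epsilon\sqrt{2|\A|}\max(\gamma_1,\gamma_2)$. Combining with the hypothesis $\sigma > \epsilon\sqrt{2|\A|}\max(\gamma_1,\gamma_2)$ then forces $\sigma_{2|\S|-1}(M) \ge \sigma-\|E\|_2 > 0$, so $\mathrm{rank}(M) = 2|\S|-1$ and condition~\eqref{eq:7.1} holds on the true matrices.
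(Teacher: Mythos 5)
Your proposal is correct and follows essentially the same route as the paper's proof: both reduce condition~\eqref{eq:7.1} to positivity of the second smallest singular value of the stacked matrix, apply Weyl's inequality for singular values to transfer from $\hat M$ to $M$, and bound $\|M-\hat M\|_2$ by $\epsilon\sqrt{2|\A|}\max(\gamma_1,\gamma_2)$ via a block quadrature argument (the paper splits by column blocks first, you split by row blocks first, but the constant and logic are identical). Your explicit observation that the known null vector $(\tfrac{1}{1-\gamma_1}\mathbf{1};\,-\tfrac{1}{1-\gamma_2}\mathbf{1})$ forces $\mathrm{rank}(M)\le 2|\S|-1$, so that $\sigma_{2|\S|-1}(M)>0$ indeed yields equality, is a small justification the paper leaves implicit.
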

\begin{remark}
The matrix estimator $\hat{T}_a$ can be obtained from samples. For example, \cite{Kearns:1998}[Lemma 5] shows that a high probability bound on the max norm $\|T_a - \hat{T}_a\|_{\mathrm{max}}\leq \epsilon$ requires $\mathcal{O}(\epsilon^{-4})$ samples from a generative model \cite{Azar:2012}. This would imply the following bound on the spectral norm: $\|T_a - \hat{T}_a\|_{2} \leq |{\mathcal{S}}|\|T_a - \hat{T}_a\|_{\mathrm{max}}\leq |{\mathcal{S}}|\epsilon$.
However, the dependence on $\epsilon$ can be improved as we show next applying the matrix Bernstein bound \cite{Hsu:2012, Hsu:2011}.
\end{remark}
\begin{theorem}\label{thm:bernstein}
Let $\hat{T}_a$ be the empirical estimator for $T_a$. Then with probability greater than $1-\delta$,
\begin{equation}
    \|T_a - \hat{T}_a\|_{2} \leq |\mathcal{S}|\sqrt{\frac{\log{\frac{|\mathcal{S}||\mathcal{A}|}{\delta}}}{2N}} + \frac{2(|\mathcal{S}|+1)\log{\frac{|\mathcal{S}||\mathcal{A}|}{\delta}}}{3N} \quad \forall a\in\A.
\end{equation}
Therefore, we can obtain $\|T_a - \hat{T}_a\|_{2} \leq \epsilon$ with $\mathcal{O}(\epsilon^{-2})$ samples.
\end{theorem}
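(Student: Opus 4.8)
The plan is to realize $\hat{T}_a$ as an empirical mean of $N$ i.i.d.\ samples and then apply a Bernstein-type concentration inequality, as announced in the preceding remark. Concretely, I would fix an action $a$ and, for each state $s$, draw $N$ independent next-state samples under $(s,a)$; the estimator is $\hat{T}_a(s,s') = \frac{1}{N}\sum_{k=1}^N \mathbb{I}[s_k^{(s)} = s']$, so that for every fixed pair $(s,s')$ the quantity $\hat{T}_a(s,s')$ is the average of $N$ i.i.d.\ Bernoulli variables with mean $T_a(s,s')$. Equivalently, writing $e_s$ for the standard basis vectors, $\hat{T}_a - T_a = \sum_{s,k} X_{s,k}$ with $X_{s,k} = \frac{1}{N} e_s (e_{s_k^{(s)}} - T_a(s,\cdot))^\top$ a sum of independent, mean-zero matrices, which is the form required to invoke the matrix Bernstein bound of \cite{Hsu:2012, Hsu:2011}.

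The next step is to supply the two ingredients Bernstein needs: a uniform operator-norm bound $\|X_{s,k}\|_2 \le R$ on each summand and a bound on the variance proxy. For the scalar view, each centred Bernoulli has variance $T_a(s,s')(1 - T_a(s,s')) \le \tfrac14$ and range at most $1$; for the matrix view one computes $\mathbb{E}[X_{s,k}X_{s,k}^\top]$ and $\mathbb{E}[X_{s,k}^\top X_{s,k}]$ and bounds their operator norms. Feeding these into the Bernstein tail $2\exp\!\big(\frac{-Nt^2/2}{1/4 + t/3}\big)$ and solving the resulting quadratic in $t$ yields the characteristic two-term deviation $\sqrt{\tfrac{L}{2N}} + \tfrac{2L}{3N}$, with the confidence level controlled by $L$. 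I would then convert this control of the individual entries into a bound on the spectral norm via $\|M\|_2 \le |\mathcal{S}|\,\max_{s,s'} |M_{s,s'}|$, which is exactly the source of the multiplicative $|\mathcal{S}|$ and $(|\mathcal{S}|+1)$ factors appearing in the statement.

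To obtain the uniform-over-actions guarantee (the ``$\forall a \in \mathcal{A}$'' in the statement) and the $\log\frac{|\mathcal{S}||\mathcal{A}|}{\delta}$ inside the bound, I would take a union bound over the $|\mathcal{S}|\cdot|\mathcal{A}|$ estimated quantities, i.e.\ set the per-event failure probability to $\delta/(|\mathcal{S}||\mathcal{A}|)$ so that $L = \log\frac{|\mathcal{S}||\mathcal{A}|}{\delta}$. Collecting the two terms then gives the displayed inequality. Finally, the $\mathcal{O}(\epsilon^{-2})$ sample-complexity claim follows by inspection: for small $\epsilon$ the $\tfrac{1}{\sqrt{N}}$ term dominates, so forcing $|\mathcal{S}|\sqrt{L/(2N)} \lesssim \epsilon$ requires $N = \Theta(|\mathcal{S}|^2 L \epsilon^{-2})$, at which point the linear-in-$1/N$ term is of order $\epsilon^2$ and negligible.

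The part I expect to be most delicate is not the Bernstein inversion (routine) but the passage from entrywise concentration to the spectral norm: a tight matrix-Bernstein computation of the variance proxy actually produces only a $\sqrt{|\mathcal{S}|}$ prefactor, whereas the stated bound carries a full $|\mathcal{S}|$, so one must be careful to use the cruder $\|M\|_2 \le |\mathcal{S}|\|M\|_{\max}$ conversion (and to track the boundedness constant responsible for the $(|\mathcal{S}|+1)$ rather than $|\mathcal{S}|$ in the second term) in order to land on precisely the claimed inequality.
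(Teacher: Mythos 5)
Your route is not the paper's route, and as written it cannot deliver the stated inequality. The paper's proof never passes through entrywise concentration: it writes $\hat{T}_a$ as an average of $N$ single-sample unbiased matrix estimators $\tilde{T}_i = |\mathcal{S}|\, e_{s_i} e_{s_i'}^{\top}$ (one scaled indicator entry per sample), verifies the matrix-level conditions $\lambda_{\mathrm{max}}(\tilde{T}_i - T_a) \le |\mathcal{S}|+1$, $\lambda_{\mathrm{max}}(\mathbb{E}[(\tilde{T}_i - T_a)^2]) \le |\mathcal{S}|^2$ and $\mathrm{Trace}(\mathbb{E}[(\tilde{T}_i - T_a)^2]) \le |\mathcal{S}|^2$, applies the matrix Bernstein inequality (Lemma 10 of Hsu et al.) directly to the sum, and finishes with a union bound over the $|\mathcal{A}|$ actions. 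Hence the $|\mathcal{S}|$ prefactor is the matrix standard-deviation proxy and $(|\mathcal{S}|+1)$ is the matrix range bound; neither arises from a $\|M\|_2 \le |\mathcal{S}|\|M\|_{\max}$ conversion. Your closing diagnosis is exactly backwards: you claim a tight matrix-Bernstein computation would give only $\sqrt{|\mathcal{S}|}$, so the crude conversion ``must'' be used, but in the worst case $\lambda_{\mathrm{max}}(\mathbb{E}[\tilde{T}_i^{\top}\tilde{T}_i]) = |\mathcal{S}| \max_{s'} \sum_s T(s'|s,a)$ can equal $|\mathcal{S}|^2$ (all rows concentrated on one column), so matrix Bernstein yields precisely the stated $|\mathcal{S}|$. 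Indeed, the remark preceding the theorem presents the max-norm conversion — the very device you rely on — as the baseline the theorem is meant to improve upon.

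Two concrete steps in your plan fail to reproduce the claimed bound. First, controlling $\|T_a - \hat{T}_a\|_{\max}$ for all actions requires a union bound over all $|\mathcal{S}|^2 |\mathcal{A}|$ entries (or over $|\mathcal{S}||\mathcal{A}|$ rows with a further union inside each row), not over $|\mathcal{S}||\mathcal{A}|$ events as you assert; done correctly this puts $\log(2|\mathcal{S}|^2|\mathcal{A}|/\delta)$ inside the bound, strictly larger than the stated $\log(|\mathcal{S}||\mathcal{A}|/\delta)$. Second, you changed the sampling budget: in the paper's proof $N$ is the \emph{total} number of samples, of which only $N/|\mathcal{S}|$ land on each state, so each entry is an average of $N/|\mathcal{S}|$ Bernoullis; your entrywise bound plus the conversion then gives a leading term of order $|\mathcal{S}|\sqrt{|\mathcal{S}|\log(\cdot)/(2N)} = |\mathcal{S}|^{3/2}\sqrt{\log(\cdot)/(2N)}$, a $\sqrt{|\mathcal{S}|}$ loss relative to the theorem — which is exactly the loss that applying Bernstein at the matrix level avoids. (Reading $N$ as per-state samples, as you do, instead inflates the total sample count by a factor $|\mathcal{S}|$.) Your argument does salvage the qualitative $\mathcal{O}(\epsilon^{-2})$ dependence, but it proves a strictly weaker inequality than the one displayed.
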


\subsection{Generalization to unknown environments}

We now focus on reward generalizability, i.e., the ability to recover a reward function that would allow us to train an optimal policy in a new environment. 
Suppose that we recover a reward function that is compatible with two experts acting in two MDPs $\M_1, \M_2$, and that we use this reward to train an expert in a third environment $\M_3$, assuming all environments share the same true reward function but possibly different transition dynamics and discount factors. What condition guarantees that the trained expert will be optimal in $\M_3$?

This generalization requirement is milder than full reward identification. Indeed, being able to identify the reward (even up to a constant) naturally allows to train an optimal policy in any other environment sharing the same reward. However, even in the presence of non-trivial degrees of freedom, it may be the case that any recovered reward suffices to train an optimal policy in a given other environment.

Intuitively, the third training environment should not vary too much from the observed environments $\M_1, \M_2$. More precisely, if observing a third expert in environment $3$ does not provide any further identification of the reward than with environments $1$ and $2$, then any reward compatible with environments $1$ and $2$ leads to an optimal policy in environment $3$. The condition is made precise in the following theorem.

\begin{defn}
Consider three Markov decision problems on the same set of states and actions, but with different transition matrices $T_1, T_2, T_3$ and discount factors $\gamma_1, \gamma_2, \gamma_3$. Suppose that we observe two optimal entropy regularized experts with respect to the same reward function in environments $1$ and $2$. We say that $(T^1, \gamma_1), (T^2, \gamma_2)$ \textbf{generalize to} $(T^3, \gamma_3)$ if any reward compatible with the two experts in environments $1$ and $2$ leads to an optimal expert in environment $3$. The definition naturally extends to more than two observed experts.
\end{defn}

\begin{theorem}
\label{thm:generalization}

$(T^1, \gamma_1), (T^2, \gamma_2)$ generalize to $(T^3, \gamma_3)$ if and only if

\begin{equation}\label{eq:7.2}
\text{rank}
\begin{pmatrix}
I - \gamma_1 T_{a_1}^1 & I - \gamma_2 T_{a_1}^2 \\
\vdots & \vdots \\
I - \gamma_1 T_{a_{|\mathcal{A}|}}^1 & I - \gamma_2 T_{a_{|\mathcal{A}|}}^2
\end{pmatrix}
= \text{rank}
\begin{pmatrix}
I - \gamma_1 T_{a_1}^1 & I - \gamma_2 T_{a_1}^2 & \textbf{0} \\
\vdots & \vdots & \vdots \\
I - \gamma_1 T_{a_{|\mathcal{A}|}}^1 & I - \gamma_2 T_{a_{|\mathcal{A}|}}^2 & \textbf{0} \\
I - \gamma_1 T_{a_1}^1 & \textbf{0} & I - \gamma_3 T_{a_1}^3 \\
\vdots & \vdots & \vdots \\
I - \gamma_1 T_{a_{|\mathcal{A}|}}^1 & \textbf{0} & I - \gamma_3 T_{a_{|\mathcal{A}|}}^3
\end{pmatrix} - |\mathcal{S}|.
\end{equation}

This condition is also necessary, in the sense that, if it does not hold, then there exists a reward function compatible with experts $1$ and $2$ but which leads to a sub-optimal policy in environment $3$.
\end{theorem}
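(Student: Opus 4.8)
The plan is to reduce the generalization property to a subspace-inclusion statement about the column spaces of the three single-expert operators, and then translate that inclusion into the rank equality~\eqref{eq:7.2} by elementary dimension counting. For each environment $i\in\{1,2,3\}$ introduce the linear map $\Phi_i:\R^{|\S|}\to\R^{|\A||\S|}$ sending $\delta$ to the stacked vector $((I-\gamma_i T_{a}^i)\delta)_{a\in\A}$, and write $U_i=\mathrm{Im}(\Phi_i)$. Since $\gamma_i<1$ and each $T_a^i$ is row-stochastic, $I-\gamma_i T_a^i$ is invertible, so every $\Phi_i$ is injective and $\dim U_i=|\S|$.

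First I would fix the true reward $r^*$ together with the value functions $v^{1,*},v^{2,*},v^{3,*}$ and the optimal policies $\pi^{1},\pi^{2},\pi^{3,*}$ it induces in the three environments via Theorem~\ref{thm:single_expert}. Subtracting the identity of Theorem~\ref{thm:single_expert} for a candidate reward $r$ from the one for $r^*$ shows that $r$ keeps $\pi^{i}$ optimal in environment $i$ (with some value function) if and only if $r-r^*\in U_i$; here I use that the entropy-regularized optimal policy of a given reward is unique, so ``$\pi^i$ stays optimal'' is exactly membership in the affine set $r^*+U_i$, with no spurious reward outside it reproducing the observed policy. Consequently the rewards compatible with experts $1$ and $2$ are precisely $r^*+\mathcal{D}$ with $\mathcal{D}=U_1\cap U_2$, and such an $r$ trains the correct policy $\pi^{3,*}$ in environment $3$ iff $r-r^*\in U_3$. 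Hence generalization holds iff $U_1\cap U_2\subseteq U_3$; when this fails, any $\Delta\in(U_1\cap U_2)\setminus U_3$ gives a compatible reward $r^*+\Delta$ whose optimal policy in environment $3$ differs from $\pi^{3,*}$ and is therefore strictly sub-optimal for $r^*$, which supplies the necessity direction.

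It then remains to show that $U_1\cap U_2\subseteq U_3$ is equivalent to~\eqref{eq:7.2}, which I would do through kernels. Viewing the left matrix $A^{12}$ of~\eqref{eq:7.1} as $(x,y)\mapsto\Phi_1(x)+\Phi_2(y)$, the assignment $(x,y)\mapsto\Phi_1(x)$ identifies $\ker A^{12}$ with $U_1\cap U_2$, so $\mathrm{rank}(A^{12})=2|\S|-\dim(U_1\cap U_2)$. Viewing the stacked matrix $M$ on the right of~\eqref{eq:7.2} as $(x,y,z)\mapsto(\Phi_1(x)+\Phi_2(y),\,\Phi_1(x)+\Phi_3(z))$, the same device $(x,y,z)\mapsto\Phi_1(x)$ yields a linear isomorphism $\ker M\cong U_1\cap U_2\cap U_3$, since injectivity of the $\Phi_i$ lets us recover $x,y,z$ from $\Phi_1(x)$; thus $\mathrm{rank}(M)=3|\S|-\dim(U_1\cap U_2\cap U_3)$. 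Substituting both expressions, condition~\eqref{eq:7.2} reads $\dim(U_1\cap U_2)=\dim(U_1\cap U_2\cap U_3)$, and because the triple intersection always lies inside the double one this is equivalent to $U_1\cap U_2\subseteq U_3$, closing the loop.

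I expect the main obstacle to be the second paragraph rather than the linear algebra: one must argue cleanly, using only Theorem~\ref{thm:single_expert} and uniqueness of the soft-optimal policy, that both ``compatible with expert $i$'' and ``induces the correct optimal policy in environment $i$'' are captured by the single affine condition $r-r^*\in U_i$. Once this dictionary between policies and the subspaces $U_i$ is in place, the equivalence of generalization with~\eqref{eq:7.2} follows from the dimension count above.
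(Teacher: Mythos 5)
Your proof is correct and follows essentially the same route as the paper's: your subspace inclusion $U_1\cap U_2\subseteq U_3$ is precisely the paper's Lemma~\ref{lem:A.2} (stated there element-wise as the existence of $v^3$ for every compatible pair $(v^1,v^2)$), proved by the same kernel-dimension count that exploits invertibility of $I-\gamma_3 T_a^3$, and your dictionary ``compatible with expert $i$ $\Leftrightarrow$ $r-r^*\in U_i$'' via Theorem~\ref{thm:single_expert} is exactly how the paper's sufficiency and necessity arguments proceed. The only difference is presentational: you package everything in column-space language, while the paper manipulates the value vectors explicitly.
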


One interesting question is whether observing two experts in the same environment with different discount factors allows to generalize to any other expert with arbitrary discount factor. It turns out to be the case under some commutativity constraint on the transition matrices.

\begin{corollary} \label{cor:3}
Consider a single environment with transitions $T$. Suppose that there exists an action $a_0\in \A$ such that $T_{a_0}$ commutes with $T_a$ for all $a\in \A$. Then for any $0 < \gamma_1, \gamma_2, \gamma_3 < 1$ with $\gamma_1 \neq \gamma_2$, $(T, \gamma_1), (T, \gamma_2)$ generalize to $(T, \gamma_3)$.

\end{corollary}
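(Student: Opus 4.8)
The plan is to invoke the rank characterization of Theorem~\ref{thm:generalization} and reduce it to a statement about null spaces. Write $A$ for the $|\A||\S|\times 2|\S|$ matrix on the left-hand side of~\eqref{eq:7.2} and $B$ for the $2|\A||\S|\times 3|\S|$ matrix on the right-hand side, specialized to $T^1=T^2=T^3=T$. By rank--nullity the required identity $\text{rank}(B)=\text{rank}(A)+|\S|$ is equivalent to $\dim\ker(B)=\dim\ker(A)$. I would then study the coordinate projection $\pi:\ker(B)\to\ker(A)$, $(v^1,v^2,v^3)\mapsto(v^1,v^2)$, which is well defined because the top block of rows of $B$ is exactly the defining system of $\ker(A)$.

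First I would dispose of the easy inclusion: $\pi$ is always injective. If $(v^1,v^2,v^3)\in\ker(B)$ projects to $0$, then $v^1=v^2=0$ and the bottom block forces $(I-\gamma_3 T_a)v^3=0$ for every $a$; since $\gamma_3<1$ and each $T_a$ is row-stochastic, $I-\gamma_3T_a$ is invertible, so $v^3=0$. This gives $\dim\ker(B)\le\dim\ker(A)$, equivalently $\text{rank}(B)\ge\text{rank}(A)+|\S|$, with no hypotheses. The entire content of the corollary is therefore the reverse inclusion, i.e.\ surjectivity of $\pi$.

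To prove surjectivity I would describe $\ker(A)$ explicitly. The equations $(I-\gamma_1T_a)v^1+(I-\gamma_2T_a)v^2=0$ for all $a$ are equivalent to $T_a(\gamma_1v^1+\gamma_2v^2)=v^1+v^2$ for all $a$; setting $w:=\gamma_1v^1+\gamma_2v^2$, this says $T_aw$ is independent of $a$, i.e.\ $w\in W:=\bigcap_a\ker(T_a-T_{a_0})$, and since $\gamma_1\ne\gamma_2$ we may recover $v^1=\tfrac{1}{\gamma_1-\gamma_2}(I-\gamma_2T_{a_0})w$. Given such a $(v^1,v^2)$, I would construct the missing coordinate by setting $w'':=(\gamma_1-\gamma_3)(I-\gamma_3T_{a_0})^{-1}v^1$ and $v^3:=(w''-\gamma_1 v^1)/\gamma_3$, which is legitimate because $\gamma_3>0$ and $I-\gamma_3T_{a_0}$ is invertible. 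These choices are made precisely so that $w''=\gamma_3v^3+\gamma_1v^1$ and $(I-\gamma_3T_{a_0})w''=(\gamma_1-\gamma_3)v^1$, from which a one-line computation gives $T_{a_0}w''=v^3+v^1$.

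The key step, and the only place the hypothesis is used, is verifying $w''\in W$: once this holds, $T_aw''=T_{a_0}w''=v^3+v^1$ for all $a$, which rearranges to $(I-\gamma_1T_a)v^1+(I-\gamma_3T_a)v^3=0$ for all $a$ and places $(v^1,v^2,v^3)$ in $\ker(B)$ with $\pi(v^1,v^2,v^3)=(v^1,v^2)$. Since $T_{a_0}$ commutes with every $T_a$, the matrix $P:=(I-\gamma_3T_{a_0})^{-1}(I-\gamma_2T_{a_0})$ is a power series in $T_{a_0}$ (as $\gamma_3\rho(T_{a_0})<1$), hence commutes with every $T_a$ and therefore with each $T_a-T_{a_0}$. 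Writing $w''=\tfrac{\gamma_1-\gamma_3}{\gamma_1-\gamma_2}Pw$ and using $(T_a-T_{a_0})w=0$ gives $(T_a-T_{a_0})w''=\tfrac{\gamma_1-\gamma_3}{\gamma_1-\gamma_2}P(T_a-T_{a_0})w=0$, so $w''\in W$. I expect this commutation---recognizing $P$ as a function of $T_{a_0}$ so that it passes through $T_a-T_{a_0}$---to be the crux, with everything else being bookkeeping. Surjectivity of $\pi$ then yields $\dim\ker(B)=\dim\ker(A)$, hence~\eqref{eq:7.2}, and the corollary follows from Theorem~\ref{thm:generalization}.
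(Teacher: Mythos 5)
Your proposal is correct and takes essentially the same approach as the paper's proof: both reduce condition~\eqref{eq:7.2} to equality of kernel dimensions via rank--nullity and then extend each $(v^1,v^2)$ in the small kernel by the same third vector $v^3 = \pm\,(I-\gamma_3 T_{a_0})^{-1}(I-\gamma_1 T_{a_0})v^1$ (your sign differs only because you keep the $+$ convention of the displayed matrix), with the crux in both cases being that functions of $T_{a_0}$, in particular $(I-\gamma_3 T_{a_0})^{-1}$, commute with every $T_a$. The only divergence is bookkeeping: the paper verifies the extension through the affine identity $I-\gamma_3 T_a = \alpha(I-\gamma_1 T_a)+(1-\alpha)(I-\gamma_2 T_a)$ with $\alpha = \frac{\gamma_3-\gamma_2}{\gamma_1-\gamma_2}$ and a chain of substitutions, whereas you reparameterize $\ker(A)$ by $w=\gamma_1 v^1+\gamma_2 v^2$ and check that $w''=\gamma_1 v^1+\gamma_3 v^3$ stays in $\bigcap_a\ker(T_a-T_{a_0})$ --- an equivalent computation (and your explicit injectivity step matches the paper's one-line uniqueness remark).
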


\begin{remark}
The commutativity condition cannot simply be removed. Indeed, we provide in Appendix~\ref{app:A.8} an example with two actions with non-commutative transition matrices for which condition~\eqref{eq:7.2} is not satisfied.
\end{remark}


\section{Experiments}
We now present empirical validations of our claims\footnote{
Code available at the following link \url{https://github.com/lviano/Identifiability_IRL}}. In particular, we verify the identifiability requirement given by \Cref{thm:identification} in the context of randomly generated transition matrices and different gridworlds with uniform additive noise in the dynamics. 

In addition, we study a \texttt{Windy-Gridworld} and a financial model that we term \texttt{Strebulaev-Whited} both involving exogenous variables in their state spaces. In agreement with \Cref{cor:exo}, the reward function is not identifiable in these environments, highlighting the necessity of imposing milder requirements than full reward recovery. For example, in \texttt{Windy-Gridworld}, we show that by observing multiple experts acting in environments with different wind distributions, we can generalize, i.e., train an optimal expert in environments with arbitrary other wind distribution, in accordance with \Cref{thm:generalization}. On the other hand, in \texttt{Strebulaev-Whited}, given the additional information that the reward function can be represented as a linear combination of some known features, we can identify the reward, validating the condition of \Cref{thm:param}. The algorithms are described in \Cref{app:B}.





\subsection{Identifiability experiments}
\paragraph{Experiments on \texttt{Random-Matrices}}
The first experiment involves randomly generated transition matrices and reward function with $|\S| = 18, |\A|=5$. This setting matches the numerical evidence in \cite{cao2021identifiability}. Their algorithm recovers the reward function but the connection with their theoretical contribution is not highlighted. On the contrary, we have no theory practice mismatch, since we verify exactly the condition in \Cref{thm:identification}. In particular, for the $100$ random seed we tried the rank of the matrix $A$ is $2|\mathcal{S}| -1 = 35$, then invoking \Cref{thm:identification} we can conclude that the reward function is identifiable up to a constant shift. We provide a visual example of the recovered reward in \Cref{fig:random-matrices} in \Cref{appendix:experiments}.

\paragraph{Experiments on \texttt{Gridworld}} As a second example of identifiability, we consider \texttt{Gridworld}, where the state space is a squared grid with $100$ states while the action set is given by $\mathcal{A} = \{ \mathrm{up},\mathrm{down},\mathrm{left},\mathrm{right}\}$ with dynamics given by $T_{\alpha}(s^\prime|s,a) = (1 - \alpha) T_{\mathrm{det}}(s^\prime|s,a) + \alpha U(s^\prime|s,a)$  where $T_{\mathrm{det}}(s^\prime|s,a)$ represents deterministic transition dynamics where for example the action $\mathrm{right}$ leads to the state on the right with probability $1$. If an action would lead outside the grid, then the agent stays in the current state with probability $1$. The dynamics $U(s^\prime|s,a)$ are instead uniform over the states that are first adjacent to the current state. In other words, $U(\cdot|s,a) = \mathrm{Unif}(\mathcal{N}(s)) \quad \forall a\in\mathcal{A}$ where $\mathcal{N}(s)$ denotes the set of first neighbors of the state $s$.

We generate two different environments changing the value of $\alpha$, choosing $\alpha^1=0.4$ and $\alpha^2=0.2$. We notice that, even using the same discount factor $\gamma = 0.9$, the condition of \Cref{thm:identification} holds. When $\alpha$ is kept fixed, we also notice that the condition of \Cref{cor:2} holds, and hence the reward can be recovered by just varying the discount factor $\gamma$ of the experts. We numerically verify that the reward can indeed be identified up to a constant shift in these two settings (see \Cref{fig:action_gridworld}).




\begin{figure}
    \centering
    \begin{tabular}{cccccc}
\subfloat[$r_\alpha$]{%
       \includegraphics[width=0.16\linewidth]{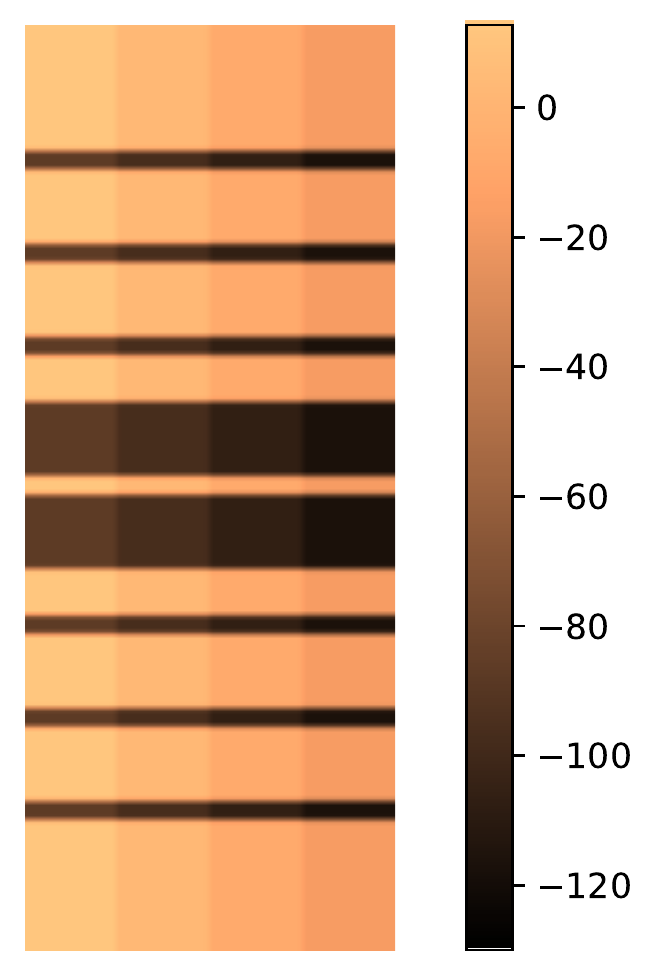}
     } & 
\subfloat[$r_\gamma$]{%
       \includegraphics[width=0.16\linewidth]{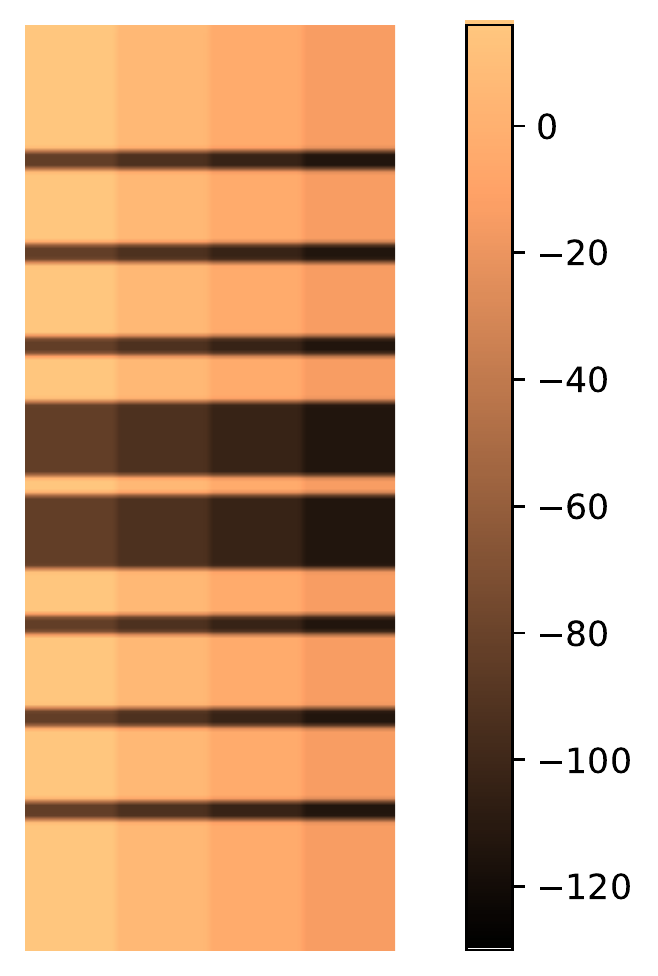}
     } &
\subfloat[$r_\mathrm{true}$]{%
       \includegraphics[width=0.16\linewidth]{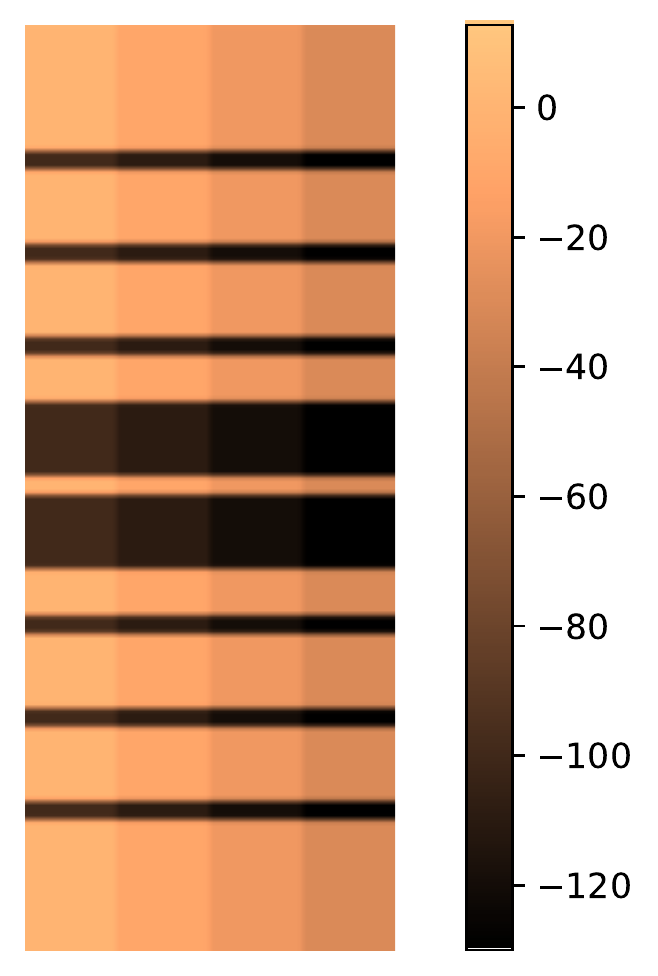}
     } &
\subfloat[$r_\alpha - r_{\mathrm{true}}$]{%
       \includegraphics[width=0.16\linewidth]{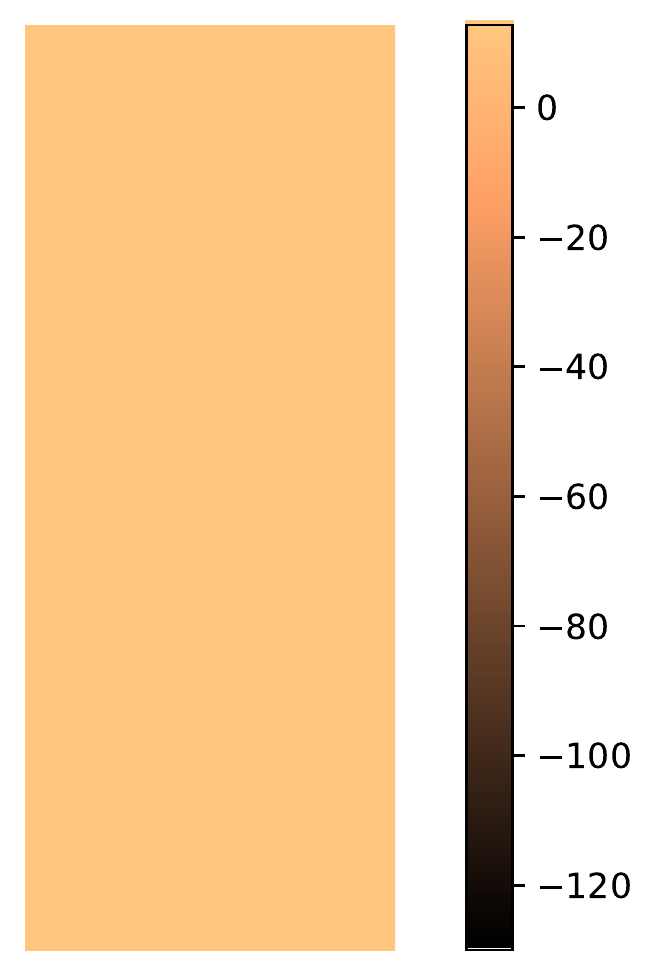}
     } &
\subfloat[$r_\gamma - r_{\mathrm{true}}$]{%
       \includegraphics[width=0.16\linewidth]{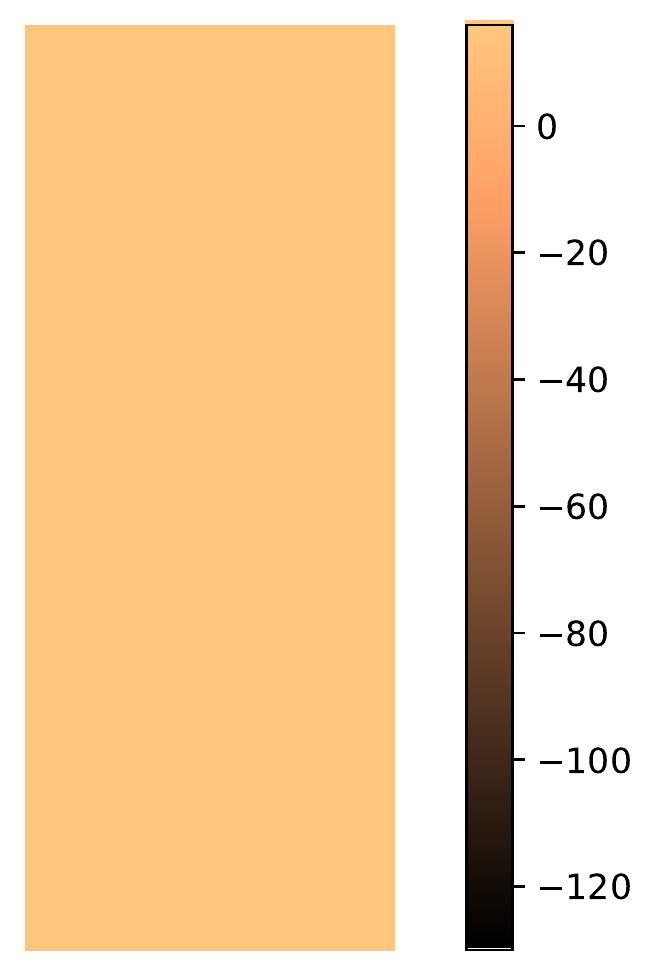}
     } \\
\end{tabular}
    \caption{Comparison between true and recovered reward in \texttt{Gridworld} with an action dependent reward, $|\mathcal{S}|=100$. It can be noticed that the reward function $r_\gamma$ recovered changing discount factors is within a constant shift from the true reward ( subplots (b),(e)). The same conclusion holds for $r_\alpha$ recovered from different $\alpha$(see subplots (a),(d)).}
\label{fig:action_gridworld}
\end{figure}
\subsection{Generalizability experiments}

In this section, we present cases where identifiability is not possible due to the presence of exogenous variables. However, we notice that the generalizability condition in \Cref{thm:generalization} is often satisfied, even for a test environment with parameters rather different than the environments of the observed experts. We start briefly describing the environments to later comment on the results.


\paragraph{Experiments on \texttt{WindyGridworld}} The \texttt{WindyGridworld} environment augments the \texttt{Gridworld} state representation by including a wind direction. The wind impacts the position transitions by making the agent move one step in the direction of the wind in addition to the action taken. The wind directions at step $t$, $w_t$ are sampled i.i.d. from the distribution $P_{\mathrm{wind}}$, and is hence an exogenous variable. While the reward is not identifiable whatever the number of experts, we can generalize to a new environment with an arbitrary wind distribution by observing enough experts in environments with different wind distributions.

\begin{wrapfigure}{r}{0.55\textwidth}
\centering
    \begin{tabular}{cc}
\subfloat[Generalizability \label{fig:gen_condition}]{%
       \includegraphics[width=0.47\linewidth]{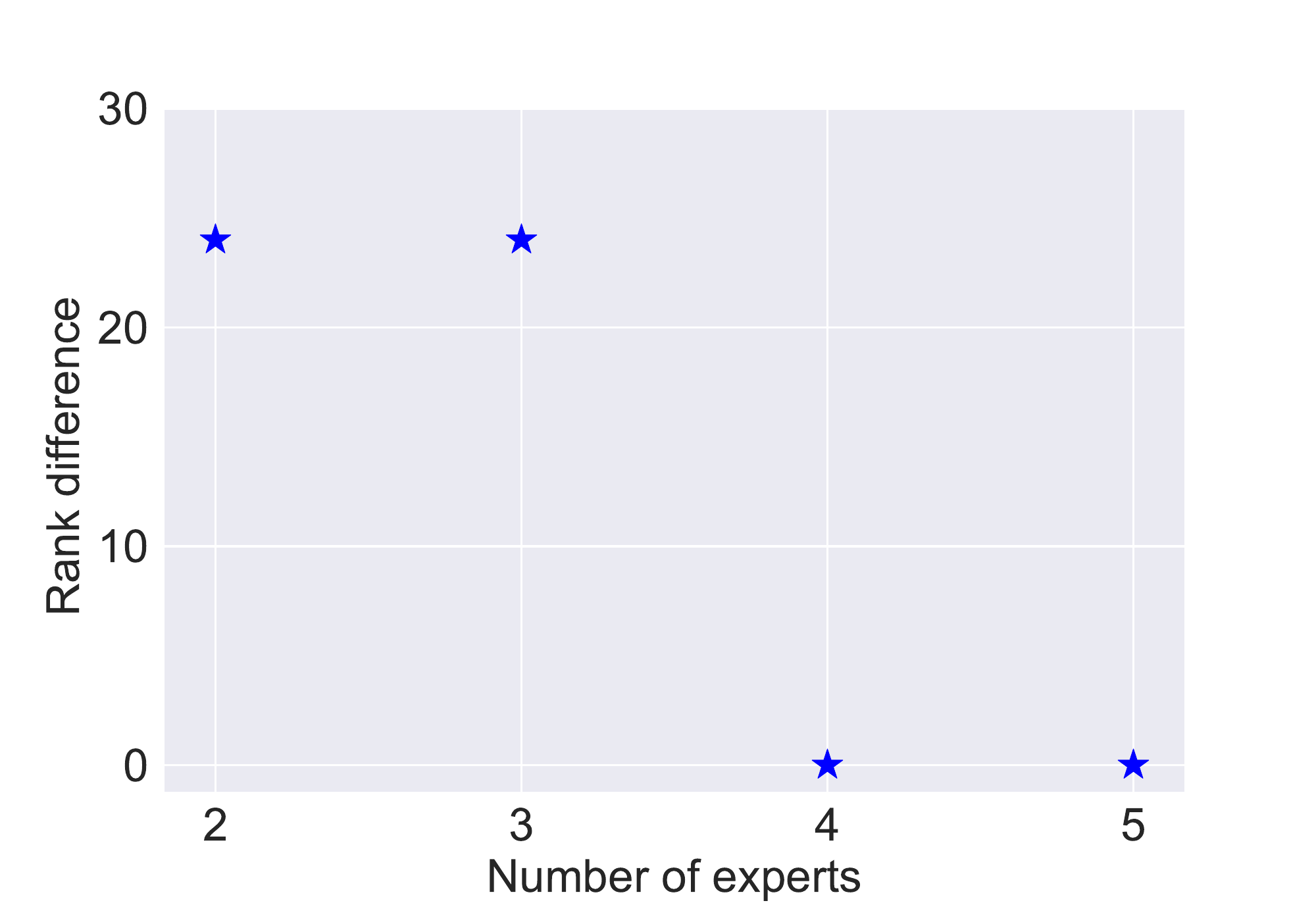}
     } &
\subfloat[ Identifiability \label{fig:id_condition}]{%
       \includegraphics[width=0.47\linewidth]{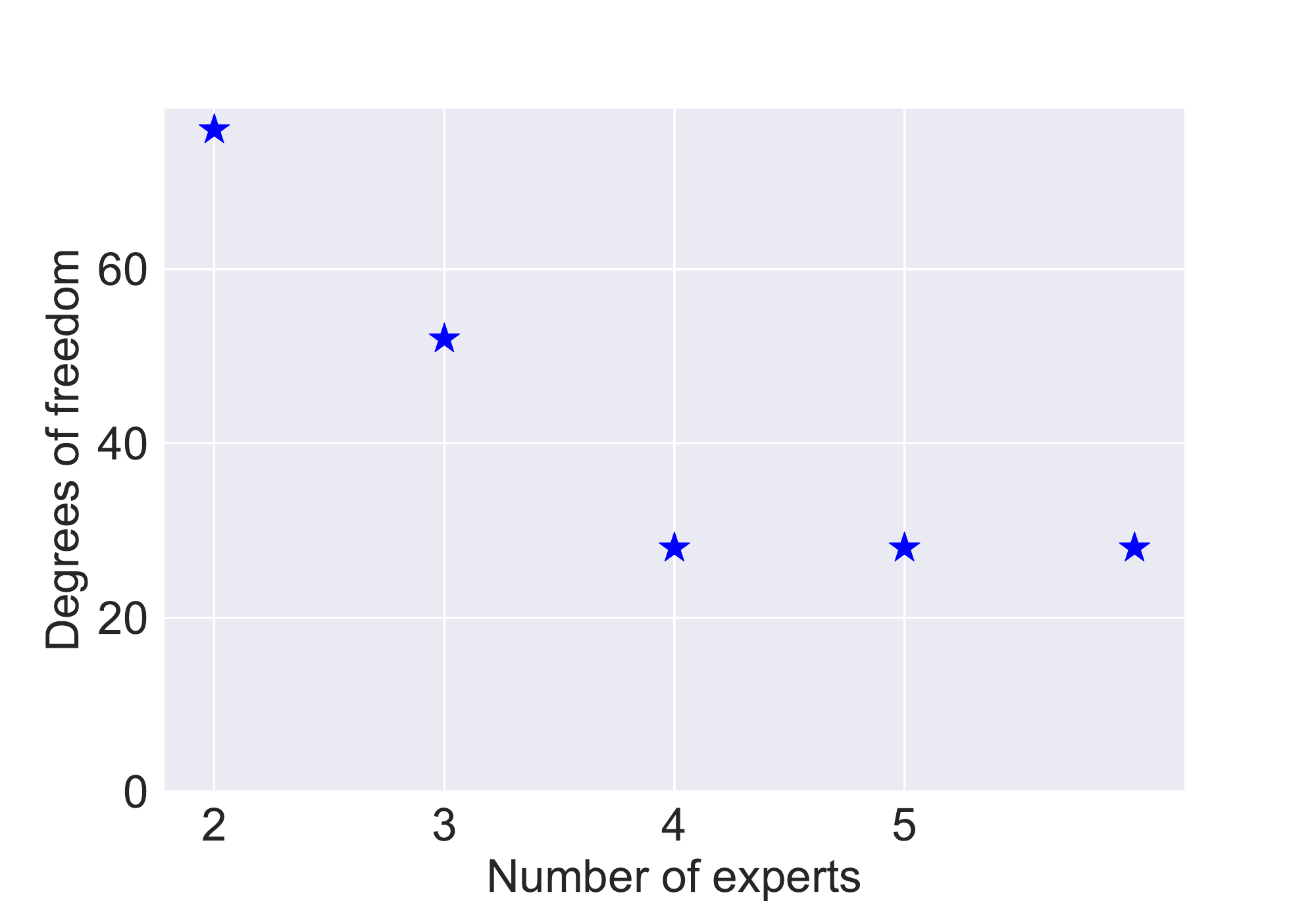}
     } \\
     \end{tabular}
     \caption{\Cref{fig:gen_condition} shows the difference between right and left term of \Cref{thm:generalization}. \Cref{fig:id_condition} shows the difference between columns and rank of the matrix in \Cref{thm:identification}.We have identifiability or generalizability respectively when those values are $0$.}
     \vspace{-3mm}
\end{wrapfigure}

In \Cref{fig:id_condition}, we see that we can obtain better identifiability (although never full identifiability) when increasing the number of experts. Once we have observed $4$ experts, we do not get further identifiability by observing more experts, hence leading to generalizability as shown in \Cref{fig:gen_condition} and \Cref{fig:windygridworld}.  We conjecture that this number of experts is linked to the number of values that the exogenous variable, i.e. the wind direction, can take.  

Furthermore, although the actions in Gridworld do not exactly commute (because of the boundary), observing two experts in the same environment with different discount factors enables generalizing to a different discount factor (see \Cref{fig:gammawindygridworld} in \Cref{appendix:experiments}).  The condition of \Cref{cor:3} is hence sufficient but not necessary.

\begin{figure}
    \centering
    \begin{tabular}{ccccc}
\subfloat[$r$]{%
       \includegraphics[width=0.16\linewidth]{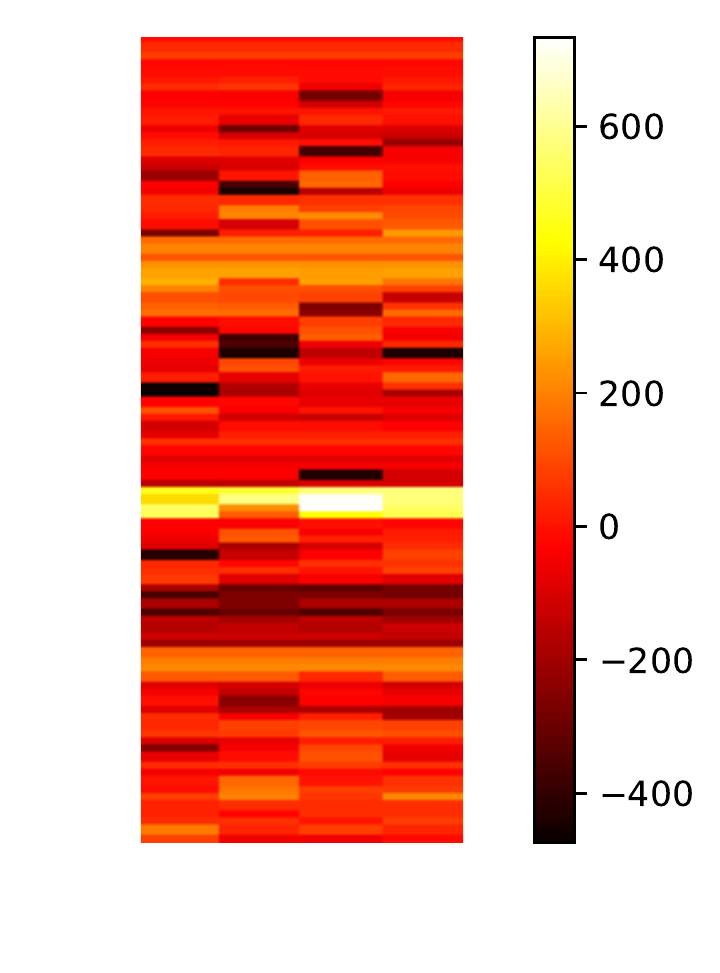}
     } &
\subfloat[$r_\mathrm{true}$]{%
       \includegraphics[width=0.16\linewidth]{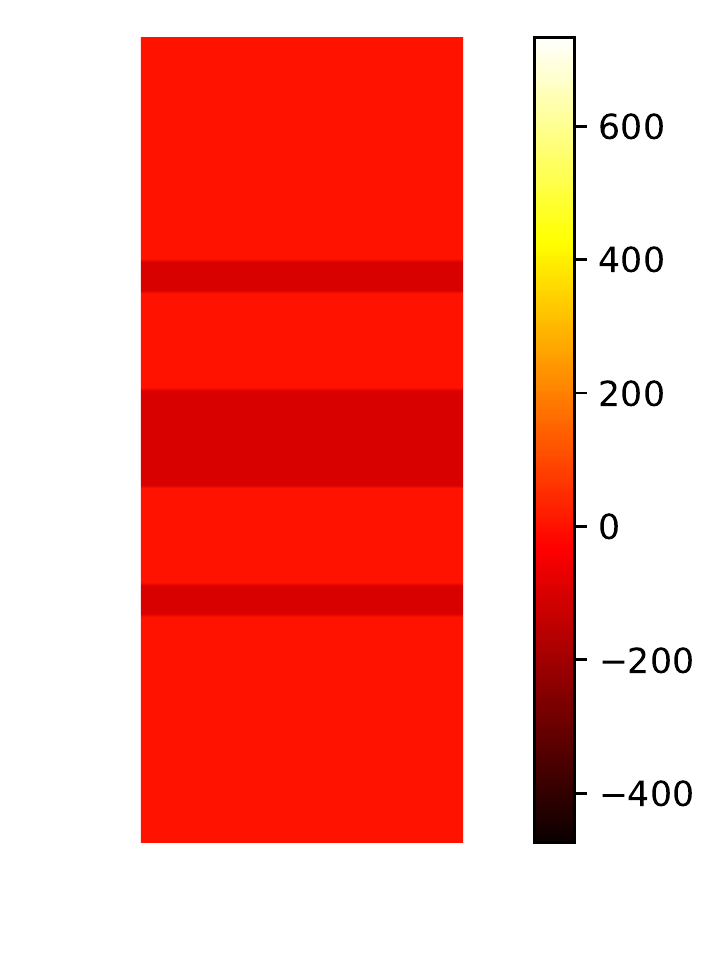}
     } &
\subfloat[$r - r_{\mathrm{true}}$]{%
       \includegraphics[width=0.16\linewidth]{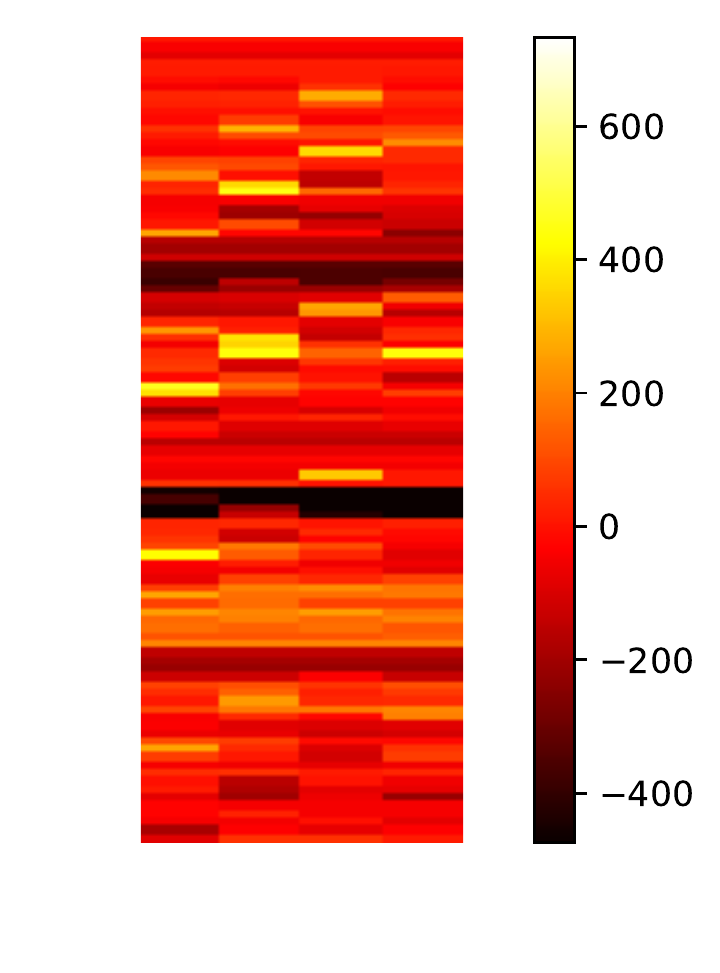}
     } &
\subfloat[$\pi^{T_{\mathrm{test}}}_{r_{\mathrm{true}}}$]{%
       \includegraphics[width=0.16\linewidth]{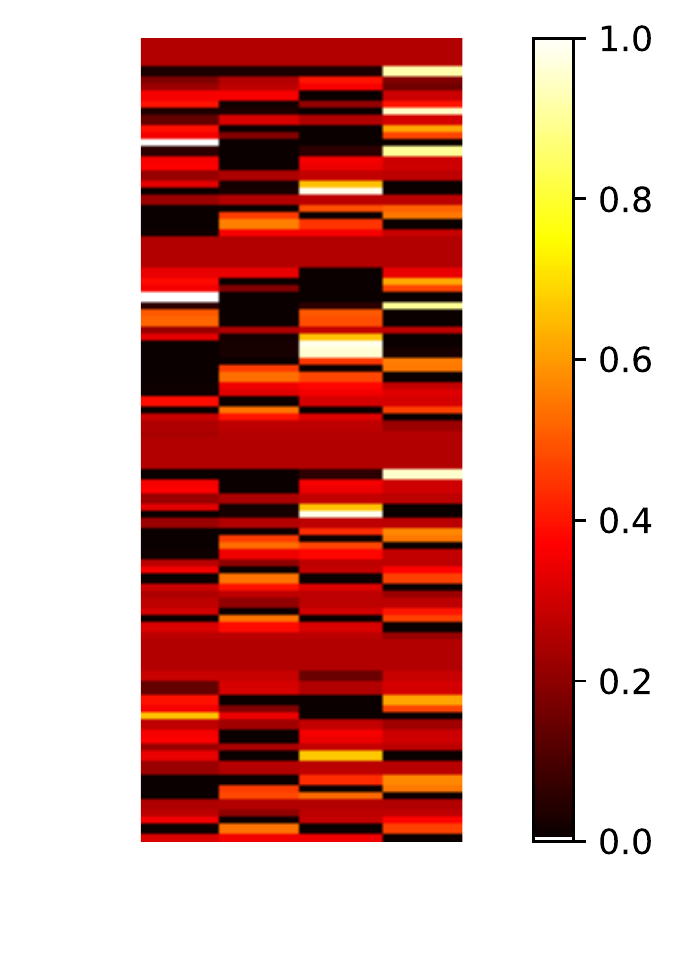}
     } &
\subfloat[$\pi^{T_{\mathrm{test}}}_{r_{\mathrm{true}}} - \pi^{T_{\mathrm{test}}}_r$]{%
       \includegraphics[width=0.16\linewidth]{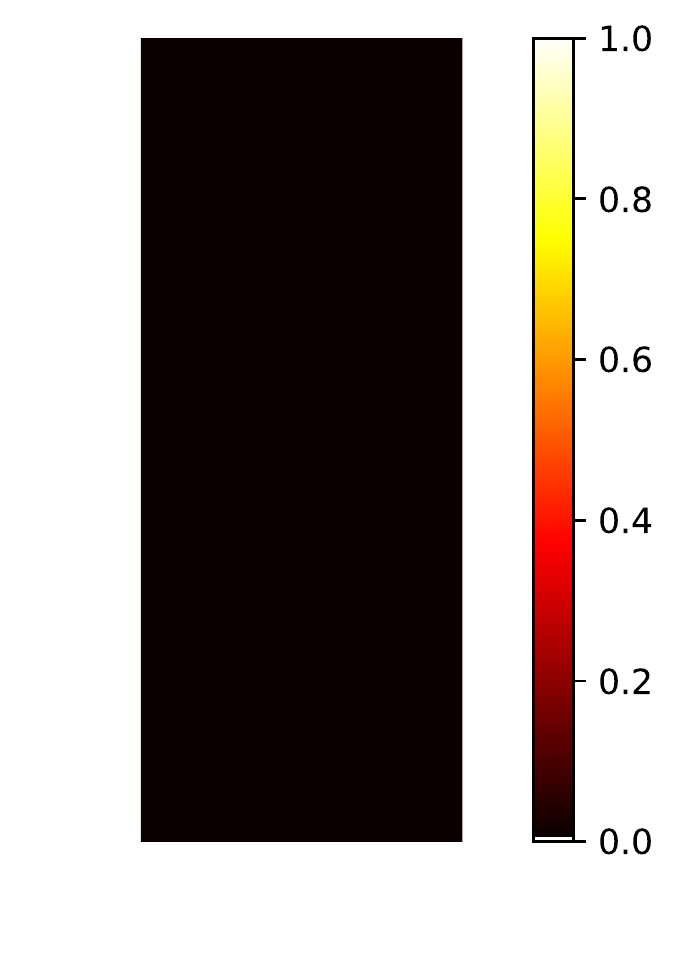}
    } \\
\end{tabular}
    \caption{Comparison between true and recovered reward ($r$ and $r_{\mathrm{true}}$) from $4$ experts in \texttt{WindyGridworld} with $|\mathcal{S}|=400$. We notice that the reward function is not identified (see (a), (b), (c)). However, when we use the recovered reward in subplot (a) to train an optimal policy under unseen dynamics we recover the optimal policy under the true reward in subplot (b). The subplot (d) shows the policy $\pi^{T_{\mathrm{test}}}_{r_{\mathrm{true}}}$ recovered from the true reward in a new environment $T_{\mathrm{test}}$ and (e) shows the difference between the policy recovered from $r_{\mathrm{true}}$ and from the recovered reward denoted as $\pi^{T_{\mathrm{test}}}_r$.}
\label{fig:windygridworld}
\vspace{-3mm}
\end{figure}

\paragraph{Experiments on \texttt{Strebulaev-Whited}}
The \texttt{Strebulaev-Whited} environment is the neoclassical investment model in which a firm has a Cobb-Douglas production function with decreasing returns to scale, as in  \cite{strebulaev2012dynamic}. The goal of the agent is to maximize profits discounted at rate $0<\gamma<1$. The state of the agent is defined by the capital level $k\geq 0$ and an exogenously given persistent stochastic productivity shock $z$. We can summarize the state by $s=(k,z)$. The next state $s^\prime=(k^\prime,z^\prime)$ is determined separately for $k^\prime$ and $z^\prime$. We have that $k'=(1-\delta)k+ak,$ where $\delta$ is the depreciation rate of physical capital and $a$ is today's rate of investment which is the action in the model. The variable $z^\prime$ evolves according to $\ln z'= \rho\ln z+\epsilon$ where $\epsilon\sim N(0,\sigma_{\epsilon})$.

The continuous variables $k$ and $z$ are discretized according to the scheme proposed in \cite{Tauchen:1986}. Hence, we obtain a discrete process with $K^2$ possible values for the state variable $s=(k,z)$ (so $|\mathcal{S}|=K^2$) and $K$ values for the action $a$. In the experiments in \Cref{fig:strebulaev} in \Cref{appendix:experiments} , we choose $K=20$ and consider two environments with different values of $\sigma_{\epsilon}$ set to $0.02$ and $0.04$, respectively. We observe that the rank of the identifiability matrix is $552$. Since $552 < 2|\mathcal{S}| -1 = 799$, the reward function is not identifiable up to a constant as expected in MDPs with exogenous states. Nonetheless, when we consider a third environment with $\sigma_{\epsilon}=0.6$, the generalizability condition in \Cref{thm:generalization} is satisfied. Hence, the expert behavior can be predicted in the third environment (see \Cref{fig:strebulaev_e} in \Cref{appendix:experiments} ).

\subsection{Identifiability experiments with a restricted reward class}
\looseness=-1
The final result presents a numerical validation of \Cref{thm:param} in the environment \texttt{Strebulaev-Whited} with exogenous state variable. In this model, the true reward function can be expressed as a linear combination of the three features given by $f_{s,a} = [z((1 - \delta)k + ak)^{\theta}, (1 - \delta)k, ak]^T$, where $s=(k,z)$ and the parameter $\rho\in(0,1)$ captures the curvature of the production function. We set $\rho={0.55}$. The first feature corresponds to the firm's output or sales which is available from the firm's income statement, the second feature is the firm's current capital stock net of depreciation which is available from the balance sheet, and the third feature is the level of investment that determines the future level of capital stock. The true reward function can be written as $r(s,a) = w^T f_{s,a}$ with $w = [1,1,-1]^T$. It can be interpreted as follows: the agent's reward of investment is an increase in output/sales, $w_1=1>0$, while the cost of capital is $1$ and, hence, investment is costly, $w_3=-1<0$. At the same time, the capital stock is valuable and can be liquidated at a price of $w_2=1>0$.

Knowing these features, we can verify that the rank of the matrix in \Cref{eq:paramR}, is $803$ which is equal to $2|S| + d$ in this environment ($|S|=400$ and $d=3$). Invoking \Cref{thm:param}, we can conclude that the reward function is identifiable exactly, which is verified numerically in \Cref{fig:lin_strebulaev} in \Cref{appendix:experiments}.
Expressing the reward in terms of features hence helps identifiability and interpretability.
\section{Conclusion}
In this paper, we analyze conditions that guarantee identifiability of the reward function (up to an additive constant) from multiple observed experts maximizing the same reward and facing different transition dynamics. This allows us to train optimal policies in any other environment sharing the same reward with the environments of the observed experts. On the other hand, in order to generalize to unknown environments, such strong reward identification is not required, and we provide a milder necessary and sufficient condition for generalizability. We also provide identifiability results in a variety of settings, i.e., linearly parameterized reward, approximated transition matrices, observation of any number of experts, as well as a non-identifiability result in the presence of exogenous variables. In the following, we list the main limitations of our work that will be the subject of future studies.

\paragraph{Observing experts in different environments.} We saw that observing a single expert in one environment cannot lead to reward identification in our setting. We hence need to observe at least two experts acting in different enough environments. To motivate this assumption, note that varying environments are ubiquitous in RL, in particular in Robust RL which deals with the training of experts that perform well in different environments, where the transition dynamics can vary to some extent. It is hence rather common to consider that the transition dynamics of a given environment can change. This was studied, e.g., in \cite{kamalaruban2020robust, pinto2017robust, tessler2019action, viano2022robust}, where the authors considered different Mujoco environments with varying friction coefficients, or object masses, which influence the dynamics. Also, instead of observing different experts in different environments, we could imagine that we observe a single expert in a single environment that varies over time (but with fixed reward), and that the expert adapts to these changes. Such observations would provide us optimal actions in environments with different transition dynamics, and thus our results would apply. This is of particular interest in economics/finance where the environment is in constant evolution.

\paragraph{Assuming entropy regularized experts.} When observing real world data, we have to face the fact that humans do not follow this idealized mathematical model. However, it turns out that our results still hold for the more general class of regularized MDPs \cite{Geist:2019} where we replace the entropy with any strongly convex function (see Appendix~\ref{app:regularized}). Whether the flexibility in the choice of the strongly convex regularizer allows to better capture real-world behaviors is an open question.

\section*{Acknowledgements}
This work has received financial support from the Enterprise for Society Center (E4S) and SNF project 100018\_192584. This work was supported by the Swiss National Science Foundation (SNSF) under  grant number 200021\_205011. This project has received funding from the European Research Council (ERC) under the European Union's Horizon 2020 research and innovation programme (grant agreement n° 725594 - time-data). 

\bibliographystyle{unsrt}
\bibliography{neurips}

\begin{thebibliography}{10}

\bibitem{cao2021identifiability}
Haoyang Cao, Samuel Cohen, and Lukasz Szpruch.
\newblock Identifiability in inverse reinforcement learning.
\newblock {\em Advances in Neural Information Processing Systems}, 34, 2021.

\bibitem{Knox:2021}
W.~Bradley Knox, Alessandro Allievi, Holger Banzhaf, Felix Schmitt, and Peter
  Stone.
\newblock Reward (mis)design for autonomous driving, 2021.

\bibitem{osa2018algorithmic}
T~Osa, J~Pajarinen, G~Neumann, JA~Bagnell, P~Abbeel, and J~Peters.
\newblock An algorithmic perspective on imitation learning.
\newblock {\em Foundations and Trends in Robotics}, 2018.

\bibitem{Charpentier:2020}
Arthur Charpentier, Romuald Elie, and Carl Remlinger.
\newblock Reinforcement learning in economics and finance, 2020.

\bibitem{vonneumann1947}
J.~von Neumann and O.~Morgenstern.
\newblock {\em Theory of games and economic behavior}.
\newblock Princeton University Press, 1947.

\bibitem{Pratt:1964}
John~W. Pratt.
\newblock Risk aversion in the small and in the large.
\newblock {\em Econometrica}, 32(1/2):122--136, 1964.

\bibitem{arrow1965aspects}
Kenneth~Joseph Arrow.
\newblock {\em Aspects of the theory of risk-bearing}.
\newblock Helsinki: Yrjo Jahnsonian Sa tio, 1965.

\bibitem{Kahneman:Tversky:1979}
Daniel Kahneman and Amos Tversky.
\newblock Prospect theory: An analysis of decision under risk.
\newblock {\em Econometrica}, 47(2):263--291, 1979.

\bibitem{Hansen:1982}
Lars~Peter Hansen.
\newblock Large sample properties of generalized method of moments estimators.
\newblock {\em Econometrica}, 50(4):1029--1054, 1982.

\bibitem{Rust:1987}
John Rust.
\newblock Optimal replacement of gmc bus engines: An empirical model of harold
  zurcher.
\newblock {\em Econometrica}, 55(5):999--1033, 1987.

\bibitem{Hotz:Miller:1993}
V.~Joseph Hotz and Robert~A. Miller.
\newblock Conditional choice probabilities and the estimation of dynamic
  models.
\newblock {\em The Review of Economic Studies}, 60(3):497--529, 1993.

\bibitem{Heckman:1979}
James~J. Heckman.
\newblock Sample selection bias as a specification error.
\newblock {\em Econometrica}, 47(1):153--161, 1979.

\bibitem{Angrist:Imbens:Rubin:1996}
Joshua~D. Angrist, Guido~W. Imbens, and Donald~B. Rubin.
\newblock Identification of causal effects using instrumental variables.
\newblock {\em Journal of the American Statistical Association},
  91(434):444--455, 1996.

\bibitem{Sutton:1998}
R.~S. Sutton and A.~G. Barto.
\newblock {\em Introduction to Reinforcement Learning}.
\newblock MIT Press, 1st edition, 1998.

\bibitem{Russell:1998}
Stuart Russell.
\newblock Learning agents for uncertain environments (extended abstract).
\newblock In {\em Annual Conference on Computational Learning Theory (COLT)},
  1998.

\bibitem{Ng:2000}
A.~Y. Ng and S.~J. Russell.
\newblock Algorithms for inverse reinforcement learning.
\newblock In {\em International Conference on Machine Learning (ICML)}, 2000.

\bibitem{piot2013learning}
Bilal Piot, Matthieu Geist, and Olivier Pietquin.
\newblock Learning from demonstrations: Is it worth estimating a reward
  function?
\newblock In {\em Joint European Conference on Machine Learning and Knowledge
  Discovery in Databases}, pages 17--32. Springer, 2013.

\bibitem{piot2016bridging}
Bilal Piot, Matthieu Geist, and Olivier Pietquin.
\newblock Bridging the gap between imitation learning and inverse reinforcement
  learning.
\newblock {\em IEEE transactions on neural networks and learning systems},
  28(8):1814--1826, 2016.

\bibitem{Ho:2016}
J.~Ho, J.~K. Gupta, and S.~Ermon.
\newblock Model-free imitation learning with policy optimization.
\newblock In {\em International Conference on Machine Learning (ICML)}, 2016.

\bibitem{Fu:2018}
Justin Fu, Katie Luo, and Sergey Levine.
\newblock Learning robust rewards with adverserial inverse reinforcement
  learning.
\newblock In {\em International Conference on Learning Representations (ICLR)},
  2018.

\bibitem{Amin:2016}
Kareem Amin and Satinder Singh.
\newblock Towards resolving unidentifiability in inverse reinforcement
  learning, 2016.

\bibitem{Amin:2017}
Kareem Amin, Nan Jiang, and Satinder Singh.
\newblock Repeated inverse reinforcement learning, 2017.

\bibitem{Likmeta:2021}
Amarildo Likmeta, Alberto~Maria Metelli, Giorgia Ramponi, Andrea Tirinzoni,
  Matteo Giuliani, and Marcello Restelli.
\newblock Dealing with multiple experts and non-stationarity in inverse
  reinforcement learning: an application to real-life problems.
\newblock {\em Machine Learning}, 110(9):2541--2576, 2021.

\bibitem{Devidze:2021}
Rati Devidze, Goran Radanovic, Parameswaran Kamalaruban, and Adish Singla.
\newblock Explicable reward design for reinforcement learning agents.
\newblock {\em Advances in Neural Information Processing Systems}, 34, 2021.

\bibitem{Jarrett:2021}
Daniel Jarrett, Ioana Bica, and Mihaela van~der Schaar.
\newblock Strictly batch imitation learning by energy-based distribution
  matching, 2021.

\bibitem{Ng:1999}
Andrew~Y Ng, Daishi Harada, and Stuart Russell.
\newblock Policy invariance under reward transformations: Theory and
  application to reward shaping.

\bibitem{Skalse:2021}
Joar Skalse, Matthew Farrugia-Roberts, Stuart Russell, Alessandro Abate, and
  Adam Gleave.
\newblock Invariance in policy optimisation and partial identifiability in
  reward learning, 2022.

\bibitem{Gleave:2020}
Adam Gleave, Michael Dennis, Shane Legg, Stuart Russell, and Jan Leike.
\newblock Quantifying differences in reward functions, 2020.

\bibitem{Mindermann:2018}
Stuart Armstrong and S\"{o}ren Mindermann.
\newblock Occam\textquotesingle s razor is insufficient to infer the
  preferences of irrational agents.
\newblock In S.~Bengio, H.~Wallach, H.~Larochelle, K.~Grauman, N.~Cesa-Bianchi,
  and R.~Garnett, editors, {\em Advances in Neural Information Processing
  Systems}, volume~31. Curran Associates, Inc., 2018.

\bibitem{Abbeel:2004}
P.~Abbeel and A.~Y. Ng.
\newblock Apprenticeship learning via inverse reinforcement learning.
\newblock In {\em International Conference on Machine Learning (ICML)}, 2004.

\bibitem{Ratliff:2006}
N.~D. Ratliff, J.~A. Bagnell, and M.~A. Zinkevich.
\newblock Maximum margin planning.
\newblock In {\em International Conference on Machine Learning (ICML)}, 2006.

\bibitem{Ziebart:2008}
B.~D. Ziebart, A.~Maas, J.~A. Bagnell, and A.~K. Dey.
\newblock Maximum entropy inverse reinforcement learning.
\newblock In {\em National Conference on Artificial Intelligence (AAAI)}, 2008.

\bibitem{ziebart2010modeling}
Brian~D Ziebart.
\newblock {\em Modeling purposeful adaptive behavior with the principle of
  maximum causal entropy}.
\newblock PhD thesis, Carnegie Mellon University, 2010.

\bibitem{boularias2011relative}
Abdeslam Boularias, Jens Kober, and Jan Peters.
\newblock Relative entropy inverse reinforcement learning.
\newblock In {\em Proc.\ Intl Conf.\ on Artificial Intelligence and Statistics
  ({AISTATS})}, 2011.

\bibitem{balakrishnan2020efficient}
Sreejith Balakrishnan, Quoc~Phong Nguyen, Bryan Kian~Hsiang Low, and Harold
  Soh.
\newblock Efficient exploration of reward functions in inverse reinforcement
  learning via bayesian optimization, 2020.

\bibitem{Ramachandran:2007}
D.~Ramachandran and E.~Amir.
\newblock Bayesian inverse reinforcement learning.
\newblock In {\em International Joint Conference on Artifical Intelligence
  (IJCAI)}, 2007.

\bibitem{Brown:2020d}
Daniel~S. Brown, Russell Coleman, Ravi Srinivasan, and Scott Niekum.
\newblock Safe imitation learning via fast bayesian reward inference from
  preferences, 2020.

\bibitem{Pirotta:2016}
Matteo Pirotta and Marcello Restelli.
\newblock Inverse reinforcement learning through policy gradient minimization.
\newblock {\em Proceedings of the AAAI Conference on Artificial Intelligence},
  30(1), Mar. 2016.

\bibitem{Ramponi:2020}
Giorgia Ramponi, Gianluca Drappo, and Marcello Restelli.
\newblock Inverse reinforcement learning from a gradient-based learner.
\newblock 2020.

\bibitem{Hwang:2022}
Rakhoon Hwang, Hanjin Lee, and Hyung~Ju Hwang.
\newblock Option compatible reward inverse reinforcement learning.
\newblock {\em Pattern Recognition Letters}, 154:83--89, 2022.

\bibitem{Metelli:2017}
Alberto~Maria Metelli, Matteo Pirotta, and Marcello Restelli.
\newblock Compatible reward inverse reinforcement learning.
\newblock In I.~Guyon, U.~Von Luxburg, S.~Bengio, H.~Wallach, R.~Fergus,
  S.~Vishwanathan, and R.~Garnett, editors, {\em Advances in Neural Information
  Processing Systems}, volume~30. Curran Associates, Inc., 2017.

\bibitem{gangwani2020stateonly}
Tanmay Gangwani and Jian Peng.
\newblock State-only imitation with transition dynamics mismatch.
\newblock In {\em Proc.\ Intl Conf.\ on Learning Representations ({ICLR})},
  2020.

\bibitem{torabi2018generative}
Faraz Torabi, Garrett Warnell, and Peter Stone.
\newblock Generative adversarial imitation from observation.
\newblock {\em arXiv preprint arXiv:1807.06158}, 2018.

\bibitem{Brown:2019}
Daniel~S. Brown, Wonjoon Goo, Prabhat Nagarajan, and Scott Niekum.
\newblock Extrapolating beyond suboptimal demonstrations via inverse
  reinforcement learning from observations, 2019.

\bibitem{Brown:2020c}
Daniel~S. Brown, Wonjoon Goo, and Scott Niekum.
\newblock Better-than-demonstrator imitation learning via automatically-ranked
  demonstrations.
\newblock In Leslie~Pack Kaelbling, Danica Kragic, and Komei Sugiura, editors,
  {\em Proceedings of the Conference on Robot Learning}, volume 100 of {\em
  Proceedings of Machine Learning Research}, pages 330--359. PMLR, 30 Oct--01
  Nov 2020.

\bibitem{Brown:2021}
Daniel~S Brown, Jordan Schneider, Anca Dragan, and Scott Niekum.
\newblock Value alignment verification.
\newblock In Marina Meila and Tong Zhang, editors, {\em Proceedings of the 38th
  International Conference on Machine Learning}, volume 139 of {\em Proceedings
  of Machine Learning Research}, pages 1105--1115. PMLR, 18--24 Jul 2021.

\bibitem{kim2021reward}
Kuno Kim, Shivam Garg, Kirankumar Shiragur, and Stefano Ermon.
\newblock Reward identification in inverse reinforcement learning.
\newblock In Marina Meila and Tong Zhang, editors, {\em Proceedings of the 38th
  International Conference on Machine Learning}, volume 139 of {\em Proceedings
  of Machine Learning Research}, pages 5496--5505. PMLR, 18--24 Jul 2021.

\bibitem{Dvijotham:2010}
K.~Dvijotham and E.~Todorov.
\newblock Inverse optimal control with linearly-solvable {MDP}s.
\newblock In {\em International Conference on Machine Learning (ICML)}, 2010.

\bibitem{abbeel2004apprenticeship}
Pieter Abbeel and Andrew~Y Ng.
\newblock Apprenticeship learning via inverse reinforcement learning.
\newblock In {\em Proc.\ Intl Conf.\ on Machine Learning ({ICML})}, 2004.

\bibitem{Levine:2011}
S.~Levine, Z.~Popovi\'{c}, and V.~Koltun.
\newblock Nonlinear inverse reinforcement learning with {G}aussian processes.
\newblock In {\em Advances in Neural Information Processing Systems (NeurIPS)},
  2011.

\bibitem{Metelli:2021}
Alberto~Maria Metelli, Giorgia Ramponi, Alessandro Concetti, and Marcello
  Restelli.
\newblock Provably efficient learning of transferable rewards.
\newblock In {\em International Conference on Machine Learning}, pages
  7665--7676. PMLR, 2021.

\bibitem{Everitt:2016}
Tom Everitt and Marcus Hutter.
\newblock Avoiding wireheading with value reinforcement learning.
\newblock In {\em International Conference on Artificial General Intelligence},
  pages 12--22. Springer, 2016.

\bibitem{Amodei:2016}
Dario Amodei, Chris Olah, Jacob Steinhardt, Paul Christiano, John Schulman, and
  Dan Mané.
\newblock Concrete problems in ai safety, 2016.

\bibitem{Leike:2018}
Jan Leike, David Krueger, Tom Everitt, Miljan Martic, Vishal Maini, and Shane
  Legg.
\newblock Scalable agent alignment via reward modeling: a research direction,
  2018.

\bibitem{srinivasan2020interpretable}
Srivatsan Srinivasan and Finale Doshi-Velez.
\newblock Interpretable batch irl to extract clinician goals in icu hypotension
  management.
\newblock {\em AMIA Summits on Translational Science Proceedings}, 2020:636,
  2020.

\bibitem{jarrett2021inverse}
Daniel Jarrett, Alihan H{\"u}y{\"u}k, and Mihaela Van Der~Schaar.
\newblock Inverse decision modeling: Learning interpretable representations of
  behavior.
\newblock In {\em International Conference on Machine Learning}, pages
  4755--4771. PMLR, 2021.

\bibitem{Abel:2021}
David Abel, Will Dabney, Anna Harutyunyan, Mark~K Ho, Michael Littman, Doina
  Precup, and Satinder Singh.
\newblock On the expressivity of markov reward.
\newblock In M.~Ranzato, A.~Beygelzimer, Y.~Dauphin, P.S. Liang, and J.~Wortman
  Vaughan, editors, {\em Advances in Neural Information Processing Systems},
  volume~34, pages 7799--7812. Curran Associates, Inc., 2021.

\bibitem{paul1938note}
Samuelson Paul.
\newblock A note on the pure theory of consumer behavior.
\newblock {\em Economica}, 5(17):61--71, 1938.

\bibitem{samuelson1948consumption}
Paul~A Samuelson.
\newblock Consumption theory in terms of revealed preference.
\newblock {\em Economica}, 15(60):243--253, 1948.

\bibitem{demuynck2019samuelson}
Thomas Demuynck and Per Hjertstrand.
\newblock Samuelson’s approach to revealed preference theory: Some recent
  advances.
\newblock {\em Paul Samuelson}, pages 193--227, 2019.

\bibitem{echenique2020new}
Federico Echenique.
\newblock New developments in revealed preference theory: decisions under risk,
  uncertainty, and intertemporal choice.
\newblock {\em Annual Review of Economics}, 12:299--316, 2020.

\bibitem{tobin1978proposal}
James Tobin.
\newblock A proposal for international monetary reform.
\newblock {\em Eastern economic journal}, 4(3/4):153--159, 1978.

\bibitem{Joshi:2013}
S.~Joshi, R.~Khardon, P.~Tadepalli, A.~Raghavan, and A.~Fern.
\newblock Solving relational mdps with exogenous events and additive rewards,
  2013.

\bibitem{Dietterich:2018}
Thomas~G. Dietterich, George Trimponias, and Zhitang Chen.
\newblock Discovering and removing exogenous state variables and rewards for
  reinforcement learning, 2018.

\bibitem{McGregor:2017}
Sean McGregor, Rachel Houtman, Claire Montgomery, Ronald Metoyer, and Thomas~G.
  Dietterich.
\newblock Factoring exogenous state for model-free monte carlo, 2017.

\bibitem{Liu:2021}
Vincent Liu, James Wright, and Martha White.
\newblock Exploiting action impact regularity and exogenous state variables for
  offline reinforcement learning, 2021.

\bibitem{ng2000algorithms}
Andrew~Y Ng and Stuart Russell.
\newblock Algorithms for inverse reinforcement learning.
\newblock In {\em Proc.\ Intl Conf.\ on Machine Learning ({ICML})}, 2000.

\bibitem{ratliff2006maximum}
Nathan~D Ratliff, J~Andrew Bagnell, and Martin~A Zinkevich.
\newblock Maximum margin planning.
\newblock In {\em Proc.\ Intl Conf.\ on Machine Learning ({ICML})}, 2006.

\bibitem{syed2008apprenticeship}
U.~Syed, M.~Bowling, and R.E. Schapire.
\newblock Apprenticeship learning using linear programming.
\newblock In {\em International Conference on Machine Learning (ICML)}, 2008.

\bibitem{heim2019practitioner}
Eric Heim.
\newblock A practitioner's guide to maximum causal entropy inverse
  reinforcement learning, starting from markov decision processes.
\newblock Technical report, CARNEGIE-MELLON UNIV PITTSBURGH PA PITTSBURGH
  United States, 2019.

\bibitem{Kearns:1998}
Michael Kearns.
\newblock Near-optimal reinforcement learning in polynomial time.
\newblock In {\em Machine Learning}, pages 260--268. Morgan Kaufmann, 1998.

\bibitem{Azar:2012}
Mohammad~Gheshlaghi Azar, Remi Munos, and Bert Kappen.
\newblock On the sample complexity of reinforcement learning with a generative
  model, 2012.

\bibitem{Hsu:2012}
Daniel Hsu, Sham~M. Kakade, and Tong Zhang.
\newblock Random design analysis of ridge regression.
\newblock In Shie Mannor, Nathan Srebro, and Robert~C. Williamson, editors,
  {\em Proceedings of the 25th Annual Conference on Learning Theory}, volume~23
  of {\em Proceedings of Machine Learning Research}, pages 9.1--9.24,
  Edinburgh, Scotland, 25--27 Jun 2012. PMLR.

\bibitem{Hsu:2011}
Daniel Hsu, Sham~M. Kakade, and Tong Zhang.
\newblock Dimension-free tail inequalities for sums of random matrices, 2011.

\bibitem{strebulaev2012dynamic}
Ilya~A Strebulaev and Toni~M Whited.
\newblock Dynamic models and structural estimation in corporate finance.
\newblock {\em Final pre-publication version, published in Foundations and
  Trends in Finance}, 6:1--163, 2012.

\bibitem{Tauchen:1986}
George Tauchen.
\newblock {Finite state markov-chain approximations to univariate and vector
  autoregressions}.
\newblock {\em Economics Letters}, 20(2):177--181, 1986.

\bibitem{kamalaruban2020robust}
Parameswaran Kamalaruban, Yu-Ting Huang, Ya-Ping Hsieh, Paul Rolland, Cheng
  Shi, and Volkan Cevher.
\newblock Robust reinforcement learning via adversarial training with langevin
  dynamics.
\newblock In {\em Advances in Neural Information Processing Systems
  ({NeurIPS})}, 2020.

\bibitem{pinto2017robust}
Lerrel Pinto, James Davidson, Rahul Sukthankar, and Abhinav Gupta.
\newblock Robust adversarial reinforcement learning.
\newblock In {\em Proc.\ Intl Conf.\ on Machine Learning ({ICML})}, 2017.

\bibitem{tessler2019action}
Chen Tessler, Yonathan Efroni, and Shie Mannor.
\newblock Action robust reinforcement learning and applications in continuous
  control.
\newblock In {\em Proc.\ Intl Conf.\ on Machine Learning ({ICML})}, 2019.

\bibitem{viano2022robust}
Luca Viano, Yu-Ting Huang, Parameswaran Kamalaruban, Craig Innes, Subramanian
  Ramamoorthy, and Adrian Weller.
\newblock Robust learning from observation with model misspecification.
\newblock {\em arXiv preprint arXiv:2202.06003}, 2022.

\bibitem{Geist:2019}
Matthieu Geist, Bruno Scherrer, and Olivier Pietquin.
\newblock {A Theory of Regularized Markov Decision Processes}.
\newblock In {\em International Conference on Machine Learning (ICML)}, 2019.

\bibitem{tao2010254a}
Terence Tao.
\newblock 254a, notes 3a: Eigenvalues and sums of hermitian matrices.
\newblock {\em Terence Tao’s blog}, 2010.

\bibitem{viano2021robust}
Luca Viano, Yu-Ting Huang, Parameswaran Kamalaruban, Adrian Weller, and Volkan
  Cevher.
\newblock Robust inverse reinforcement learning under transition dynamics
  mismatch.
\newblock {\em Advances in Neural Information Processing Systems},
  34:25917--25931, 2021.

\end{thebibliography}
\section*{Checklist}

\if 0
The checklist follows the references.  Please
read the checklist guidelines carefully for information on how to answer these
questions.  For each question, change the default \answerTODO{} to \answerYes{},
\answerNo{}, or \answerNA{}.  You are strongly encouraged to include a {\bf
justification to your answer}, either by referencing the appropriate section of
your paper or providing a brief inline description.  For example:
\begin{itemize}
  \item Did you include the license to the code and datasets? \answerYes{See Section~\ref{gen_inst}.}
  \item Did you include the license to the code and datasets? \answerNo{The code and the data are proprietary.}
  \item Did you include the license to the code and datasets? \answerNA{}
\end{itemize}
Please do not modify the questions and only use the provided macros for your
answers.  Note that the Checklist section does not count towards the page
limit.  In your paper, please delete this instructions block and only keep the
Checklist section heading above along with the questions/answers below.
\fi

\begin{enumerate}

\item For all authors...
\begin{enumerate}
  \item Do the main claims made in the abstract and introduction accurately reflect the paper's contributions and scope?
    \answerYes{}
  \item Did you describe the limitations of your work?
    \answerYes{}
  \item Did you discuss any potential negative societal impacts of your work?
    \answerNo{}
  \item Have you read the ethics review guidelines and ensured that your paper conforms to them?
    \answerYes{}
\end{enumerate}

\item If you are including theoretical results...
\begin{enumerate}
  \item Did you state the full set of assumptions of all theoretical results?
    \answerYes{}
        \item Did you include complete proofs of all theoretical results?
    \answerYes{}
\end{enumerate}

\item If you ran experiments...
\begin{enumerate}
  \item Did you include the code, data, and instructions needed to reproduce the main experimental results (either in the supplemental material or as a URL)?
    \answerYes{}
  \item Did you specify all the training details (e.g., data splits, hyperparameters, how they were chosen)?
    \answerYes{}
        \item Did you report error bars (e.g., with respect to the random seed after running experiments multiple times)?
    \answerNA{}
        \item Did you include the total amount of compute and the type of resources used (e.g., type of GPUs, internal cluster, or cloud provider)?
    \answerYes{See the supplementary.}
\end{enumerate}

\item If you are using existing assets (e.g., code, data, models) or curating/releasing new assets...
\begin{enumerate}
  \item If your work uses existing assets, did you cite the creators?
    \answerNA{}
  \item Did you mention the license of the assets?
    \answerNA{}
  \item Did you include any new assets either in the supplemental material or as a URL?
    \answerNA{}
  \item Did you discuss whether and how consent was obtained from people whose data you're using/curating?
    \answerNA{}
  \item Did you discuss whether the data you are using/curating contains personally identifiable information or offensive content?
    \answerNA{}
\end{enumerate}

\item If you used crowdsourcing or conducted research with human subjects...
\begin{enumerate}
  \item Did you include the full text of instructions given to participants and screenshots, if applicable?
    \answerNA{}
  \item Did you describe any potential participant risks, with links to Institutional Review Board (IRB) approvals, if applicable?
    \answerNA{}
  \item Did you include the estimated hourly wage paid to participants and the total amount spent on participant compensation?
    \answerNA{}
\end{enumerate}

\end{enumerate}

\newpage


\appendix

\section{Proofs} \label{app:A}

We provide hereafter the proofs of the statements made in the main body. 

\if 0
A property that will be used several times is that recovering the reward up to a constant is equivalent to recovering the value vector up to a constant. Indeed according to Theorem~\ref{thm:single_expert}, if $v$ is the value vector associated with reward $r$, discount factor $\gamma$ and expert policy $\pi$, then 
\[
r(\cdot,a) &= \lambda \log \pi^1(a|\cdot) + (T_a-\gamma I) v.
\]
Since the matrices $T_a$ are transition matrices, their rows must sum to $1$, which can be written as $T_a \mathbf{1} = \mathbf{1}$ where $\mathbf{1}$ is a $\S$ dimensional column vector of $1$'s. Hence, $\mathbf{1}$ is an eigenvector of $T_a$ with eigenvalue $1$. 
\fi

\subsection{Proof of Theorem~\ref{thm:identification}} \label{app:A1}

Let $r \in \R^{|\S|\times |\A|}$ be the reward function common to the two experts, and let $v^1, v^2 \in \R^{|\S|}$ be the entropy regularized values functions associated experts 1 and 2 respectively and the reward function $r$. Then, according to Theorem~\ref{thm:single_expert}, we have that $\forall (s,a)\in\S\times\A$,
\begin{align*}
    r(s,a) &= \lambda \log \pi^1(a|s) - \gamma_1 \sum_{s' \in \S} T^1(s'|s,a) v^1(s') + v^1(s) \\
    &= \lambda \log \pi^2(a|s) - \gamma_2 \sum_{s' \in \S} T^2(s'|s,a) v^2(s') + v^2(s)
\end{align*}
where $\pi^1, \pi^2$ denote the policies of experts 1 and 2 respectively. We hence deduce that $\forall a\in\A$,
\begin{equation}
    \begin{pmatrix}
    I - \gamma_1 T_a^1 & -(I - \gamma_2 T_a^2)
    \end{pmatrix}
    \begin{pmatrix}
    v^1 \\
    v^2
    \end{pmatrix}
    = 
    \lambda \log \pi^2(a|\cdot) - \lambda \log \pi^1(a|\cdot).
\end{equation}

By including all available actions to the experts, we can write
\begin{equation} \label{eq:A1_appendix}
    \begin{pmatrix}
I - \gamma_1 T_{a_1}^1 & -(I - \gamma_2 T_{a_1}^2) \\
\vdots & \vdots \\
I - \gamma_1 T_{a_{|\mathcal{A}}|}^1 & -(I - \gamma_2 T_{a_{|\mathcal{A}|}}^2)
\end{pmatrix}
\begin{pmatrix}
    v^1 \\
    v^2
    \end{pmatrix}
    =
    \begin{pmatrix}
    \lambda \log \pi^2(\cdot|a_1) - \lambda \log \pi^1(\cdot|a_1) \\
    \vdots \\
    \lambda \log \pi^2(\cdot|a_{|\A|}) - \lambda \log \pi^1(\cdot|a_{|\A|})
    \end{pmatrix}.
\end{equation}

Reward identifiability is directly related to the size of the solution space of the linear system~\eqref{eq:A1_appendix}. Since we assume that both experts are optimal with respect to a \textit{true} reward function $r$, we know that the associated value function solves equation~\eqref{eq:A1_appendix}, and hence that this system is feasible. The solution space then depends on the rank of the matrix in the left hand side of ~\eqref{eq:A1_appendix}, which we denote by $A$.

We first show that there always exists an eigenvector of $A$ associated with eigenvalue $0$. Indeed, since the matrices $T_a$ are transition matrices, their rows must sum to $1$, which can be written as $T_a \mathbf{1} = \mathbf{1}$ where $\mathbf{1}$ is a $\S$ dimensional column vector of $1$'s. Hence,

\begin{align*}
    A
    \begin{pmatrix}
    \frac{1}{1-\gamma_1} \mathbf{1} \\
    \frac{1}{1-\gamma_2} \mathbf{1}
    \end{pmatrix}
    = 
    \begin{pmatrix}
    \frac{1}{1-\gamma_1} (\mathbf{1} - \gamma_1 T^1_{a_1} \mathbf{1}) - \frac{1}{1-\gamma_2} (\mathbf{1} - \gamma_2 T^2_{a_1} \mathbf{1}) \\
    \vdots \\
    \frac{1}{1-\gamma_1} (\mathbf{1} - \gamma_1 T^1_{a_{|\A|}} \mathbf{1}) - \frac{1}{1-\gamma_2} (\mathbf{1} - \gamma_2 T^2_{a_{|\A|}} \mathbf{1})
    \end{pmatrix}
    = \textbf{0}
\end{align*}

Hence, the vector $\begin{pmatrix}
    \frac{1}{1-\gamma_1} \mathbf{1} \\
    \frac{1}{1-\gamma_2} \mathbf{1}
    \end{pmatrix}$ is an eigenvector of $A$ with eigenvalue $0$, and corresponds to the invariance of the optimal policy under addition of a constant to the reward function.
    
Suppose now that $\text{rank}(A) = 2|\S| - 1$. Since $A$ has $2\S$ columns, this implies that the only eigenvector with eigenvalue $0$ is $\begin{pmatrix}
    \frac{1}{1-\gamma_1} \mathbf{1} \\
    \frac{1}{1-\gamma_2} \mathbf{1}
    \end{pmatrix}$, and thus that we can recover the value function $v^1$ (or $v^2$ equivalently) up to an additive constant. Using Theorem~\ref{thm:single_expert} again, it implies that we can also recover the reward function up to a constant.
    
On the other hand, suppose that $\text{rank}(A) < 2|\S| - 1$. Then, there exists another vector in $\text{Ker}(A)$ which is linearly independent of $\begin{pmatrix}
    \frac{1}{1-\gamma_1} \mathbf{1} \\
    \frac{1}{1-\gamma_2} \mathbf{1}
    \end{pmatrix}$, and whose addition to the value function would not change the optimal policy. However, it is easy to check that the only eigenvector of $A$ with eigenvalue $0$ of the form $\begin{pmatrix}
    c_1 \mathbf{1} \\
    c_2 \mathbf{1}
    \end{pmatrix}$ with $c_1, c_2 \in \R$ is proportional to $\begin{pmatrix}
    \frac{1}{1-\gamma_1} \mathbf{1} \\
    \frac{1}{1-\gamma_2} \mathbf{1}
    \end{pmatrix}$. Hence, any other vector in $\text{Ker}(A)$ would induce a modification of the value and reward functions more complex than just adding a constant. The provided condition is hence also necessary.

\paragraph{Equivalence with Definition~\ref{def:distinguishability}.} It turns out that our rank condition~\eqref{eq:7.1} is equivalent to the value-distinguishing assumption of Definition~\ref{def:distinguishability}. To show this, we first notice that, if $v^1, v^2$ satisfy equation~\eqref{eq:disting}, and if $v^1$ is a constant vector, then $v^2$ must also be a constant vector, and vice versa. Indeed, equation~\eqref{eq:disting} can be written as
\[
(I - \gamma_1 T^1_a) v^1 = (I - \gamma_2 T^2_a) v^2 \ \forall a\in \A.
\]
Since, $\forall a\in\A, i=1,2$, $\mathbf{1}$ is an eigenvector of $T_a^i$ with eigenvalue 1, then $\mathbf{1}$ is also an eigenvector of $(I-\gamma_2 T_a^2)^{-1}$ with eigenvalue $\frac{1}{1-\gamma_2}$. Hence, if $v^1 = c\mathbf{1}$ is a constant vector, then $v^2 = (I-\gamma_2 T_a^2)^{-1} (I - \gamma_1 T^1_a) v^1 = c\frac{1-\gamma_1}{1-\gamma_2} \mathbf{1}$ is also a constant vector, and the associated constant is determined by the constant of $v^1$. Thus, the condition of Definition~\ref{def:distinguishability} can be rewritten as
\[
(I - \gamma_1 T^1_a) v^1 = (I - \gamma_2 T^2_a) v^2 \ \forall a\in \A \Rightarrow (v^1, v^2) = (c\mathbf{1}, c\frac{1-\gamma_1}{1-\gamma_2} \mathbf{1}) \text{ for some } c\in \R.
\]
This is hence equivalent to
\[
\text{dim} \left(\text{Ker}
\begin{pmatrix}
I - \gamma_1 T_{a_1}^1 & I - \gamma_2 T_{a_1}^2 \\
\vdots & \vdots \\
I - \gamma_1 T_{a_{|\mathcal{A}}|}^1 & I - \gamma_2 T_{a_{|\mathcal{A}|}}^2
\end{pmatrix} \right)
= 1.
\]
which is equivalent to equation~\eqref{eq:7.1}.

\subsection{Proof of Corollary~\ref{cor:3experts}}

Let $v^1, v^2, v^3 \in \R^{|\S|}$ be the entropy regularized value functions associated with experts $1, 2$ and $3$ respectively. Following the proof of Theorem~\ref{thm:identification}, these vectors must satisfy

\begin{equation} \label{eq:3experts.1}
    \begin{pmatrix}
I - \gamma_1 T_{a_1}^1 & -(I - \gamma_2 T_{a_1}^2) & \textbf{0} \\
\vdots & \vdots & \vdots \\
I - \gamma_1 T_{a_{|\mathcal{A}|}}^1 & -(I - \gamma_2 T_{a_{|\mathcal{A}|}}^2) & \textbf{0} \\
I - \gamma_1 T_{a_1}^1 & \textbf{0} & -(I - \gamma_3 T_{a_1}^3) \\
\vdots & \vdots & \vdots \\
I - \gamma_1 T_{a_{|\mathcal{A}|}}^1 & \textbf{0} & -(I - \gamma_3 T_{a_{|\mathcal{A}|}}^3)
\end{pmatrix}
\begin{pmatrix}
    v^1 \\
    v^2 \\
    v^3
    \end{pmatrix}
    =
    \begin{pmatrix}
    \lambda \log \pi^2(\cdot|a_1) - \lambda \log \pi^1(\cdot|a_1) \\
    \vdots \\
    \lambda \log \pi^2(\cdot|a_{|\A|}) - \lambda \log \pi^1(\cdot|a_{|\A|}) \\
    \lambda \log \pi^3(\cdot|a_1) - \lambda \log \pi^1(\cdot|a_1) \\
    \vdots \\
    \lambda \log \pi^3(\cdot|a_{|\A|}) - \lambda \log \pi^1(\cdot|a_{|\A|})
    \end{pmatrix}.
\end{equation}

Similarly as previously, we can easily show that the vector $\begin{pmatrix}
    \frac{1}{1-\gamma_1} \mathbf{1} \\
    \frac{1}{1-\gamma_2} \mathbf{1} \\
    \frac{1}{1-\gamma_3} \mathbf{1}
    \end{pmatrix} \in \text{Ker}(A')$, where $A'$ denotes the matrix on the left of equation~\eqref{eq:3experts.1} In order for the reward to be recovered up to a constant, we hence need that there is no other linearly independent vector in $\text{Ker}(A')$, i.e., that $\text{rank}(A') = 3|\S| - 1$.


\subsection{Proof of Corollary~\ref{cor:2}}

We want to show that

\begin{equation}\label{eq:Cor2.0}
\text{dim}\left(\text{Ker}
\begin{pmatrix}
I - \gamma_1 T_{a_1} & -(I - \gamma_2 T_{a_1}) \\
\vdots & \vdots \\
I - \gamma_1 T_{a_{|\mathcal{A}}|} & -(I - \gamma_2 T_{a_{|\mathcal{A}|}})
\end{pmatrix} \right)
= 1.
\end{equation}

Suppose that $\begin{pmatrix} v^1 \\ v^2 \end{pmatrix} \in \text{Ker}
\begin{pmatrix}
I - \gamma_1 T_{a_1} & -(I - \gamma_2 T_{a_1}) \\
\vdots & \vdots \\
I - \gamma_1 T_{a_{|\mathcal{A}}|} & -(I - \gamma_2 T_{a_{|\mathcal{A}|}})
\end{pmatrix}$, i.e.,

\begin{equation} \label{eq:Cor2.1}
(I - \gamma_1 T_a)v^1 = (I - \gamma_2 T_a)v^2 \ \forall a\in \A,
\end{equation}
or equivalently
\begin{equation} \label{eq:Cor2.2}
v^1 - v^2 = T_a(\gamma_1 v^1 - \gamma_2v^2) \ \forall a\in \A.
\end{equation}
Subtracting equation~\eqref{eq:Cor2.2} for $a=a_1$ and $a=a_i$, we get
\begin{equation} \label{eq:Cor2.3}
(T_{a_1} - T_{a_i})(\gamma_1 v^1 - \gamma_2v^2) = 0 \ \forall i.
\end{equation}

Using equation~\eqref{eq:7.1.2} and the fact that the vector $\bf{1} \in \R^{|\S|}$ always belongs to $\text{Ker}
    \begin{pmatrix}
    T_{a_1} - T_{a_2} \\
    \vdots \\
    T_{a_1} - T_{a_{|\mathcal{A}|}}
    \end{pmatrix}$, we have that $\text{Ker}
    \begin{pmatrix}
    T_{a_1} - T_{a_2} \\
    \vdots \\
    T_{a_1} - T_{a_{|\mathcal{A}|}}
    \end{pmatrix} = \text{Span}(\bf{1})$. Thus, we deduce from equation~\eqref{eq:Cor2.3} that 
    
\begin{equation} \label{eq:Cor2.4}
\gamma_1 v^1 - \gamma_2v^2 = c\bf{1}
\end{equation} 
for some $c\in \R$. Using the fact that for any $a\in \A$, $\bf{1}$ is an eigenvector of $T_a$ with eigenvalue $1$, we deduce from~\eqref{eq:Cor2.2} and~\eqref{eq:Cor2.4} that

\begin{equation} \label{eq:Cor2.5}
v^1 - v^2 =  T_a c\textbf{1} = c\textbf{1}.
\end{equation}
Solving equations~\eqref{eq:Cor2.4} and~\eqref{eq:Cor2.5} for $v^1$ and $v^2$, we find $v^1 = \frac{c(1-\gamma_2)}{\gamma_1 - \gamma_2}\bf{1}$ and $v^2 = \frac{c(1-\gamma_1)}{\gamma_2 - \gamma_1}\bf{1}$. Therefore,

\[
\text{Ker}
\begin{pmatrix}
I - \gamma_1 T_{a_1}^1 & -(I - \gamma_2 T_{a_1}^2) \\
\vdots & \vdots \\
I - \gamma_1 T_{a_{|\mathcal{A}}|}^1 & -(I - \gamma_2 T_{a_{|\mathcal{A}|}}^2)
\end{pmatrix} = \left\{\begin{pmatrix} v^1 \\ v^2 \end{pmatrix}: v^1 = \frac{c(1-\gamma_2)}{\gamma_1 - \gamma_2}\textbf{1}, v^2 = \frac{c(1-\gamma_1)}{\gamma_2 - \gamma_1}\textbf{1} \text{ for } c \in \R\right\}
\]

which shows that condition~\eqref{eq:Cor2.0} holds. On the other hand, if condition~\eqref{eq:7.1.2} does not hold, then $\text{Ker}
    \begin{pmatrix}
    T_{a_1} - T_{a_2} \\
    \vdots \\
    T_{a_1} - T_{a_{|\mathcal{A}|}}
    \end{pmatrix}$ contains another vector $v_0$ which is not a constant vector, so the reward cannot be recovered up to a constant.

\subsection{Proof of Theorem~\ref{thm:param}}

Suppose that $\mathbf{1} \in \text{Im} \begin{pmatrix} f_{a_1} \\ \vdots \\ f_{a_{|\A|}} \end{pmatrix}$, i.e., $\exists w\in\R^d$ such that $\begin{pmatrix} f_{a_1} \\ \vdots \\ f_{a_{|\A|}} \end{pmatrix} w = \mathbf{1}$. This implies that

\begin{equation}\label{eq:param1}
\begin{pmatrix}
I - \gamma_1 T_{a_1}^1 & I - \gamma_2 T_{a_1}^2 & \textbf{0} \\
\vdots & \vdots & \vdots \\
I - \gamma_1 T_{a_{|\mathcal{A}|}}^1 & I - \gamma_2 T_{a_{|\mathcal{A}|}}^2 & \textbf{0} \\
I - \gamma_1 T_{a_1}^1 & \textbf{0} & f_{a_1} \\
\vdots & \vdots & \vdots \\
I - \gamma_1 T_{a_{|\mathcal{A}|}}^1 & \textbf{0} & f_{a_{|\A|}}
\end{pmatrix} \begin{pmatrix} \frac{1}{1-\gamma_1}\mathbf{1} \\ -\frac{1}{1-\gamma_2}\mathbf{1} \\ -w \end{pmatrix}
= \mathbf{0}.
\end{equation}

Suppose that condition~\eqref{eq:paramR} holds, i.e., that

\begin{equation}\label{eq:param2}
\text{dim}\left(\text{Ker}
\begin{pmatrix}
I - \gamma_1 T_{a_1}^1 & I - \gamma_2 T_{a_1}^2 & \textbf{0} \\
\vdots & \vdots & \vdots \\
I - \gamma_1 T_{a_{|\mathcal{A}|}}^1 & I - \gamma_2 T_{a_{|\mathcal{A}|}}^2 & \textbf{0} \\
I - \gamma_1 T_{a_1}^1 & \textbf{0} & f_{a_1} \\
\vdots & \vdots & \vdots \\
I - \gamma_1 T_{a_{|\mathcal{A}|}}^1 & \textbf{0} & f_{a_{|\A|}}
\end{pmatrix} \right)
= 1.
\end{equation}

Equations~\eqref{eq:param1} and~\eqref{eq:param2} thus imply that 

\begin{equation}
\text{Ker}
\begin{pmatrix}
I - \gamma_1 T_{a_1}^1 & I - \gamma_2 T_{a_1}^2 & \textbf{0} \\
\vdots & \vdots & \vdots \\
I - \gamma_1 T_{a_{|\mathcal{A}|}}^1 & I - \gamma_2 T_{a_{|\mathcal{A}|}}^2 & \textbf{0} \\
I - \gamma_1 T_{a_1}^1 & \textbf{0} & f_{a_1} \\
\vdots & \vdots & \vdots \\
I - \gamma_1 T_{a_{|\mathcal{A}|}}^1 & \textbf{0} & f_{a_{|\A|}}
\end{pmatrix}
= \text{Span}\begin{pmatrix} \frac{1}{1-\gamma_1}\mathbf{1} \\ -\frac{1}{1-\gamma_2}\mathbf{1} \\ -w \end{pmatrix}.
\end{equation}

This means that for any $v^1, v^2$ satisfying $(I - \gamma_1T_a^1) v^1 = (I - \gamma_1T_a^2) v^2 \ \forall a\in \A$ and such that $\exists w \in \R^d$, $(I - \gamma_1T_a^1)v^1 =  f_a w \ \forall a\in \A$, then $v^1 \propto \mathbf{1}$.

Now suppose that we recover a reward function $r(s,a) = w^T f_{s,a}$ compatible with the two experts, i.e.,
\begin{equation} \label{eq:param3}
    r(\cdot, a) = r^*(\cdot, a) + (I - \gamma_1T_a^1) v^1
\end{equation}

where $r^*(s,a) = w^{*T} f_{s,a}$ denotes the true reward and $v^1$ satisfies $(I - \gamma_1T_a^1) v^1 = (I - \gamma_2T_a^2) v^2 \ \forall a\in \A$. Then, $(I - \gamma_1T_a^1) v^1 = r(\cdot, a) - r^*(\cdot, a) = f_a (w - w^*)$, and hence $\exists \tilde{w}\in \R^d$ such that $(I - \gamma_1T_a^1)v^1 =  f_a \tilde{w} \ \forall a\in \A$. Thus, $v^1 \propto \mathbf{1}$ and the reward is recovered up to a constant.

Suppose now that $\mathbf{1} \notin \text{Im} \begin{pmatrix} f_{a_1} \\ \vdots \\ f_{a_{|\A|}} \end{pmatrix}$. Then, the condition

\begin{equation}
\text{rank}
\begin{pmatrix}
I - \gamma_1 T_{a_1}^1 & I - \gamma_2 T_{a_1}^2 & \textbf{0} \\
\vdots & \vdots & \vdots \\
I - \gamma_1 T_{a_{|\mathcal{A}|}}^1 & I - \gamma_2 T_{a_{|\mathcal{A}|}}^2 & \textbf{0} \\
I - \gamma_1 T_{a_1}^1 & \textbf{0} & f_{a_1} \\
\vdots & \vdots & \vdots \\
I - \gamma_1 T_{a_{|\mathcal{A}|}}^1 & \textbf{0} & f_{a_{|\A|}}
\end{pmatrix}
= 2|\mathcal{S}| + d.
\end{equation}

means that this matrix is full rank and hence that its kernel is $\{\mathbf{0}\}$. Thus, if we recover a reward of the form~\eqref{eq:param3}, following the same argument as previously, this means that $v^1 = 0$, and thus that the reward function is recovered exactly.

\subsection{Proof of Corollary~\ref{cor:exo}}

Without loss of generality, let us assume that the exogenous variable can only take two possible values, i.e., the state space is defined as $\S = \{(s, e) : s\in \S_0, e\in\{e_1, e_2\}\}$, where $e$ denotes the exogenous variable and $\S_0$ contains all other variables. Exogenity of variable $e$ implies that $\forall e\in\{e_1, e_2\}$, $p(e^{t+1} = e_1 | s^t=s, e^t=e, a^t=a) = p(e^{t+1} = e_1 | e^t=e)$ does not depend on $s$ nor $a$.

Suppose that we order the states as $\{\{(e_1, s)\}_{s\in \S_0}, \{(e_2, s)\}_{s\in \S_0}\}$. Then, the transition matrix for each expert $i$ associated with action $a$ has the following form:

\begin{equation}
    T_a^i = \begin{pmatrix}
    p_1^i T_{a,1}^i & (1-p_1^i) T_{a,1}^i \\
    (1-p_2^i) T_{a,2}^i & p_2^i T_{a,2}^i
    \end{pmatrix}
\end{equation}

where for each expert $i$ and exogenous variable $e_j, j=1,2$ , $p_j^i = p^i(e^{t+1} = e_j | e^t=e_j)$ and $T_{a,j}^i \in \R^{|\S_0|\times |\S_0|}$ denotes the transition matrix of expert $i$ for state variables in $\S_0$ knowing that the current value of state variable $e$ is $e_j$, i.e. $T_{a,j}^i(s,s') = p^i(s^{t+1} = s' | s^t=s, e^t=e_j, a^t=a) \forall s,s' \in \S_0$ where $p^i$ denotes the state transition probability in environment $i$.

We first show the result in the case of two experts. The matrix $A = \begin{pmatrix}
I - \gamma_1 T_{a_1}^1 & I - \gamma_2 T_{a_1}^2 \\
\vdots & \vdots \\
I - \gamma_1 T_{a_{|\mathcal{A}}|}^1 & I - \gamma_2 T_{a_{|\mathcal{A}|}}^2
\end{pmatrix}$ has the following form:

\[
A = \begin{pmatrix}
    I - \gamma_1 p_1^1 T_{a_1, 1}^1 & -\gamma_1 (1-p_1^1) T_{a_1,1}^1 & I - \gamma_2 p_1^2 T_{a_1, 1}^2 & -\gamma_2 (1-p_1^2) T_{a_1,1}^2  \\
    -\gamma_1 (1-p_2^1) T_{a_1, 2}^1 & I-\gamma_1 p_2^1 T_{a_1,2}^1 & -\gamma_2 (1-p_2^2) T_{a_1, 2}^2 & I-\gamma_2 p_2^2 T_{a_1,2}^2 \\
    \vdots & \vdots & \vdots & \vdots \\
    I - \gamma_1 p_1^1 T_{a_{|\A|}, 1}^1 & -\gamma_1 (1-p_1^1) T_{a_{|\A|},1}^1 & I - \gamma_2 p_1^2 T_{a_{|\A|}, 1}^2 & -\gamma_2 (1-p_1^2) T_{a_{|\A|},1}^2  \\
    -\gamma_1 (1-p_2^1) T_{a_{|\A|}, 2}^1 & I-\gamma_1 p_2^1 T_{a_{|\A|},2}^1 & -\gamma_2 (1-p_2^2) T_{a_{|\A|}, 2}^2 & I-\gamma_2 p_2^2 T_{a_{|\A|},2}^2
    \end{pmatrix}
\]

We know that $v_0 = \begin{pmatrix}
\frac{1}{1-\gamma_1} \mathbf{1} \\
\frac{1}{1-\gamma_1} \mathbf{1} \\
-\frac{1}{1-\gamma_2} \mathbf{1} \\
-\frac{1}{1-\gamma_2} \mathbf{1} 
\end{pmatrix}$ is an eigenvector of $A$ with eigenvalue $0$, corresponding to an addition of a constant to the reward. In order to show that the reward is not identifiable, we need to find another vector in $\text{Ker}(A)$ linearly independent of $v_0$. We search for such a vector of the form $v_1 = \begin{pmatrix}
\mathbf{0} \\
\mathbf{1} \\
c_1 \mathbf{1} \\
c_2 \mathbf{1} \\
\end{pmatrix}$. Using the fact that $\mathbf{1}$ is an eigenvector of any transition matrix with eigenvalue $1$, the condition $v_1\in \text{Ker}(A)$ is equivalent to

\begin{equation*}
    \left\{ \begin{array}{cc}
         -\gamma_1(1-p_1^1) + c_1(1-\gamma_2p_1^2) - c_2\gamma_2(1-p_1^2) = 0 \\
         1-\gamma_1 p_2^1 - c_1\gamma_2(1-p_2^2) + c_2(1-\gamma_2p_2^2) = 0.
    \end{array} \right.
\end{equation*}

This system of equations turns out to have a unique solution for $(c_1, c_2)$ since 

\begin{align*}
\text{det}\begin{pmatrix}
1-\gamma_2p_1^2 & -\gamma_2(1-p_1^2) \\
-\gamma_2(1-p_2^2) & 1-\gamma_2p_2^2
\end{pmatrix} &= (1-\gamma_2p_1^2)(1-\gamma_2p_2^2) - (\gamma_2-\gamma_2p_1^2)(\gamma_2-\gamma_2p_2^2) \\
&= (1-\gamma_2)(1+\gamma_2-\gamma_2 p_1^2 - \gamma_2 p_2^2)> 0
\end{align*}
since $0 \leq \gamma_2 < 1$. Hence, $\text{Ker}(A)$ contains at least two linearly independent vector, and thus $\text{rank}(A) < 2|\S| - 1$. So, according to Theorem~\ref{thm:identification}, the reward function is not identifiable up to a constant.

This means that, in addition to a global constant that we can add to the reward, we can also add a constant only to the rewards associated with a specific value of the exogenous variable. The proof naturally extends to the case of multiple experts, and when the exogenous variable can take more than two values. Actually, in the latter case, we can find even more linearly independent vectors in $\text{Ker}(A)$, corresponding to adding a constant to the rewards associated with each possible value of the exogenous variable.

\subsection{Proof of Theorem~\ref{thm:prob}}

Define $A = \begin{pmatrix}
I - \gamma_1 T_{a_1}^1 & I - \gamma_2 T_{a_1}^2 \\
\vdots & \vdots \\
I - \gamma_1 T_{a_{|\mathcal{A}}|}^1 & I - \gamma_2 T_{a_{|\mathcal{A}|}}^2
\end{pmatrix}$ and $\hat{A} = \begin{pmatrix}
I - \gamma_1 \hat{T}_{a_1}^1 & I - \gamma_2 \hat{T}_{a_1}^2 \\
\vdots & \vdots \\
I - \gamma_1 \hat{T}_{a_{|\mathcal{A}}|}^1 & I - \gamma_2 \hat{T}_{a_{|\mathcal{A}|}}^2
\end{pmatrix}$. For an arbitrary matrix $M$, let $\sigma_2(M)$ denote the second smallest singular value of $M$. Note that the condition~\eqref{eq:7.1} for $A$ is equivalent to $\sigma_2(A) > 0$. From Weyl's inequality for singular values\cite{tao2010254a}, we have that
\[
|\sigma_2(A) - \sigma_2(\hat{A})| \leq \|A - \hat{A}\|_2.
\]

Moreover,

\begin{align*}
    \|A - \hat{A}\|_2 &= \left\| \begin{pmatrix}
\gamma_1 (T_{a_1}^1 - \hat{T}_{a_1}^1) & \gamma_2 (T_{a_1}^2 - \hat{T}_{a_1}^2) \\
\vdots & \vdots \\
\gamma_1 (T_{a_{|\A|}}^1 - \hat{T}_{a_{|\A|}}^1) & \gamma_2 (T_{a_{|\A|}}^2 - \hat{T}_{a_{|\A|}}^2)
\end{pmatrix} \right\|_2 \\
    &\leq \sqrt{2}\max(\gamma_1, \gamma_2) \max \left( \left\| \begin{pmatrix}
(T_{a_1}^1 - \hat{T}_{a_1}^1) \\
\vdots \\
T_{a_{|\A|}}^1 - \hat{T}_{a_{|\A|}}^1
\end{pmatrix} \right\|_2, \left\| \begin{pmatrix}
(T_{a_1}^2 - \hat{T}_{a_1}^2) \\
\vdots \\
T_{a_{|\A|}}^2 - \hat{T}_{a_{|\A|}}^2
\end{pmatrix} \right\|_2 \right) \\
&\leq \sqrt{2|\A|}\max(\gamma_1, \gamma_2) \epsilon.
\end{align*}

Therefore, $\sigma_2(A) \geq \sigma_2(\hat{A}) - \sqrt{2|\A|}\max(\gamma_1, \gamma_2) \epsilon$, and hence $\sigma_2(A) > 0$ provided that $\sigma_2(\hat{A}) > \sqrt{2|\A|}\max(\gamma_1, \gamma_2) \epsilon$.
\subsection{Proof of \Cref{thm:bernstein}}
\begin{proof}
$\hat{T}_a$ can be constructed as follows. Sample $\frac{N}{|\S|}$ states $\{s^\prime_i\}^{\frac{N}{|\S|}}_{i=1}$  from the distribution  $T(\cdot|s, a)$ for every state $s\in\S$. Let $N(s)$ denote the number of times state $s$ has been sampled, i.e. $N(s) = \frac{N}{|\S|}$. Form the matrix $\tilde{T}_i = [\frac{\mathbf{1}(s_i=s,s^\prime_i=s^\prime)N}{N(s)}]_{s,s^\prime}$
It holds that $\forall i$, $\mathbb{E}[\tilde{T}_i] = T_a$, $\lambda_{\mathrm{max}}(\tilde{T}_i-T_a) \leq |\mathcal{S}| + 1$, $\lambda_{\mathrm{max}}(\mathbb{E}[(\tilde{T}_i - T_a)^2]) \leq  |\mathcal{S}|^2$ and $\mathrm{Trace}(\mathbb{E}[(\tilde{T}_i - T_a)^2]) \leq |\mathcal{S}|^2$. Then, the result follows applying Lemma 10 in \cite{Hsu:2012} and assuming $\delta < 1/e$. Finally, we conclude with a covering argument over the set $\A$.
\end{proof}

\subsection{Proof of Theorem~\ref{thm:generalization}}

\begin{lem} \label{lem:A.2}
The condition of equation~\ref{eq:7.2} holds if and only if $\forall v^1, v^2 \in \R^{|\S|}$ satisfying $(I - \gamma_1 T_a^1)v^1 = (I - \gamma_2 T_a^2)v^2$, $\forall a\in\A$, there exists $v^3\in\R^{|\S|}$ such that $(I - \gamma_3 T_a^3)v^3 = (I - \gamma_1 T_a^1)v^1$, $\forall a\in\A$.
\end{lem}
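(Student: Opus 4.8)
The plan is to translate both the rank identity~\eqref{eq:7.2} and the existence statement into facts about three subspaces of $\R^{|\A||\S|}$, and then match them. Introduce the block columns
\[
P^i = \begin{pmatrix} I - \gamma_i T_{a_1}^i \\ \vdots \\ I - \gamma_i T_{a_{|\A|}}^i \end{pmatrix} \in \R^{|\A||\S| \times |\S|}, \quad i=1,2,3,
\]
so that the left matrix in~\eqref{eq:7.2} is $A = (P^1 \mid P^2)$ and the right one is $B = \begin{pmatrix} P^1 & P^2 & \mathbf{0} \\ P^1 & \mathbf{0} & P^3 \end{pmatrix}$. The hypothesis $(I - \gamma_1 T_a^1)v^1 = (I - \gamma_2 T_a^2)v^2$ for all $a$ reads $P^1 v^1 = P^2 v^2$, and the target $(I - \gamma_3 T_a^3)v^3 = (I - \gamma_1 T_a^1)v^1$ reads $P^3 v^3 = P^1 v^1$. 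Hence the existence statement says exactly that every $w \in \text{Im}(P^1) \cap \text{Im}(P^2)$ also lies in $\text{Im}(P^3)$, i.e. $\text{Im}(P^1) \cap \text{Im}(P^2) \subseteq \text{Im}(P^3)$.

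First I would rewrite~\eqref{eq:7.2} as a kernel-dimension identity. By rank--nullity, $\text{rank}(A) = 2|\S| - \dim\text{Ker}(A)$ and $\text{rank}(B) = 3|\S| - \dim\text{Ker}(B)$, so~\eqref{eq:7.2} is equivalent to $\dim\text{Ker}(A) = \dim\text{Ker}(B)$. It then remains to identify these two kernel dimensions with dimensions of intersections of the images $\text{Im}(P^i)$.

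The key observation is that each $P^i$ is injective: for $\gamma_i \in [0,1)$ and row-stochastic $T_{a_1}^i$, the single block $I - \gamma_i T_{a_1}^i$ has eigenvalues $1 - \gamma_i \mu$ with $\mu$ an eigenvalue of $T_{a_1}^i$, and since $|\mu| \le 1$ and $\gamma_i < 1$ we have $|\gamma_i \mu| < 1$, so these are nonzero and the block is invertible; thus $P^i v = 0$ forces $v = 0$. Consequently the map $(v^1, v^2) \mapsto P^1 v^1$ restricted to $\text{Ker}(A) = \{(v^1, v^2) : P^1 v^1 = -P^2 v^2\}$ is injective with image $\text{Im}(P^1) \cap \text{Im}(P^2)$, giving $\dim\text{Ker}(A) = \dim(\text{Im}(P^1) \cap \text{Im}(P^2))$. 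For $B$, I would use the linear projection $\pi : \text{Ker}(B) \to \text{Ker}(A)$, $(v^1, v^2, v^3) \mapsto (v^1, v^2)$. Injectivity of $P^3$ makes $\pi$ injective, since its kernel is $\{0\}\times\{0\}\times\text{Ker}(P^3) = \{0\}$, and $(v^1, v^2) \in \text{Im}(\pi)$ iff there is $v^3$ with $P^3 v^3 = -P^1 v^1$, i.e. iff $P^1 v^1 \in \text{Im}(P^3)$. Transporting this through the isomorphism above yields $\dim\text{Ker}(B) = \dim(\text{Im}(P^1) \cap \text{Im}(P^2) \cap \text{Im}(P^3))$.

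Putting the pieces together, \eqref{eq:7.2} is equivalent to $\dim(\text{Im}(P^1) \cap \text{Im}(P^2)) = \dim(\text{Im}(P^1) \cap \text{Im}(P^2) \cap \text{Im}(P^3))$; since the latter intersection is a subspace of the former, equality of dimensions is equivalent to equality of the subspaces, i.e. to $\text{Im}(P^1) \cap \text{Im}(P^2) \subseteq \text{Im}(P^3)$, which is precisely the reformulated existence statement. I expect the main obstacle to be the $B$-kernel computation: one must correctly set up the projection $\pi$ and invoke injectivity of $P^3$ to conclude both that $\pi$ is injective and that its image is cut out exactly by the condition $P^1 v^1 \in \text{Im}(P^3)$. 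The remaining rank--nullity bookkeeping and the final subspace-containment step are routine.
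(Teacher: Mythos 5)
Your proposal is correct and follows essentially the same route as the paper's proof: both use rank--nullity to turn~\eqref{eq:7.2} into equality of the two kernel dimensions, both exploit invertibility of $I - \gamma_3 T_a^3$ (injectivity of your $P^3$) so that forgetting $v^3$ is an injection from the large kernel into the small one, and both conclude from containment plus equal dimension that the two sets coincide. Your rephrasing of the kernels as the image intersections $\mathrm{Im}(P^1)\cap\mathrm{Im}(P^2)$ and $\mathrm{Im}(P^1)\cap\mathrm{Im}(P^2)\cap\mathrm{Im}(P^3)$ is only a cosmetic repackaging of the same dimension-counting argument.
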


\begin{proof}
Denote by $A_1, A_2$ the matrices shown and the left and right hand side of equation~\eqref{eq:7.2} respectively, so that the equation reads $\text{rank}(A_1) = \text{rank}(A_2) - |\S|$, or equivalently $2|\S| - \text{rank}(A_1) = 3|\S| - \text{rank}(A_2)$. Using the rank theorem, it follows that $\text{dim}(\text{Ker}(A_1)) = \text{dim}(\text{Ker}(A_2))$, i.e.,
\begin{equation} \label{eq:A.2.1}
    \begin{split}
        \text{dim}(\{(v^1, v^2) \in \R^{2|\S|} : (I - \gamma_1 T_a^1) v^1 &= (I - \gamma_2 T_a^2) v^2 \ \forall a\in \A\}) \\
    = \text{dim}(\{(v^1, v^2, v^3) \in \R^{3|\S|} &: (I - \gamma_1 T_a^1) v^1 = (I - \gamma_2 T_a^2) v^2 = (I - \gamma_3 T_a^3) v^3 \ \forall a\in \A\}).
    \end{split}
\end{equation}

Since all matrices $I - \gamma_3 T_a^3$ are invertible for any $a\in \A$, it follows that for any $(v^1, v^2) \in \R^{2|\S|}$, there can exist at most one vector $v^3 \in \R^{|\S|}$ such that $(I - \gamma_1 T_a^1) v^1 = (I - \gamma_2 T_a^2) v^2 = (I - \gamma_3 T_a^3) v^3 \ \forall a\in \A$. We hence deduce that 

\begin{equation} \label{eq:A.2.2}
    \begin{split}
    &\text{dim}(\{(v^1, v^2, v^3) \in \R^{3|\S|} : (I - \gamma_1 T_a^1) v^1 = (I - \gamma_2 T_a^2) v^2 = (I - \gamma_3 T_a^3) v^3 \ \forall a\in \A\}) \\
    &= \text{dim}(\{(v^1, v^2) \in \R^{2|\S|} : \exists v^3\in\R^{|\S|}, (I - \gamma_1 T_a^1) v^1 = (I - \gamma_2 T_a^2) v^2 = (I - \gamma_3 T_a^3) v^3 \ \forall a\in \A\}).
    \end{split}
\end{equation}

Plugging equation~\eqref{eq:A.2.2} in ~\eqref{eq:A.2.1}, we have
\begin{equation} \label{eq:A.2.3}
    \begin{split}
        &\text{dim}(\{(v^1, v^2) \in \R^{2|\S|} : (I - \gamma_1 T_a^1) v^1 = (I - \gamma_2 T_a^2) v^2 \ \forall a\in \A\}) \\
    &= \text{dim}(\{(v^1, v^2) \in \R^{2|\S|} : \exists v^3\in\R^{|\S|}, (I - \gamma_1 T_a^1) v^1 = (I - \gamma_2 T_a^2) v^2 = (I - \gamma_3 T_a^3) v^3 \ \forall a\in \A\}).
    \end{split}
\end{equation}

Moreover, we can clearly see that
\begin{align*}
\{(v^1, v^2) \in \R^{2|\S|} &: \exists v^3\in\R^{|\S|}, (I - \gamma_1 T_a^1) v^1 = (I - \gamma_2 T_a^2) v^2 = (I - \gamma_3 T_a^3) v^3 \ \forall a\in \A\} \\
&\subseteq \{(v^1, v^2) \in \R^{2|\S|} : (I - \gamma_1 T_a^1) v^1 = (I - \gamma_2 T_a^2) v^2 \ \forall a\in \A\}.
\end{align*}

Thus, together with equation ~\eqref{eq:A.2.3}, we can conclude that 
\begin{align*}
\{(v^1, v^2) \in \R^{2|\S|} &: \exists v^3\in\R^{|\S|}, (I - \gamma_1 T_a^1) v^1 = (I - \gamma_2 T_a^2) v^2 = (I - \gamma_3 T_a^3) v^3 \ \forall a\in \A\} \\
&= \{(v^1, v^2) \in \R^{2|\S|} : (I - \gamma_1 T_a^1) v^1 = (I - \gamma_2 T_a^2) v^2 \ \forall a\in \A\}
\end{align*}
which shows the result.

Suppose now that condition~\ref{eq:7.2} does not hold, i.e.,

\begin{equation} 
    \begin{split}
        \text{dim}(\{(v^1, v^2) \in \R^{2|\S|} : (I - \gamma_1 T_a^1) v^1 &= (I - \gamma_2 T_a^2) v^2 \ \forall a\in \A\}) \\
    > \text{dim}(\{(v^1, v^2) \in \R^{3|\S|} &: \exists v^3 \in \R^{|\S|}, (I - \gamma_1 T_a^1) v^1 = (I - \gamma_2 T_a^2) v^2 = (I - \gamma_3 T_a^3) v^3 \ \forall a\in \A\}).
    \end{split}
\end{equation}

This directly implies that there must exist a pair $(v^1, v^2)$, such that there exists no $v^3 \in \R^{|\S|}$ satisfying $(I - \gamma_3 T_a^3) v^3 = (I - \gamma_1 T_a^1) v^1 \ \forall a\in \A$ hence finalizing the proof.

\end{proof}

We now turn to the proof of Theorem~\ref{thm:generalization}. Let $r^*$ be the ground truth reward, and suppose that we recover some reward function $r$ from policies $\pi^1, \pi^2$, i.e., $\pi^1, \pi^2$ are optimal with respect to both rewards $r$ and $r^*$ on $(T^1, \gamma_1), (T^2, \gamma_2)$ respectively. Suppose that we train a policy $\pi^3$ optimally with respect to $r$ on $(T^3, \gamma_3)$. We want to show that $\pi^3$ is also optimal with respect to the true reward $r^*$.


Let $v^i, v^i_*$ be the value vectors associated to expert $i=1,2$ with respect to rewards $r$ and $r^*$ respectively, i.e., such that
\begin{align}
\label{eq:A.2.4}
    r(\cdot, a) &= \lambda \log \pi^1(a | \cdot) + (I - \gamma_1T_a^1)v^1 = \lambda \log \pi^2(a | \cdot) + (I - \gamma_2T_a^2)v^2 \\
\label{eq:A.2.5}
    r^*(\cdot, a) &= \lambda \log \pi^1(a | \cdot) + (I - \gamma_1T_a^1)v^1_* = \lambda \log \pi^2(a | \cdot) + (I - \gamma_2T_a^2)v^2_*.
\end{align}

Let $v^3$ be the value vector associated with expert $3$ with respect to reward $r$, i.e., such that $\forall a\in \A$
\begin{equation} \label{eq:A.2.6}
    r(\cdot, a) = \lambda \log \pi^3(a | \cdot) + (I - \gamma_3T_a^3)v^3.
\end{equation}

We need to show that there exists a vector $v^3_* \in \R^{|\S|}$ such that $\forall a\in \A$
\begin{equation}
    r^*(\cdot, a) = \lambda \log \pi^3(a | \cdot) + (I - \gamma_3T_a^3)v^3_*.
\end{equation}

Using equations~\eqref{eq:A.2.4},~\eqref{eq:A.2.5} and ~\eqref{eq:A.2.6}, we have $\forall a\in \A$
\begin{align}
    r^*(\cdot, a) &= \lambda \log \pi^1(a | \cdot) + (I - \gamma_1 T_a^1)v^1_* \\
    &= r(\cdot, a) - (I - \gamma_1 T_a^1)v^1 + (I - \gamma_1 T_a^1)v^1_* \\
    &= \lambda \log \pi^3(a | \cdot) + (I - \gamma_3 T_a^3)v^3 + (I - \gamma_1 T_a^1)(v^1 - v^1_*).
    \label{eq:A.2.7}
\end{align}

Moreover, subtracting equations~\eqref{eq:A.2.4} and~\eqref{eq:A.2.5}, we have 
\[
(I - \gamma_1 T_a^1)(v^1 - v^1_*) = (I - \gamma_2 T_a^2)(v^2 - v^2_*)
\]

Therefore, using our assumption and Lemma~\ref{lem:A.2}, there exists a vector $\tilde{v}_3 \in \R^{|\S|}$ such that $(I - \gamma_1 T_a^1)(v^1 - v^1_*) = (I - \gamma_3 T_a^3)\tilde{v}_3$. Hence, combined with equation~\eqref{eq:A.2.7}, we conclude that there exists $v^3_* \in \R^{|\S|}$ such that $\forall a\in \A$
\begin{equation}
    r^*(\cdot, a) = \lambda \log \pi^3(a | \cdot) + (I - \gamma_3T_a^3)v^3_*.
\end{equation}

Using Theorem~\ref{thm:single_expert}, we conclude that $r^*$ belongs to the set rewards compatible with $\pi^3$, and hence that $\pi^3$, which has been optimized for $r$, is also optimal for the ground truth reward $r^*$.

On the other hand, if condition~\ref{eq:7.2} does not hold, according to Lemma~\ref{lem:A.2}, we can construct a reward function $r$ compatible with experts $1$ and $2$ that cannot be written in the form $r(\cdot, a) = \lambda \log \pi^3(a | \cdot) + (I - \gamma_3T_a^3)v^3$ for some $v^3 \in \R^{|\S|}$. Hence, thanks to Theorem~\ref{thm:single_expert}, the policy $\pi^3$ cannot be optimal for such a reward function. Hence, there will necessarily exist some recovered reward functions that would lead to a sub-optimal policy in environment $3$.

\subsection{Proof of Corollary~\ref{cor:3}} \label{app:A.8}

For the setup describe in this corollary, we need to verify condition~\eqref{eq:7.2}. By the rank theorem, this condition is equivalent to

\begin{equation}\label{eq:cor3.1}
\text{dim}\left(\text{Ker}
\begin{pmatrix}
I - \gamma_1 T_{a_1} & I - \gamma_2 T_{a_1} \\
\vdots & \vdots \\
I - \gamma_1 T_{a_{|\mathcal{A}|}} & I - \gamma_2 T_{a_{|\mathcal{A}|}}
\end{pmatrix} \right)
= \text{dim}\left(\text{Ker}
\begin{pmatrix}
I - \gamma_1 T_{a_1} & I - \gamma_2 T_{a_1} & \textbf{0} \\
\vdots & \vdots & \vdots \\
I - \gamma_1 T_{a_{|\mathcal{A}|}} & I - \gamma_2 T_{a_{|\mathcal{A}|}} & \textbf{0} \\
I - \gamma_1 T_{a_1} & \textbf{0} & I - \gamma_3 T_{a_1} \\
\vdots & \vdots & \vdots \\
I - \gamma_1 T_{a_{|\mathcal{A}|}} & \textbf{0} & I - \gamma_3 T_{a_{|\mathcal{A}|}}
\end{pmatrix} \right).
\end{equation}

To this end, we will show that any element $(v^1, v^2) \in \R^{2|\S|}$ of the kernel space of the left hand side is associated a single element $(v^1, v^2, v^3) \in \R^{3|\S|}$ of the kernel space of the right hand side. More precisely, we need to show that for any $v^1, v^2$ satisfying
\[
(I - \gamma_1 T_a) v^1 = (I - \gamma_2 T_a) v^2 \ \forall a\in \A,
\]
there exists a unique $v^3\in \R^{|\S|}$ such that 
\[
(I - \gamma_1 T_a) v^1 = (I - \gamma_3 T_a) v^3 \ \forall a\in \A.
\]

Consider the action $a_0$ satisfying by assumption that $T_{a_0}$ commutes with all other matrices $T_a$, $a\in \A$. Define $v^3 = (I - \gamma_3 T_{a_0})^{-1} (I - \gamma_1 T_{a_0}) v^1$. Notice that for any $a\in \A$, we can write $I - \gamma_3 T_a = \alpha (I - \gamma_1 T_a) + (1-\alpha)(I - \gamma_2 T_a)$ where $\alpha = \frac{\gamma_3 - \gamma_2}{\gamma_1 - \gamma_2}$. Moreover, recall that, if any two invertible matrices $A$ and $B$ commute, then $A$ and $B^{-1}$ also commute.

Using these properties, we then have for any $a \in \A$,

\begin{align*}
    (I - \gamma_3 T_a) v^3 &= \alpha(I - \gamma_1 T_a) v^3 + (1-\alpha)(I - \gamma_2 T_a) v^3 \\
    &= \alpha(I - \gamma_1 T_a) (I - \gamma_3 T_{a_0})^{-1} (I - \gamma_1 T_{a_0}) v^1 + (1-\alpha)(I - \gamma_2 T_a) (I - \gamma_3 T_{a_0})^{-1} (I - \gamma_1 T_{a_0}) v^1 \\
    &= \alpha(I - \gamma_1 T_a) (I - \gamma_3 T_{a_0})^{-1} (I - \gamma_1 T_{a_0}) v^1 + (1-\alpha)(I - \gamma_2 T_a) (I - \gamma_3 T_{a_0})^{-1} (I - \gamma_2 T_{a_0}) v^2 \\
    &= \alpha(I - \gamma_1 T_a) (I - \gamma_3 T_{a_0})^{-1} (I - \gamma_1 T_{a_0}) v^1 + (1-\alpha)(I - \gamma_3 T_{a_0})^{-1} (I - \gamma_2 T_{a_0}) (I - \gamma_2 T_a) v^2 \\
    &= \alpha(I - \gamma_1 T_a) (I - \gamma_3 T_{a_0})^{-1} (I - \gamma_1 T_{a_0}) v^1 + (1-\alpha)(I - \gamma_3 T_{a_0})^{-1} (I - \gamma_2 T_{a_0}) (I - \gamma_1 T_a) v^1 \\
    &= (I - \gamma_1 T_a) (I - \gamma_3 T_{a_0})^{-1} (\alpha (I - \gamma_1 T_{a_0}) + (1-\alpha)(I - \gamma_2 T_{a_0}))v^1 \\
    &= (I - \gamma_1 T_a) v^1.
\end{align*}

Uniqueness of $v^3$ is trivial since the matrices $(I - \gamma_3 T_a)$ are invertible, which shows that condition~\eqref{eq:cor3.1} holds.

\paragraph{Counter-example when the commutativity constraint does not hold.} We now provide a simple example showing that the required generalizability condition~\eqref{eq:7.2} does not always hold in the case where the commutativity condition breaks. Suppose $|\S| = 3$, $|\A| = 2$ and

\begin{equation}
    T_{a_1} = \begin{pmatrix}
    0.5 & 0.2 & 0.3 \\
    0.3 & 0.5 & 0.2 \\
    0 & 0.5 & 0.5
    \end{pmatrix}
    , \quad T_{a_2} = \begin{pmatrix}
    0.3 & 0.4 & 0.3 \\
    0.7 & 0.1 & 0.2 \\
    0.4 & 0.1 & 0.5
    \end{pmatrix}.
\end{equation}

These matrices do not commute and we have for any discount factors $\gamma_1, \gamma_2, \gamma_3$ all different,
\begin{equation}
4 = \text{rank}
\begin{pmatrix}
I - \gamma_1 T_{a_1} & I - \gamma_2 T_{a_1} \\
I - \gamma_1 T_{a_{2}} & I - \gamma_2 T_{a_{2}}
\end{pmatrix}
\neq \text{rank}
\begin{pmatrix}
I - \gamma_1 T_{a_1} & I - \gamma_2 T_{a_1} & \textbf{0} \\
I - \gamma_1 T_{a_{2}} & I - \gamma_2 T_{a_{2}} & \textbf{0} \\
I - \gamma_1 T_{a_1} & \textbf{0} & I - \gamma_3 T_{a_1} \\
I - \gamma_1 T_{a_{2}} & \textbf{0} & I - \gamma_3 T_{a_{2}}
\end{pmatrix} - |\S| = 5.
\end{equation}

\if 0
\subsection{Proof of Corollary~\ref{cor:3'}}

\begin{lem} \label{lem:5}
For any set of transition matrices $T$ and positive discount factors $\gamma_1 \neq \gamma_2$, the following two statements are equivalent:
\begin{enumerate}
    \item there exist $v^1, v^2 \in \R^{|\S|}$ such that 
    \begin{equation} \label{eq:cor5.1}
        (I - \gamma_1 T_a)v^1 = (I - \gamma_2 T_a)v^2 \ \forall a \in \A.
    \end{equation}
    \item there exist $a_1 \in \A$, $v^1, v^0 \in \R^{|\S|}$ with $v^0 \in \text{Ker}
    \begin{pmatrix}
    T_{a_1} - T_{a_2} \\
    \vdots \\
    T_{a_1} - T_{a_{|\mathcal{A}|}}
    \end{pmatrix}$ such that
    \[
    v^1 = \left(T_{a_1} - \frac{1}{\gamma_2} I\right)v^0.
    \]
\end{enumerate}
\end{lem}

\begin{proof}
$1 \Rightarrow 2$: Suppose that condition~\eqref{eq:cor5.1} for two vectors $v^1, v^2$.
\end{proof}

Similarly as for the proof of Corollary~\ref{cor:3},

For any $v^1, v^2, v^3$ satisfying
\begin{equation}
        (I - \gamma_1 T_a) v^1 = (I - \gamma_2 T_a) v^2 = (I - \gamma_3 T_a) v^3 \ \forall a\in \A, \\
\end{equation}

there exists a unique $v^4\in \R^{|\S|}$ such that 
\[
(I - \gamma_1 T_a) v^1 = (I - \gamma_4 T_a) v^4 \ \forall a\in \A.
\]

Using Lemma~\ref{lem:5}, there exist $v_0, v_0' \in \text{Ker}
    \begin{pmatrix}
    T_{a_1} - T_{a_2} \\
    \vdots \\
    T_{a_1} - T_{a_{|\mathcal{A}|}}
    \end{pmatrix}$ such that

\begin{align*}
    v^1 = \left(T_{a_1} - \frac{1}{\gamma_2}I\right)v_0 \\
    v^1 = \left(T_{a_1} - \frac{1}{\gamma_3}I\right)v_0'
\end{align*}

Define $v_0'' = \alpha v_0 + (1-\alpha) v_0'$ with $\alpha = \frac{\frac{1}{\gamma_4} - \frac{1}{\gamma_3}}{\frac{1}{\gamma_2} - \frac{1}{\gamma_3}}$. Then,

\begin{align*}
    \left(T_{a_1} - \frac{1}{\gamma_4}I\right)v_0'' &=  \left(T_{a_1} - \frac{1}{\gamma_4}I\right)\left(\alpha v_0 + (1-\alpha) v_0'\right) \\
    &= \alpha \left(T_{a_1} - \frac{1}{\gamma_2}I + \left(\frac{1}{\gamma_2} - \frac{1}{\gamma_4} \right)I \right) v_0 + (1-\alpha)\left(T_{a_1} - \frac{1}{\gamma_3}I + \left(\frac{1}{\gamma_3} - \frac{1}{\gamma_4} \right)I\right) v_0' \\
    &= 
\end{align*}

Again, uniqueness of $v^4$ follows from the invertibility of $I -\gamma_4T_a$.
\fi

\section{Algorithms details} \label{app:B}
This section provides the detailed pseudocode of the procedures we introduced for reward identification (\Cref{alg:identifiability}), for generalizability (\Cref{alg:generalization}) and identification when the reward function can be expressed as linear combination of known features (\Cref{alg:identifiability_linear}).
\begin{algorithm}[!h]
			\caption{\texttt{Identifiability Test}}
		\label{alg:identifiability}
			\begin{algorithmic}
				\STATE {\bfseries Input:} Expert transition matrices $T_1, T_2$, entropy-regularized optimal policies $\pi_1, \pi_2$.
				\STATE Compute matrix \begin{equation} \label{eq:alg1A} A := \begin{pmatrix}
-(I - \gamma_1 T_{a_1}^1) & I - \gamma_2 T_{a_1}^2 \\
\vdots & \vdots \\
-(I - \gamma_1 T_{a_{|\mathcal{A}|}}^1) & I - \gamma_2 T_{a_{|\mathcal{A}|}}^2 \\
\end{pmatrix}\end{equation}
				\IF{$\text{rank}(A) = 2 |\S| - 1$}
				\STATE $\mathrm{Identifiable = True}$
				\STATE Form vector $b \in \mathbb{R}^{|\S||\A|}$ such that $b(s,a) = \lambda \log \frac{\pi^1(a|s)}{\pi^2(a|s)}$ (ordered by states first)
				\STATE Recover value vectors $\begin{pmatrix}
				v^1 \\ v^2 
				\end{pmatrix}= (A^TA)^{-1}A^Tb$
				\STATE Recover the reward function as $r(s,a)  = \lambda \log \pi^1(a|s) + \gamma \sum_{s^\prime}T_1(s^\prime|s,a)v^1(s^\prime) - v^1(s) $ or equivalently $r(s,a)  = \lambda \log \pi^2(a|s) + \gamma \sum_{s^\prime}T_2(s^\prime|s,a)v^2(s^\prime) - v^2(s) $
				\ELSE \STATE $\mathrm{Identifiable = False}$
				\ENDIF
			\STATE {\bfseries Output:} $\mathrm{Identifiable}$ and recovered reward $r$.
			\end{algorithmic}
		\end{algorithm}
\begin{algorithm}[!h]
			\caption{\texttt{Identifiability Test with linear reward function}}
		\label{alg:identifiability_linear}
			\begin{algorithmic}
				\STATE {\bfseries Input:} Expert transition matrices $T^1, T^2$, entropy-regularized optimal policies $\pi_1, \pi_2$, features set $\{f_{a}\}_a$.
				\STATE Compute matrix \begin{equation} A := \begin{pmatrix}
-(I - \gamma_1 T_{a_1}^1) & I - \gamma_2 T_{a_1}^2 & \textbf{0} \\
\vdots & \vdots & \vdots \\
-(I - \gamma_1 T_{a_{|\mathcal{A}|}}^1) & I - \gamma_2 T_{a_{|\mathcal{A}|}}^2 & \textbf{0} \\
& & \\
-(I - \gamma_1 T_{a_1}^1) & \textbf{0} & f_{a_1} \\
\vdots & \vdots & \vdots \\
-(I - \gamma_1 T_{a_{|\mathcal{A}|}}^1) & \textbf{0} & f_{a_{|\A|}}
\end{pmatrix}\end{equation}
				\IF{$\text{rank}(A) = 2 |\S| + d$}
				\STATE $\mathrm{Identifiable = True}$
				\STATE Form vectors $b_1, b_2 \in \mathbb{R}^{|\S||\A|}$ defined as $b_1(s,a) = \lambda \log \frac{\pi^1(a|s)}{\pi^2(a|s)}$, $b_2(s,a) = \lambda \log \pi^1(a|s)$ and $b \in \mathbb{R}^{2|\S||\A|}$ as $b = \begin{pmatrix}
				b_1 \\ b_2 
				\end{pmatrix}$
				\STATE Recover value vectors and reward weights $\begin{pmatrix}
				v^1 \\ v^2 \\ w
				\end{pmatrix}= (A^TA)^{-1}A^Tb$
				\STATE Recover the reward function as $r(s,a)  = w^T f_{s,a} $
				\ELSE \STATE $\mathrm{Identifiable = False}$
				\ENDIF
			\STATE {\bfseries Output:} $\mathrm{Identifiable}$ and recovered reward $r$.
			\end{algorithmic}
		\end{algorithm}
\begin{algorithm}[!h]
			\caption{\texttt{Generalization Test}}
		\label{alg:generalization}
			\begin{algorithmic}
				\STATE {\bfseries Input:} Expert transition matrices $T^1, T^2$, transfer transition matrix $T^3$, entropy-regularized optimal policies $\pi_1, \pi_2$.
				\STATE Compute matrix \begin{equation*} A := \begin{pmatrix}
-(I - \gamma_1 T_{a_1}^1) & I - \gamma_2 T_{a_1}^2 \\
\vdots & \vdots \\
-(I - \gamma_1 T_{a_{|\mathcal{A}|}}^1) & I - \gamma_2 T_{a_{|\mathcal{A}|}}^2 \\
\end{pmatrix}\end{equation*}
				\IF{the condition in \Cref{eq:7.2} holds}
				\STATE $\mathrm{Generalizable = True}$
				\STATE Form vector $b \in \mathbb{R}^{|\S||\A|}$ such that $b(s,a) = \lambda \log \frac{\pi^1(a|s)}{\pi^2(a|s)}$
				\STATE Recover the value vectors $\begin{pmatrix}
				v^1 \\ v^2 
				\end{pmatrix}= (A^TA)^{-1}A^Tb$
				\STATE Recover the reward function as $r(s,a)  = \lambda \log \pi^1(a|s) + \gamma \sum_{s^\prime}T_1(s^\prime|s,a)v^1(s^\prime) - v^1(s) $
				\STATE Recover the optimal entropy regularized policy $\pi^3$ in $T^3$ using the recovered  reward $r$ with any RL algorithm.
				\ELSE \STATE $\mathrm{Generalizable = False}$
				\ENDIF
			\STATE {\bfseries Output:} $\mathrm{Generalizable}$ and recovered policy $\pi^3$.
			\end{algorithmic}
		\end{algorithm}
		
Algorithms \ref{alg:identifiability} and \ref{alg:generalization} can be generalized to an arbitrary number of experts. Indeed, denoting the matrix in \Cref{eq:alg1A} as $A_2$, we can construct the matrix $A_n$ for $n$ experts recursively as follows:
\begin{equation} A_n := \begin{pmatrix}
\multicolumn{2}{c}{A_{n-1}} & \textbf{0} \\ & & \\
-(I - \gamma_1 T_{a_1}^1) & \textbf{0} & I - \gamma_n T_{a_1}^n \\
\vdots & \vdots & \vdots \\
-(I - \gamma_1 T_{a_{|\mathcal{A}|}}^1) & \textbf{0} & I - \gamma_n T_{a_{|\mathcal{A}|}}^n
\end{pmatrix}\end{equation}
Similarly, we can construct the vector $b_n$ as 
\begin{equation}
b_n := \begin{pmatrix}
b_{n-1} \\
\lambda \log \frac{\pi^1(a|s)}{\pi^n(a|s)}
\end{pmatrix}\end{equation}
where $b_1$ denotes the vector defined in the algorithms for $2$ experts. The rest of the procedures remain unchanged.

\section{Additional experiments}
\label{appendix:experiments}
This section provides the experimental results and environment details omitted from the main text. 

\paragraph{Additional details for \texttt{Gridworld}} In the main text, we omitted the description of the reward function. We provide it hereafter for completeness. The reward function is obtained assigning a value at every state according to the grid shown in \Cref{fig:gridworld}. This reward function would depend only on states. To obtain a state-action dependent reward function, we add a penalty of $-30$ for moving right, $-20$ for moving down, $-10$ for moving left and $0$ for a step upwards.

\paragraph{Additional details for \texttt{WindyGridworld}} In \texttt{WindyGridworld}, the agent moves of one step according to the next state sampled from $T_{\alpha}(s^\prime|s,a) = (1 - \alpha) T_{\mathrm{det}}(s^\prime|s,a) + \alpha U(s^\prime|s,a)$  where $T_{\mathrm{det}}(s^\prime|s,a)$ as in \texttt{Gridworld}. In addition to that the agent takes an additional step according to the wind direction. The wind direction $w$ is sampled from the wind distribution generated by sampling each entry of the non normalized $P_{\mathrm{wind}}$ from a normal distribution and normalizing the obtained vector. After sampling the wind direction we sample the corresponding next state from $T_{\mathrm{det}}(s^\prime|s,w)$.

The reward function is the same used for the environment \texttt{Gridworld}.

\paragraph{Results on \texttt{Random-Matrices}} We report in \Cref{fig:random-matrices}, the results omitted from the main text. In \Cref{fig:random-matrices}, we show the reward recovered with \Cref{alg:identifiability} and the difference with respect to the true reward. It clearly emerges that the recovered reward is within a constant shift from the true reward function.

\paragraph{Results on \texttt{Gridworld} with state only reward } We provide an additional result on \texttt{Gridworld} where we do not consider the penalty assigned to the different actions. In this case, the reward depends only on states but the learner is not informed about this feature. In \Cref{fig:gridworld}, we show the recovered reward. Given that the reward depends only on the states we show the 2D representation of the state space suppressing the action dimension. We used the \texttt{Gridworld} implementation released with \cite{viano2021robust}.

\paragraph{Results on \texttt{WindyGridworld} with different discount factors} In the main text we showed that we need to observe $4$ experts to generalize to a new wind distribution. Hereafter, we provide experiments on the generalization to a new environment with a different discount factor. We verified that in this case observing two experts is enough to generalize. The comparison between recovered rewards and policies can be found in \Cref{fig:gammawindygridworld}. 

\paragraph{Computational resources} The experiment can be reproduced with a standard laptop.
\begin{figure}[!h]
    \centering
    \begin{tabular}{ccc}
\subfloat[$r$]{%
       \includegraphics[width=0.2\linewidth]{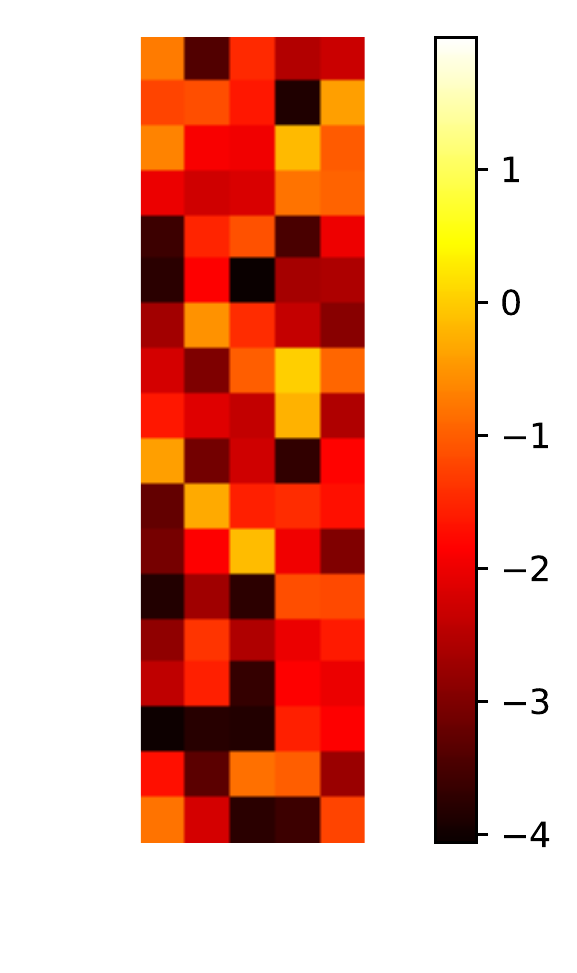}
     } &
\subfloat[ $r_\mathrm{true}$]{%
       \includegraphics[width=0.2\linewidth]{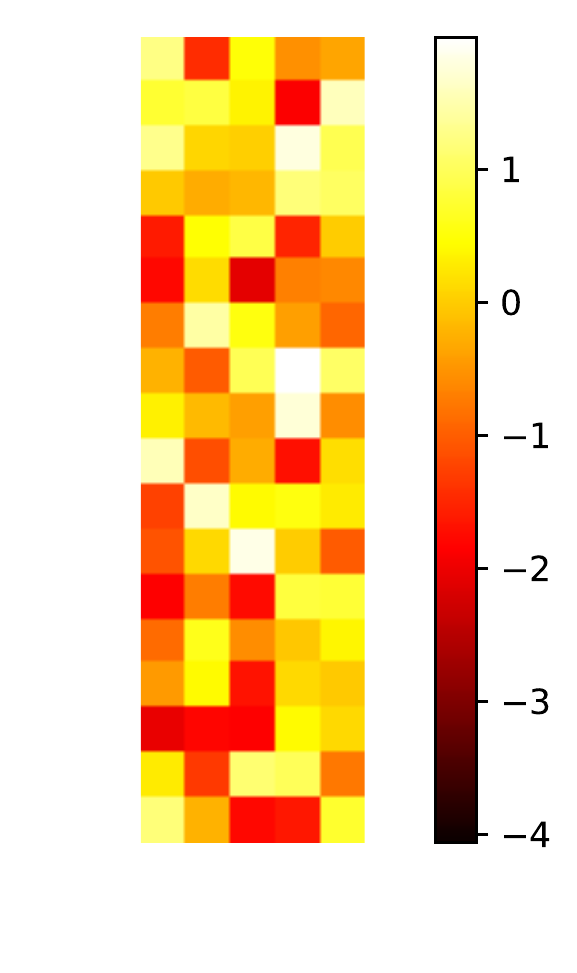}
     } &
\subfloat[$r - r_{\mathrm{true}}$]{%
       \includegraphics[width=0.2\linewidth]{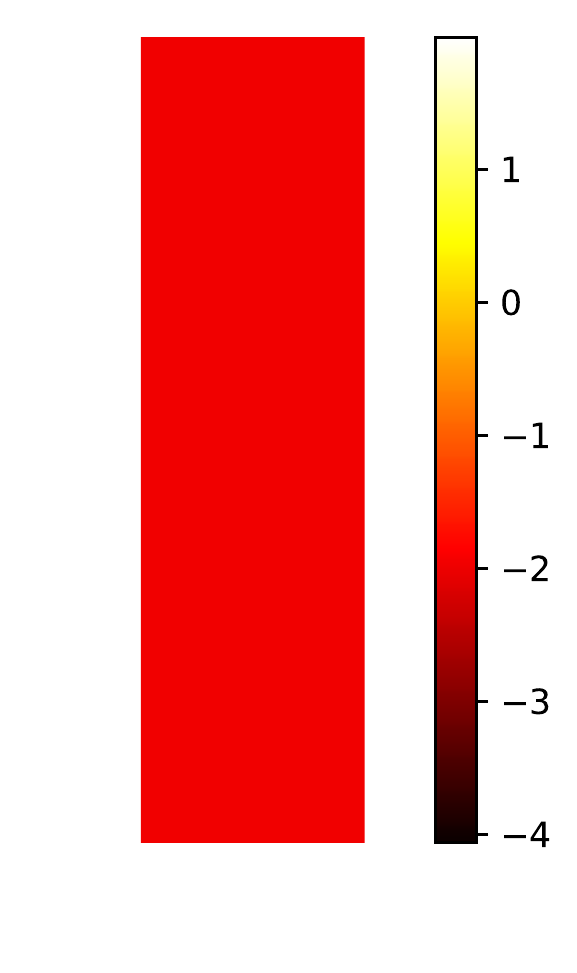}
     } \\
\end{tabular}
    \caption{Comparison between true and recovered reward in \texttt{Random-Matrices} with $|\mathcal{S}|=18$ and $|\mathcal{A}|=5$. On the vertical axis corresponds to the canonical ordering of the $18$ states while the horizontal axis corresponds to the $5$ actions.}
\label{fig:random-matrices}
\end{figure}

\begin{figure}[!h]
    \centering
    \begin{tabular}{ccc}
\subfloat [$r$]{%
       \includegraphics[width=0.2\linewidth]{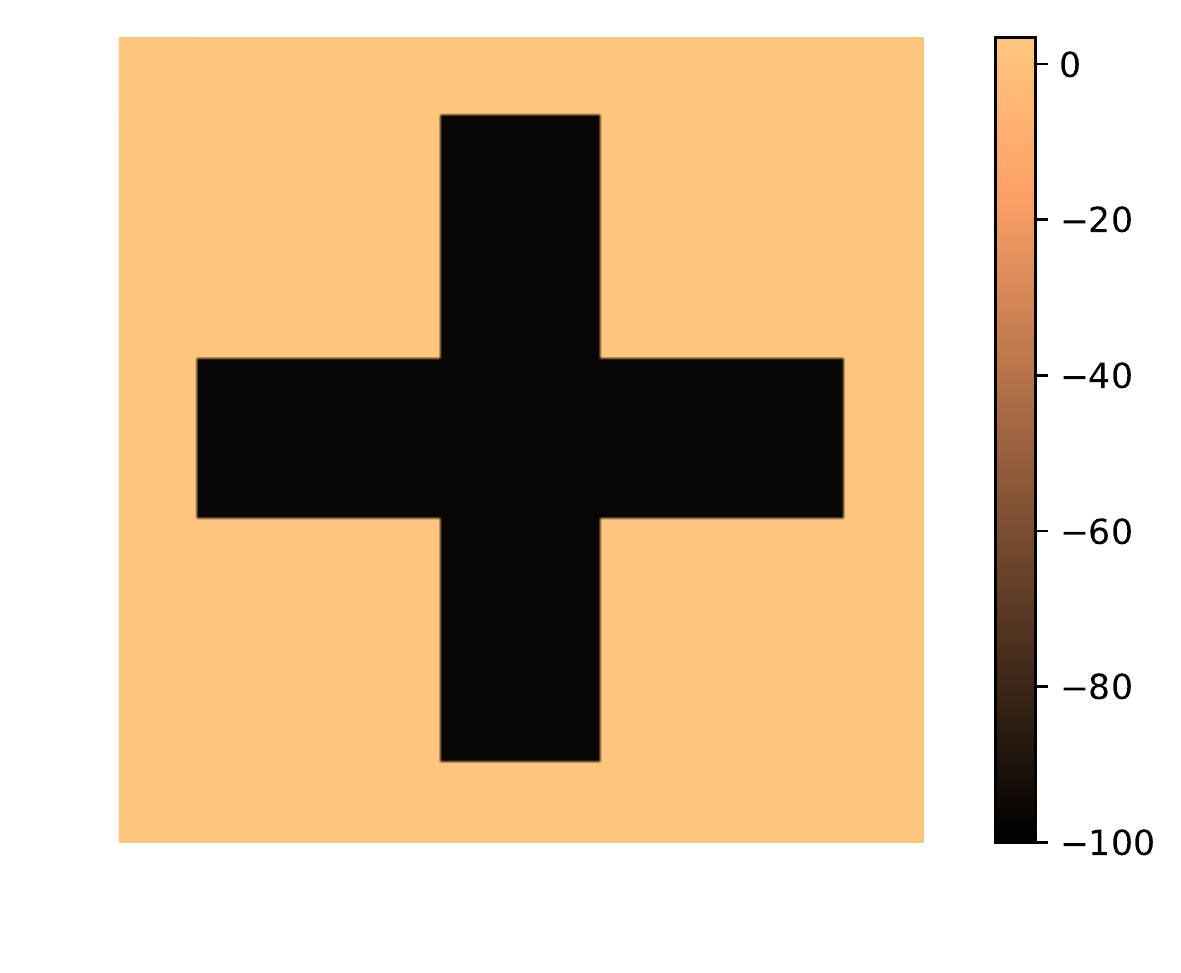}
     } &
\subfloat[ $r_\mathrm{true}$]{%
       \includegraphics[width=0.2\linewidth]{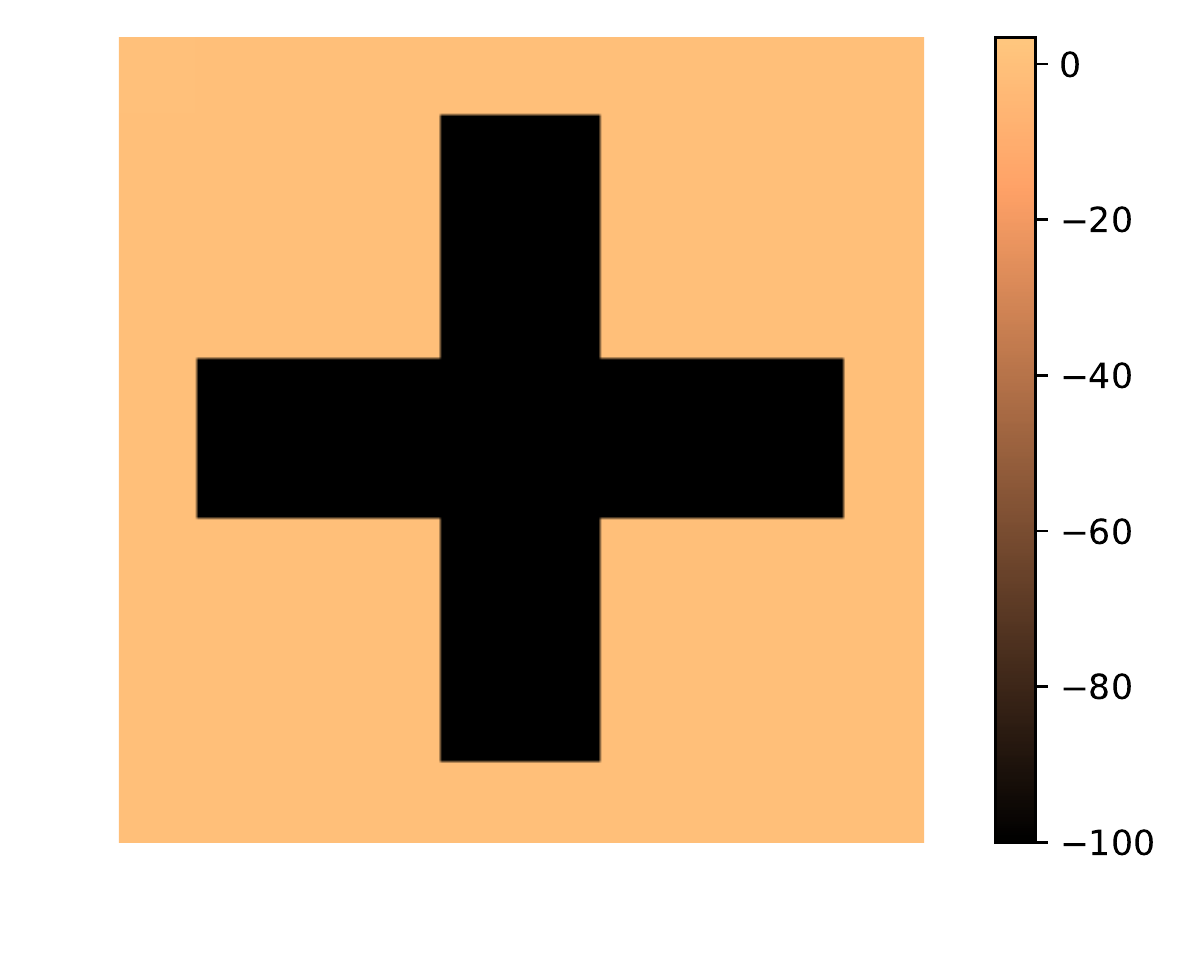}
     } &
\subfloat[$r - r_{\mathrm{true}}$]{%
       \includegraphics[width=0.2\linewidth]{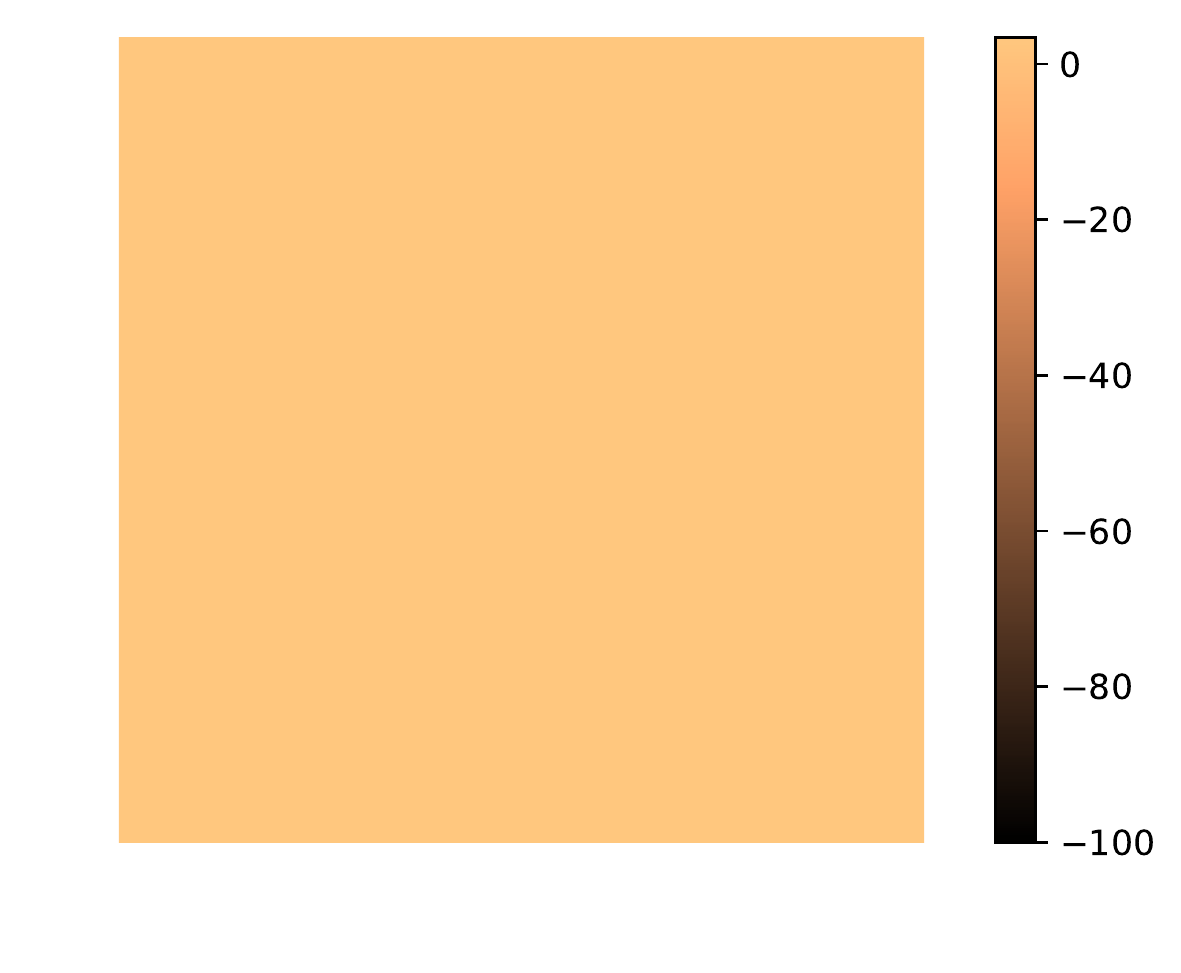}
     } \\
\end{tabular}
    \caption{Comparison between true and recovered reward in \texttt{Gridworld} with $|\mathcal{S}|=100$ and the $4$ actions up, down, left and right. It can be noticed that the reward function is recovered up to a constant shift.}
\label{fig:gridworld}
\end{figure}

\begin{figure}
    \centering
    \begin{tabular}{ccccc}
\subfloat[ $r$]{%
       \includegraphics[width=0.16\linewidth]{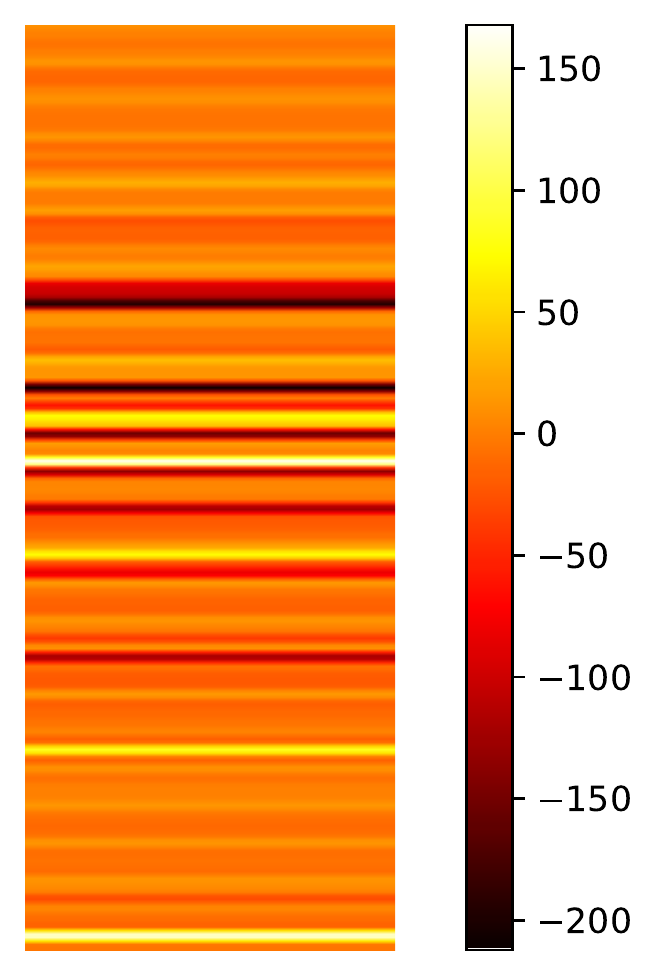}
     } &
\subfloat[ $r_\mathrm{true}$]{%
       \includegraphics[width=0.16\linewidth]{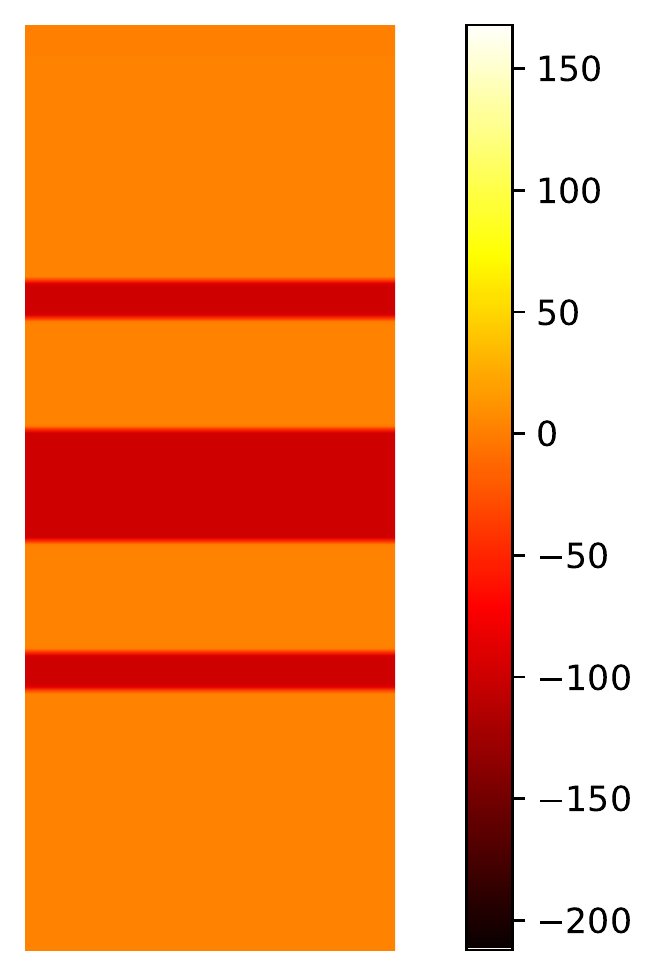}
     } &
\subfloat[$r - r_{\mathrm{true}}$]{%
       \includegraphics[width=0.16\linewidth]{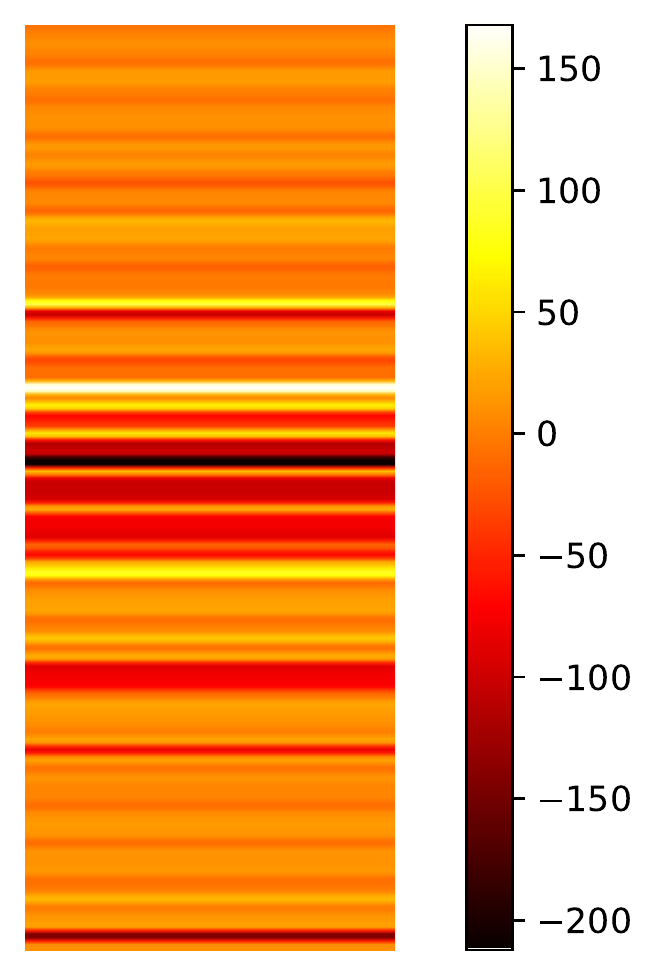}
     } &
\subfloat[$\pi^{T_{\mathrm{test}}}_{r_{\mathrm{true}}}$]{%
       \includegraphics[width=0.16\linewidth]{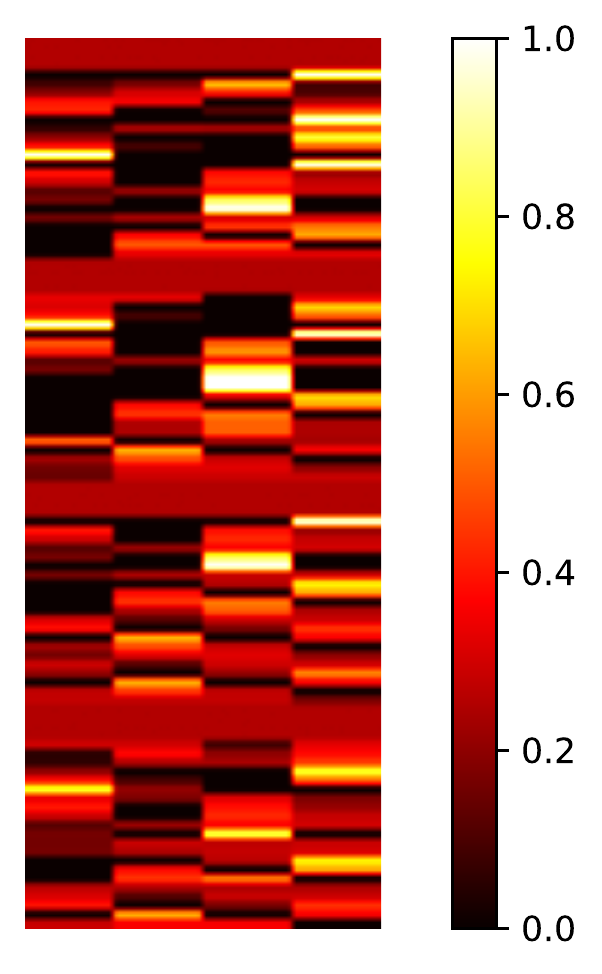}
     } &
\subfloat[$\pi^{T_{\mathrm{test}}}_{r_{\mathrm{true}}} - \pi^{T_{\mathrm{test}}}_r$]{%
       \includegraphics[width=0.16\linewidth]{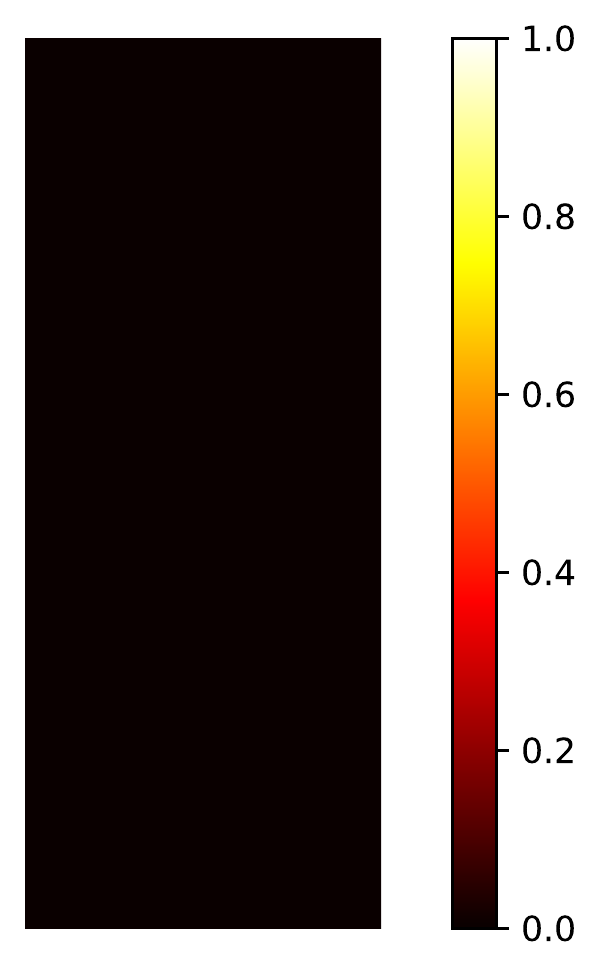}
    } \\
\end{tabular}
    \caption{ Generalization in \texttt{WindyGridworld} with different discount factors. We observe two experts with discounts factor $\gamma_1$ and $\gamma_2$ with $\gamma_1 \neq \gamma_2$ and with common transition dynamics. Subplot (e) shows that the policy recovered from $r_{\mathrm{true}}$ in a new environment with a different $\gamma_3$ matches the policy obtained from the recovered reward.}
\label{fig:gammawindygridworld}
\end{figure}

\begin{figure}
    \centering
    \begin{tabular}{ccccc}
\subfloat[ $r$]{%
       \includegraphics[width=0.16\linewidth]{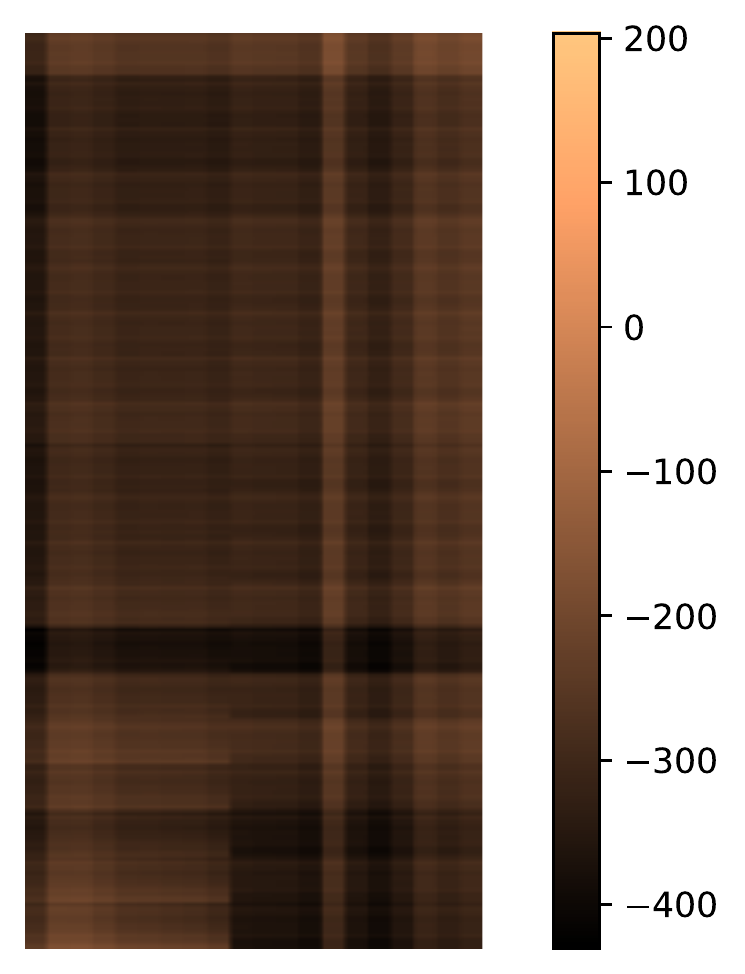}
     } &
\subfloat[ $r_\mathrm{true}$]{%
       \includegraphics[width=0.16\linewidth]{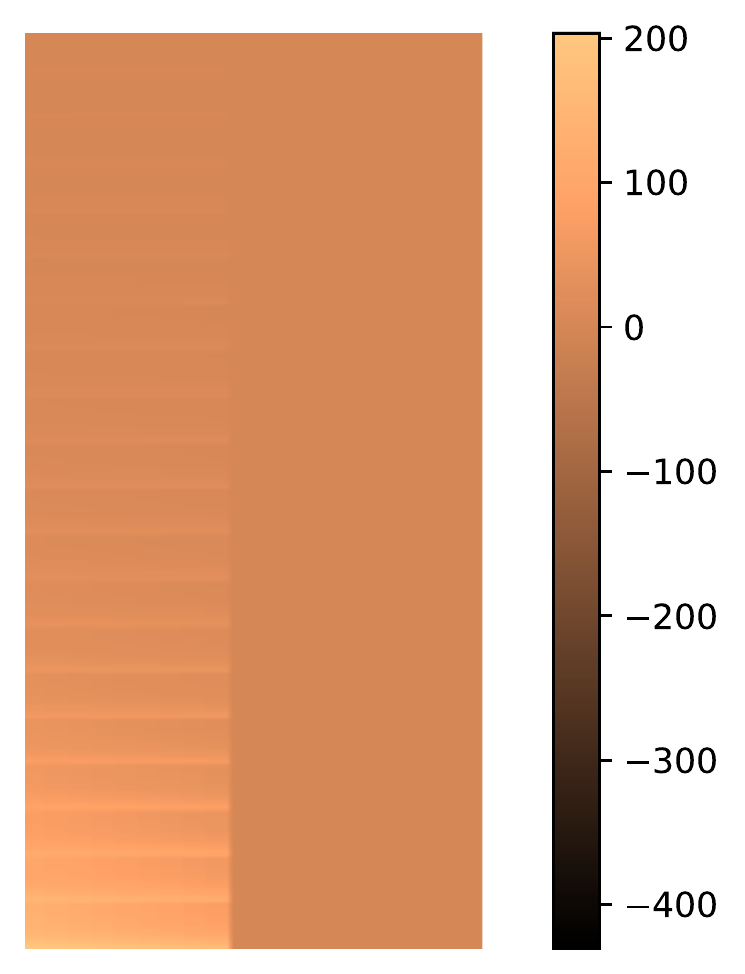}
     } &
\subfloat[$r - r_{\mathrm{true}}$]{%
       \includegraphics[width=0.16\linewidth]{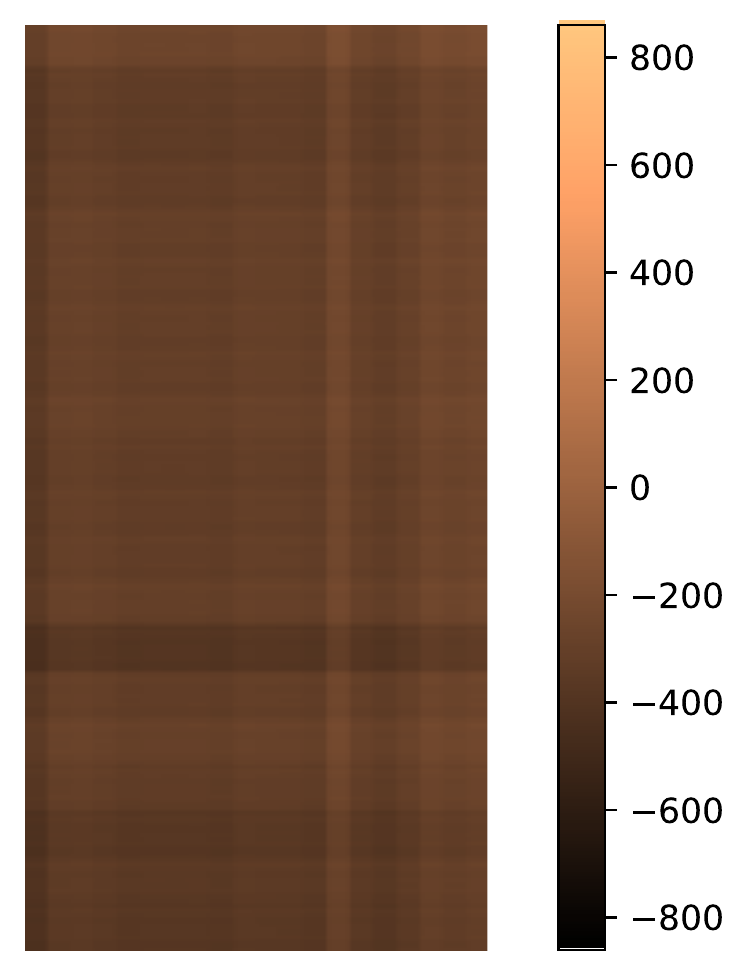}
     } &
\subfloat[$\pi^{T_{\mathrm{test}}}_{r_{\mathrm{true}}}$]{%
       \includegraphics[width=0.16\linewidth]{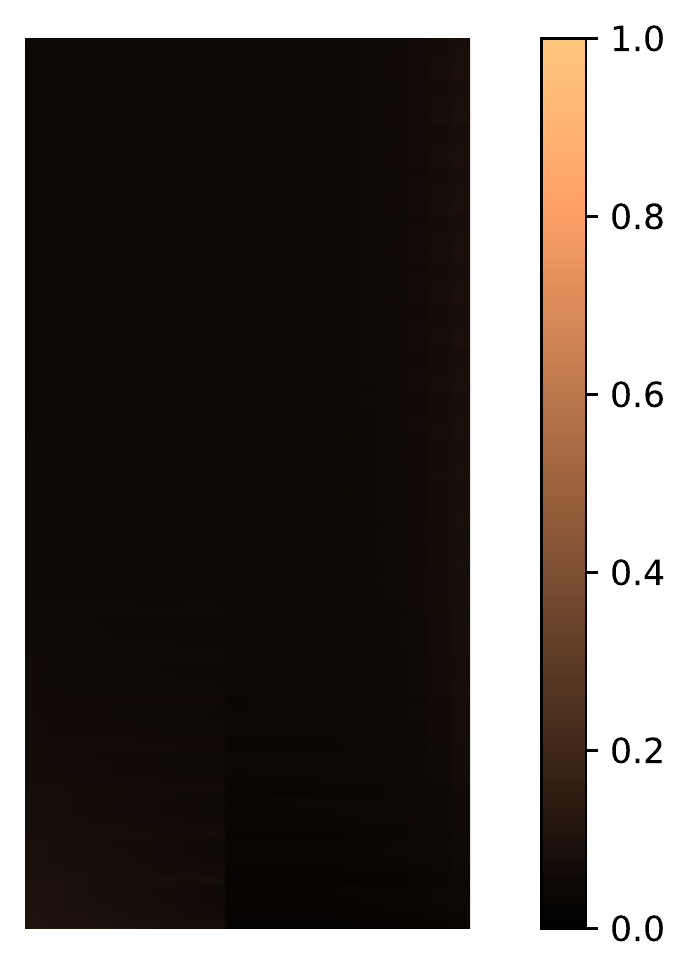}
     } &
\subfloat[$\pi^{T_{\mathrm{test}}}_{r_{\mathrm{true}}} - \pi^{T_{\mathrm{test}}}_{r}$\label{fig:strebulaev_e}]{%
       \includegraphics[width=0.16\linewidth]{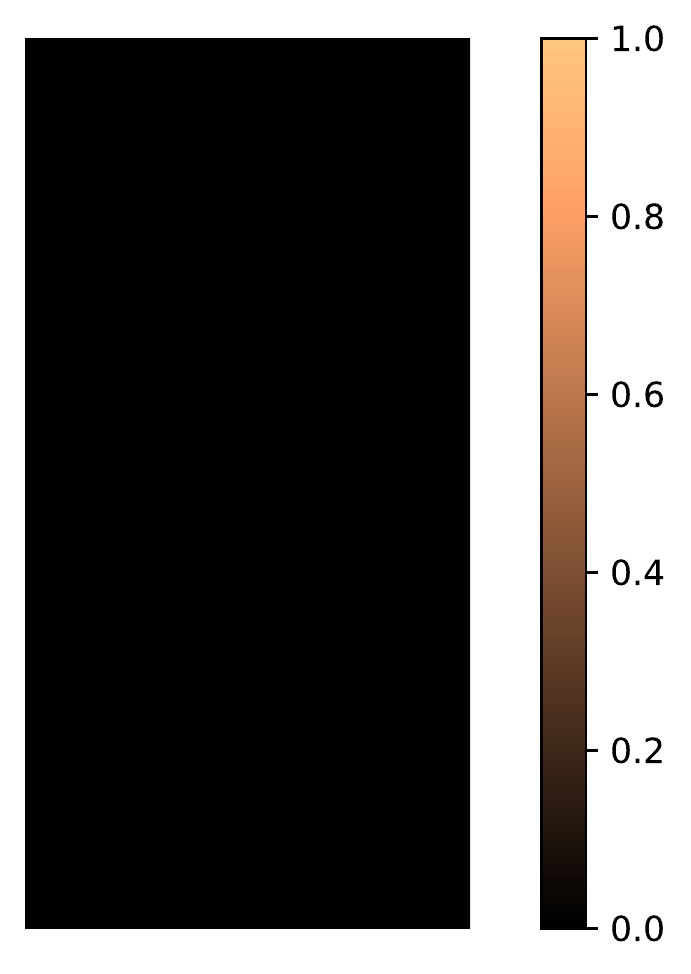}
    } \\
\end{tabular}
    \caption{Comparison between true and recovered reward in \texttt{Strebulaev-Whited} with $|\mathcal{S}|=400$ and the $20$ actions. It can be clearly noticed that the reward function is not identified (see subplots (a), (b), (c)). However, when we use the recovered reward in subplot (a) to train an optimal policy under unseen dynamics we recover the optimal policy under the true reward in subplot (b). The subplots (d) show the policies recovered from the true reward and (e) shows the difference between the policy recovered from $r_{\mathrm{true}}$ and from the recovered reward.}
\label{fig:strebulaev}
\end{figure}

\begin{figure}
    \centering
    \begin{tabular}{ccc}
\subfloat[$r$]{%
       \includegraphics[width=0.2\linewidth]{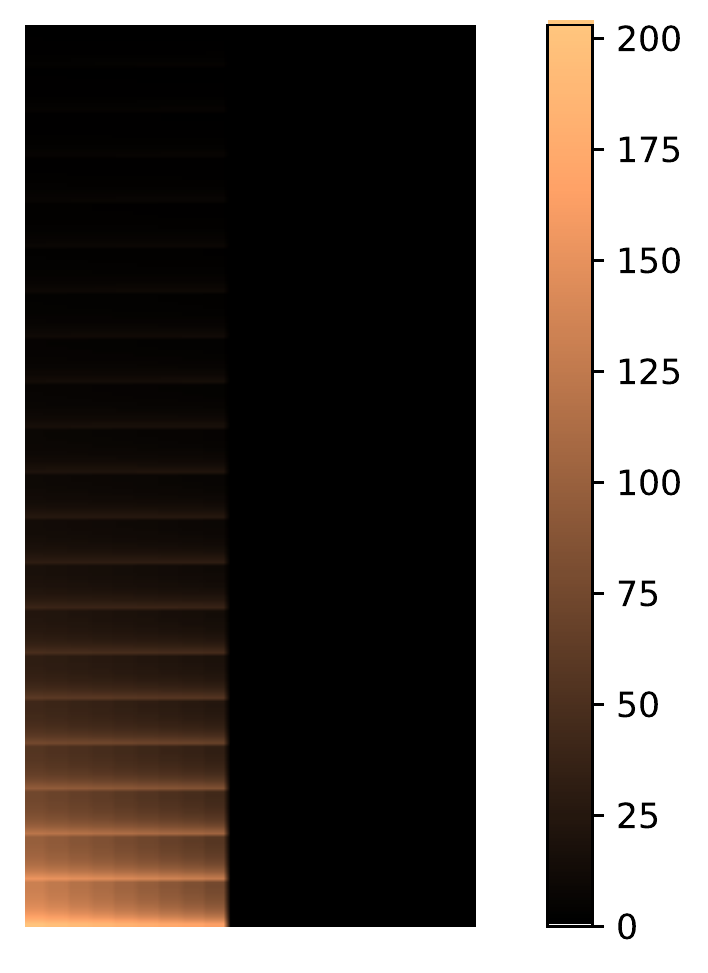}
     } &
\subfloat[ $r_\mathrm{true}$]{%
       \includegraphics[width=0.2\linewidth]{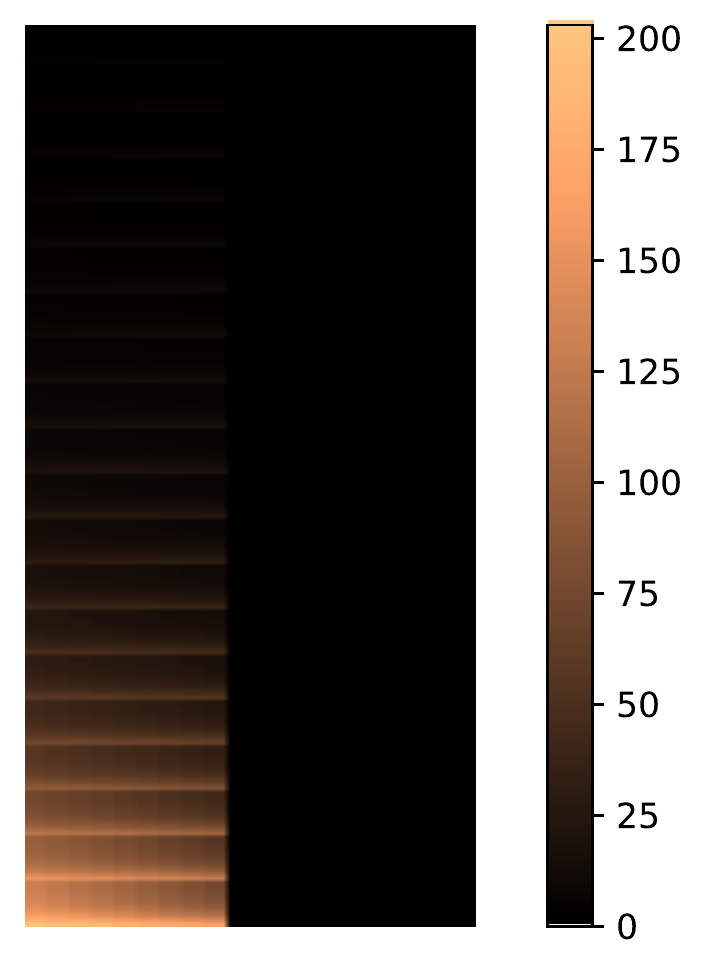}
     } &
\subfloat[$r - r_{\mathrm{true}}$]{%
       \includegraphics[width=0.2\linewidth]{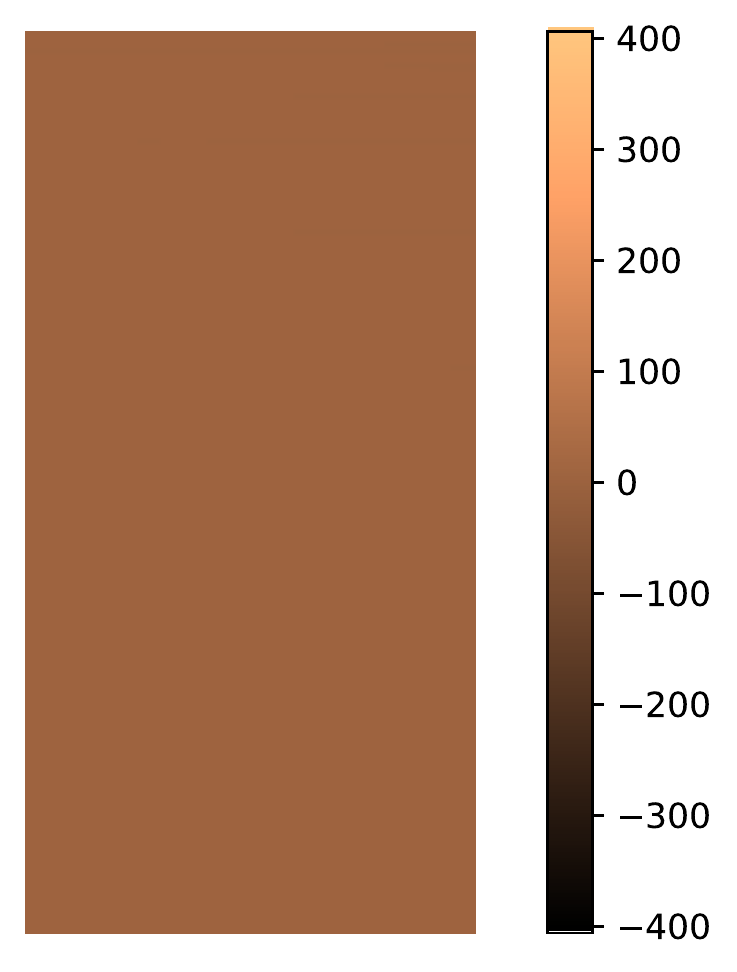}
     } \\
\end{tabular}
    \caption{Comparison between true and recovered reward in \texttt{Strebulaev-Whited} assuming additional knowledge of the features $\{f_a\}_a$. It emerges that thanks to such additional information the reward function is identifiable.}
\label{fig:lin_strebulaev}
\end{figure}
\newpage
\section{Extension to Regularized MDPs}
\label{app:regularized}
It turns out that our identifiability result is valid more generally for regularized MDPs \cite{Geist:2019} where the entropy term in equation \eqref{eq:1} is replaced by any other strongly convex differentiable function of the policy $\Omega(\pi)$.

Indeed, we can use Proposition 1 and Definition 1 in \cite{Geist:2019} to establish that for any value vector 
any $v$ and reward $r$, there exists a unique policy that satisfies
\[
\pi(a|s) = \nabla \Omega^\ast (r(s,a) + \gamma \sum_{s' \in \S} T(s'|s,a) v(s'))
\]
where $\Omega^\ast$ denotes the Fenchel conjugate of $\Omega$. By the distributivity property (iii) in Proposition 1 of [5], we can subtract a function dependent only on state in the argument without affecting the equality. This gives that for any $v$ and $r$, there exists a unique $\pi$ such that

\[
\pi(a|s) = \nabla \Omega^\ast (r(s,a) + \gamma \sum_{s' \in \S} T(s'|s,a) v(s') - v(s))
\]

Using the convexity of $\Omega$, we have that $\nabla \Omega$ is the inverse map of $\nabla \Omega^\ast$
. Hence we obtain
\[
\nabla \Omega (\pi(a|s)) = r(s,a) + \gamma \sum_{s' \in \S} T(s'|s,a) v(s') - v(s)
\]
which is the equivalent of our Theorem \ref{thm:single_expert} for general strongly convex regularizers. The only part changing is the left hand side. However, we saw in the analysis that reward identifiability was not depending on this part of the equation. When using a different regularizer, the recovered reward given observed expert policies will be different, but the identifiability condition remains the same.

This extension relax the assumption of entropy regularized experts but, unfortunately, epsilon-greedy or deterministic greedy policies would not fit this setting. Identifiability is more challenging with these kinds of experts because the knowledge of such policies only informs us with the action yielding the highest expected value, but no information about the relative difference with respect to other actions, in contrast with regularized stochastic policies.
\end{document}